\newtheorem{theorem}{Theorem}[section]
\newtheorem{proposition}{Proposition}[section]
\newtheorem{corollary}{Corollary}[section]
\newtheorem{lemma}{Lemma}[section]
\newcommand{\bg}{\boldsymbol{\gamma}}
\newcommand{\h}{\mathbf{h}}
\newcommand{\V}{\mathbf{v}}
\newcommand{\w}{\mathbf{w}}
\newcommand{\x}{\mathbf{x}}
\newcommand{\NC}{\ensuremath{\mathcal{NC}}\xspace}
\def\eqref#1{equation~\ref{#1}}
\def\1{\bm{1}}
\def\rvh{{\mathbf{h}}}
\def\rvu{{\mathbf{i}}}
\def\rvu{{\mathbf{u}}}
\def\rvv{{\mathbf{v}}}
\DeclareMathAlphabet{\mathsfit}{\encodingdefault}{\sfdefault}{m}{sl}
\SetMathAlphabet{\mathsfit}{bold}{\encodingdefault}{\sfdefault}{bx}{n}
\newcommand{\R}{\mathbb{R}}
\newif\ifdoublecol
\title{Towards Understanding Neural Collapse: The Effects of Batch Normalization and Weight Decay}
\author{%
  Leyan Pan \\
  Georgia Institute of Technology\\
  Atlanta, GA, 30332 \\
  \texttt{leyanpan@gatech.edu} \\
  \And
    Xinyuan Cao \\
  Georgia Institute of Technology\\
  Atlanta, GA, 30332 \\
  \texttt{xcao78@gatech.edu} \\
  % examples of more authors
  % \And
  % Coauthor \\
  % Affiliation \\
  % Address \\
  % \texttt{email} \\
  % \AND
  % Coauthor \\
  % Affiliation \\
  % Address \\
  % \texttt{email} \\
  % \And
  % Coauthor \\
  % Affiliation \\
  % Address \\
  % \texttt{email} \\
  % \And
  % Coauthor \\
  % Affiliation \\
  % Address \\
  % \texttt{email} \\
}
\begin{document}

\maketitle

\begin{abstract}

\begin{comment}
Neural Collapse is a recently observed geometric structure that emerges in the final layer of successful neural network classifiers. This paper investigates the effect of batch normalization and WD on the emergence of Neural Collapse. We propose the geometrically intuitive intra-class and inter-class cosine similarity measure which captures multiple core aspects of Neural Collapse. With this measure, we provide theoretical guarantees of Neural Collapse emergence with last-layer batch normalization and WD when the regularized cross-entropy loss is near optimal. Our experiments show that the Neural Collapse is most significant in models with batch normalization and high weight-decay values. Collectively, our results imply that batch normalization and WD may be fundamental factors in the emergence of Neural Collapse.
\end{comment}

Neural Collapse (\NC) is a geometric structure recently observed at the terminal phase of training deep neural networks, which states that last-layer feature vectors for the same class would "collapse" to a single point, while features of different classes become equally separated. We demonstrate that batch normalization (BN) and weight decay (WD) critically influence the emergence of \NC. In the near-optimal loss regime, we establish an asymptotic lower bound on the emergence of \NC that depends only on the WD value, training loss, and the presence of last-layer BN. Our experiments substantiate theoretical insights by showing that models demonstrate a stronger presence of \NC with BN, appropriate WD values, lower loss, and lower last-layer feature norm. Our findings offer a novel perspective in studying the role of BN and WD in shaping neural network features.
\end{abstract}
\vspace{-0.02in}
\section{Introduction}
\vspace{-0.02in}
% Over the past decade, deep learning and neural networks have revolutionized fields such as computer vision (\citet{krizhevsky2012imagenet}), natural language processing (\citet{devlin2018bert}) and scientific discovery (\citet{senior2020improved}). 
The wide application of deep learning models has raised significant interest in theoretically understanding the mechanisms underlying their success. In particular, the generalization capability of overparameterized networks continues to escape the grasp of traditional learning theory, and the quantitative roles and impacts of widely adapted training techniques including batch normalization (\textbf{BN}, \citet{bn}) and weight decay (\textbf{WD}, \citet{loshchilov2017decoupled}) remains an area of active investigation. 

A promising way of mechanistically understanding neural networks is by analyzing their feature learning process. \citet{doi:10.1073/pnas.2015509117} observed an elegant mathematical structure in well-trained neural network classifiers, termed ``Neural Collapse" (abbreviated \NC in this work,  see Figure \ref{fig:ncillu} for detailed visualization.) \NC states that after sufficient training of the neural networks: {\bf NC1 }\textit{(Variability Collapse)}: The intra-class variability of the last-layer feature vectors tends to be zero; {\bf NC2 }\textit{(Convergence to Simplex ETF)}: The mean of the class feature vectors become equal-norm and form a Simplex Equiangular Tight Frame (ETF) around the center up to re-scaling; {\bf NC3 }{\it (Self-Duality)}: The last layer weights converge to match the class mean features up to re-scaling; {\bf NC4 }{\it (Convergence to NCC)}: The last layer of the network behaves the same as ``Nearest Class Center".

\begin{figure*}[h!]
\centering
\includegraphics[width=\textwidth]{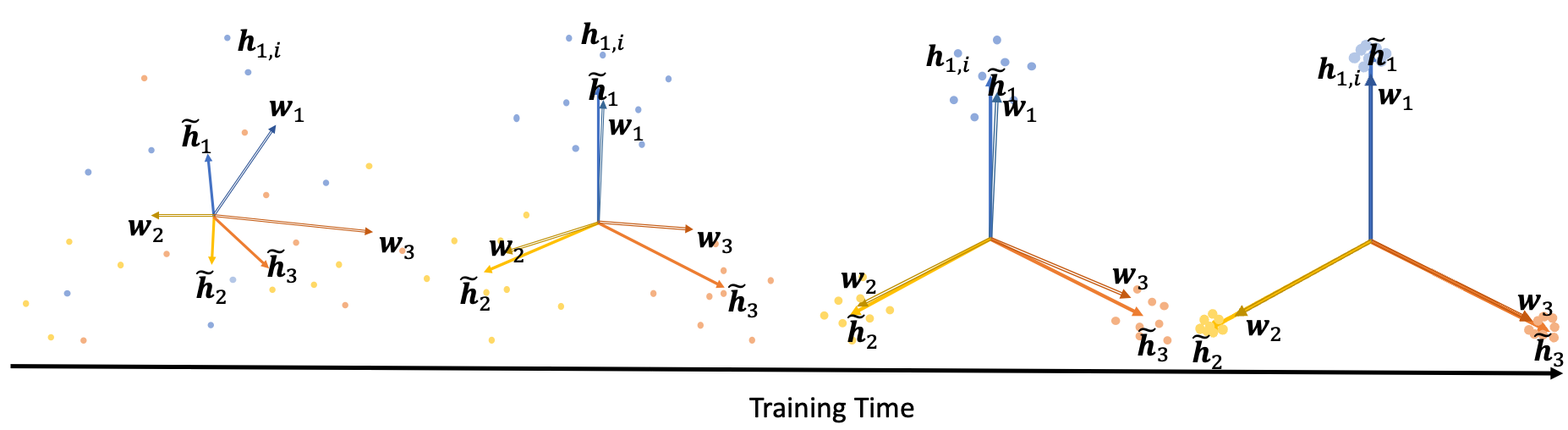}
\vspace{-0.25in}
\caption{Visualization of \NC(\citet{doi:10.1073/pnas.2015509117}). We use an example of three classes and denote the last-layer features $\mathbf{h}_{c,i}$, mean class features $\tilde{\mathbf{h}}_{c}$, and last-layer class weight vectors $\mathbf{w}_{c,i}$. Circles denote individual last-layer features, while compound and filled arrows denote class weight and mean feature vectors, respectively. As training progresses, the last-layer features of each class collapse to their corresponding class means (NC1), different class means converge to the vertices of the simplex ETF (NC2), and the class weight vector of the last-layer linear classifier approaches the corresponding class means (NC3).}
\vspace{-0.1in}
\label{fig:ncillu}
\end{figure*}

These observations reveal compelling insights into the symmetry and mathematical preferences of over-parameterized neural network classifiers. Subsequently, further work has demonstrated that \NC may play a significant role in the generalization, transfer learning (\citet{galanti2022role}), depth minimization (\citet{depthminimization}), and implicit bias of neural networks (\citet{poggio2020explicit}). 
% Additionally, insights provided by \NC have been a powerful tool in exploring the intermediate layers of neural network classifiers and representations learned by self-supervised learning models (\citet{benshaul2023reverse,benshaul2022nearest}).

% Despite these advancements, a crucial aspect remains less understood in the theoretical landscape: the quantitative roles and impacts of fundamental training techniques, such as WD (\citet{loshchilov2017decoupled}) and BN (\citet{bn}). While these methods are extensively employed owing to their practical benefits in enhancing network performance and stability, their theoretical underpinnings have not been thoroughly explored. 

Our paper is motivated by the following two questions:
\ifdoublecol
\begin{mdframed}
\begin{itemize}
    \item What is a minimal set of conditions that could quantitatively guarantee the emergence of \NC?

    \item Can \NC provide new insight into understanding widely used training techniques, such as batch normalization and weight decay?
\end{itemize}
\end{mdframed}
\else
%\begin{mdframed}
\begin{enumerate}
\setlength\itemsep{0.02em}
    \vspace{0.02in}
    \item What is a minimal set of conditions that would guarantee the emergence of \NC?
    \vspace{0.02in}

    \item Can \NC provide new insight into understanding some widely used training techniques, such as batch normalization and weight decay?
\end{enumerate}
%\end{mdframed}
\fi

% Our work addresses these questions through an integration of theoretical analysis and empirical evidence. 
% We first establish a set of conditions that asymptotically guarantee the proximity to the \NC structure, including BN, the WD parameter, and training loss.
% % The theorems consequently provide us with insights into the factors influencing \NC, 
% The influence of these factors is then substantiated through extensive experimental investigations in both synthetic and real-world data, demonstrating that \NC have 
% Consequently, our results illuminate new perspectives on understanding \NC as well as the effectiveness of WD and BN.

\subsection{Main Results}

% Our theoretical results are built upon the layer-peeled model from prior research (\citet{NCMSEUnconstrained}), which considers only the last-layer features and ``peels off'' all other layers. One fundamental rationale is that modern deep networks are often highly overparameterized, enabling them to learn any representation and allowing last-layer features to approximate or interpolate any point in the feature space. 

We consider deep neural networks trained using cross-entropy (CE) loss on a balanced dataset. Our asymptotic theoretical analysis shows that last layer \textit{batch normalization, weight decay, and near-optimal cross-entropy loss} constitutes sufficient conditions for several core properties of \NC. Furthermore, the presence of \NC becomes more evident with a larger WD parameter (up to a limit) and smaller loss under the presence of BN, which is substantiated by extensive experiments that demonstrate improving \NC measures with lowering loss, increasing weight decay parameter, and decreasing last-layer feature norm.
% The accurate bounds for near-optimal CE loss are challenging due to the absence of a closed-form solution for CE loss.

% Consider a $C$-class classification task with $N$ samples for each task $c$. We denote $\mathbf{h}_{c,i}$ as the last-layer feature corresponding to sample $i$ of class $c$. Let $s$

To emphasize the geometric intuition of \NC, we use cosine similarity to measure the proximity to the \NC structure. Specifically, NC1 implies that the feature vectors in each class $c$ collapse to the same vector and achieve average feature cosine similarity of features from the same class $\text{intra}_c = 1$. NC2 implies that the class feature means achieves the maximal angle configuration, and thus the inter-class feature cosine similarity for any two classes $c,c'$ satisfies $\text{inter}_{c,c'} = -\frac{1}{C-1}$ (a property of the simplex ETF structure). Our main theorem states that, in the near-optimal regime, the intra-class and inter-class cosine similarity measures of batch-normalized models, which demonstrate the feature vectors' proximity to the \NC structure, can be quantitatively bounded by a function of the weight decay parameter $\lambda$ and loss value $\epsilon$ (with the class number $C$ constant when given target task).

\begin{theorem}[Informal version of Theorem~\ref{thm:main_cor}]
\label{thm:informal_main}
    For the layer-peeled classification model of $C$ classes with weight decay parameter $\lambda$ and cross-entropy training loss within $\epsilon$ of the optimal loss, the following holds for most classes/pairs of classes:
    % \vspace{-0.12in}
    \begin{enumerate}
        \item \textit{(NC1)} The average intra-class feature cosine similarity of class $c$:
        % \mathop{Avg}\limits_{i,j}\{\cos_{\angle}(\mathbf{h}_{c,i}, \mathbf{h}_{c,j})\}
        $$\text{intra}_c\geq 1-O\left((C/\lambda)^{O(C)}\sqrt{\epsilon}\right),$$
        %\vspace{-0.25in}
        \item \textit{(NC2)} The average inter-class feature cosine similarity of the class pair $c, c'$:
        $$\text{inter}_{c,c'}\leq -\frac{1}{C-1} +O\left((C/\lambda)^{O(C)}\epsilon^{1/6}\right).$$
    \end{enumerate}
\end{theorem}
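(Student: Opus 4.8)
The plan is to argue in three stages: first identify the global minimizer and optimal value $\mathcal{L}^\star(\lambda,C)$ of the batch-normalized, weight-decayed cross-entropy objective in the layer-peeled model; then prove a quantitative sharpness estimate showing that the excess loss $\mathcal{L}-\mathcal{L}^\star$ lower-bounds explicit measures of deviation from the \NC configuration; and finally invert that estimate under the near-optimality hypothesis and translate the resulting variance bounds into the cosine quantities $\text{intra}_c$ and $\text{inter}_{c,c'}$. For the first stage, the BN constraint fixes the total second moment of the features (unit per-coordinate variance across the balanced dataset), ruling out the degenerate ``scale-to-infinity'' minimizer of bare CE and making the optimal feature and weight norms finite functions of $\lambda$ and $C$; these norms are what ultimately generate the $(C/\lambda)^{O(C)}$ prefactors. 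Exploiting the permutation symmetry of the balanced problem together with convexity of CE in the logits, I would show the minimizer collapses each class ($\mathbf{h}_{c,i}=\tilde{\mathbf{h}}_c$), aligns the weights with the class means, and places the normalized means at the vertices of a simplex ETF, $\langle\tilde{\mathbf{h}}_c,\tilde{\mathbf{h}}_{c'}\rangle\propto -1/(C-1)$.

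The crux is the sharpness estimate, which I would split by direction. For NC1, intra-class spread has two sources. Within the span of the weight vectors it is directly controlled by strong convexity of the softmax loss on the (norm-bounded) logit region: fixing a class $c$ and applying Jensen to its per-example losses shows that replacing features by their mean cannot increase the loss, and the quadratic gap then bounds the within-class logit variance, hence the in-span feature variance, by $O(\epsilon)$ up to scale factors. Orthogonal to the span the loss is blind to feature variation, so I would instead use the BN norm budget: a near-optimal feature must place almost all of its fixed norm inside the span, and because the loss depends on the in-span norm only to first order, an $\epsilon$ loss gap certifies that norm only to within $\sqrt\epsilon$, leaving orthogonal squared-spread of order $\sqrt\epsilon$. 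This orthogonal term dominates, and after normalizing to cosine similarity gives $1-\text{intra}_c = O((C/\lambda)^{O(C)}\sqrt\epsilon)$.

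For NC2 I would first use the collapse bound to justify working with the class-mean Gram matrix as the effective variable, then invoke uniqueness: the symmetric decay-plus-CE energy on mean configurations is minimized only by the simplex ETF, and a second-order expansion around it yields excess energy $\gtrsim \lVert G - G_{\mathrm{ETF}}\rVert^2$. Chaining near-optimality through the Gram error and then through normalization to the inner products $\text{inter}_{c,c'}$ — while absorbing the $\sqrt\epsilon$ collapse error from NC1, which is needed even to define the means — composes several root-losing inversions and produces the weaker $\epsilon^{1/6}$ rate. Finally, each estimate is proved in an averaged form over classes (resp.\ pairs); a Markov argument converts the average into the ``for most classes/pairs'' statement, and substituting $\mathcal{L}-\mathcal{L}^\star\le\epsilon$ completes both bounds.

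The main obstacle is the second-stage sharpness estimate under the BN constraint, for two reasons. First, the orthogonal/in-span coupling must be made quantitative: I must show that near-optimal features genuinely exhaust their norm budget inside the span, since it is this coupling — not convexity — that controls the dominant orthogonal spread and fixes the $\sqrt\epsilon$ exponent. Second, the strong-convexity constant of the softmax degrades as the optimal logits grow with $C$ and $1/\lambda$, and tracking this degradation carefully is exactly what forces the $(C/\lambda)^{O(C)}$ factors; keeping these constants explicit through the ETF-uniqueness perturbation needed for NC2 is where I expect the analysis to be most delicate and the final exponent $\epsilon^{1/6}$ to be loosest.
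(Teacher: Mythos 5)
Your overall skeleton is sound and you correctly locate the two quantitative sources of the bound: the WD--BN tradeoff pins the product of feature and weight norms at $\alpha\beta=O(\log(C/\lambda))$ (this is exactly how the paper converts its norm-dependent bound into the $(C/\lambda)^{O(C)}$ prefactor, via the fact that BN fixes the quadratic mean of the feature norms to $\|\boldsymbol{\gamma}\|$), and the degradation of the strong-convexity constant of the CE loss at logit scale $\alpha\beta$ supplies the exponential-in-$C\alpha\beta$ factor. Your NC1 argument --- in-span variance controlled by strong convexity, orthogonal variance controlled by exhaustion of the BN norm budget, Markov over classes --- is essentially a reorganization of what the paper does (its Lemma~\ref{avgintralemma} is precisely the statement that a class whose mean feature nearly saturates the norm budget in the weight direction must have tightly clustered normalized features), though note one internal slip: a genuinely first-order dependence of the loss on the in-span norm would certify that norm to within $O(\epsilon)$, not $\sqrt{\epsilon}$; the $\sqrt{\epsilon}$ arises because the first-order terms cancel at the regularized optimum and the objective is locally quadratic in $\gamma=\alpha\beta$ (Proposition~\ref{funcprop}).

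The genuine gap is in your NC2 step. You propose to characterize the global minimizer as the simplex ETF, then obtain a sharpness estimate from ``a second-order expansion around it'' yielding excess energy $\gtrsim\|G-G_{\mathrm{ETF}}\|^2$. A second-order expansion only gives such an inequality in a neighborhood of the ETF manifold; to conclude anything from $\mathcal{L}-\mathcal{L}^\star\le\epsilon$ you need a \emph{global} quadratic-growth (error-bound) inequality --- there could a priori be configurations far from the ETF (modulo rotations and the $W$--$H$ gauge) with arbitrarily small excess energy, and ruling this out is not sketched and is the hard part. Worse, the relevant strong-convexity constant is $e^{-\Theta(C\alpha\beta)}$, so even a correct global version must be tracked through the Gram-matrix parametrization and the normalization step without losing more than the claimed $(C/\lambda)^{O(C)}$; your outline gives no mechanism for this. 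The paper avoids characterizing the minimizer altogether: it lower-bounds the loss by a chain of Jensen inequalities, assigns a gap to each link, and uses two subset-mean lemmas --- Lemma~\ref{strongjensensub} for strongly convex functions and the sharper Lemma~\ref{explemma} for $e^x$, whose deviation bound depends on $\exp(\tilde{x})$ rather than on a worst-case convexity modulus over a range --- to conclude that for most $c'$ the cross-logit $(\mathbf{w}_{c'}-\mathbf{w}_c)\tilde{\mathbf{h}}_c$ is close to its forced average of $-\frac{1}{C-1}\dot{\mathbf{w}}_c\tilde{\mathbf{h}}_c$. That device is what delivers the inter-class bound globally, and your proposal is missing an equivalent of it; the $\epsilon^{1/6}$ then comes not from chained Gram inversions but from a Chebyshev argument on the feature-norm distribution (Lemma~\ref{interdivide}) composed with the $\sqrt{\epsilon}$ norm certification. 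I would recommend replacing your ETF-uniqueness-plus-perturbation route for NC2 with a direct control of the subset averages of the cross-class logits.
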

% \vspace{-0.1in}

We complement the theoretical findings with experiments on both synthetic and real datasets to investigate the factors that influence \NC. As expected, we observe that BN, increased WD, and reduced training loss contributes to the occurrence of \NC.
% Notably by \citet{galanti2022role}, this phenomenon is also associated with enhanced generalization performance.

Our main contributions can be summarized as follows:
% \vspace{-0.1in}
\begin{itemize}
    % \item \textbf{Cosine Similarity Metric for \NC with Worst Case Analysis. } We quantitatively study the intra-class and inter-class cosine similarity as a measure for \NC, which emphasizes its geometric intuition. Furthermore, we focus on the worst class \NC measure, uncovering insights that the average case analysis in prior work does not readily reveal.
    % (Section \ref{cossimm})
    
    \item \textbf{\NC Proximity Bound under Near-optimal Loss with Cosine Similarity Measure and Worst Case Analysis.} 
    By adopting the geometrically intuitive cosine similarity measure, we prove quantitative \NC bounds in the \textit{near-optimal regime}, which avoids less realistic assumptions of achieving exact optimal loss. Furthermore, we focus on the worst class \NC measure, uncovering insights that the global average analysis in prior work does not readily reveal.

    \item \textbf{Role of Weight Decay and Batch Normalization.} We offer a novel viewpoint for understanding the roles of WD and BN through the lens of \NC as a catalyst for learning more compact features for the same class. Theoretically, we demonstrate that BN and large WD lead to better guarantees of \NC by regularizing the norms of feature and weight matrices.
    Empirically, our findings further verify that \NC is most significant with BN and high WD values. %  (Section \ref{expt})

    % \item \textbf{Synthetic Conic Hull Dataset.} We have developed a multi-class synthetic dataset founded on the concept of conic hulls, ensuring distinct separation among classes. Experiments conducted on this dataset align well with our theoretical results. The clear separability of classes in this dataset not only validates our findings but also holds the potential to contribute significantly to further study in deep learning theory. (Section \ref{setup_syn})

\end{itemize}

\subsection{Related Work}

\par \textbf{Neural Collapse.} Our work closely relates to recent studies that analyze \NC utilizing the layer-peeled model or unconstrained feature model (\citet{unconstrained}). Following this model, several works have demonstrated that solutions satisfying \NC are the only global optimizers when trained using either CE (\citet{ji2022unconstrained, zhu2021geometric, LU2022224}) or Mean Squared Error (MSE) loss 
(\citet{centralpath, zhou2022optimization}). Our work goes beyond the global optimizer by quantitatively analyzing \NC in the near-optimal regime, and consequently studying the factors that affect \NC. 

Another line of work focuses on analyzing the training dynamics and optimization landscape using the unconstrained feature model (UFM) (\citet{unconstrained, zhu2021geometric, ji2022unconstrained, centralpath, normgeometric}). These works establish that, under both CE and MSE loss, the UFM presents a benign global optimization landscape. As a result, following gradient flow or first-order optimization methods tend to yield solutions that fulfill \NC. However, the simplification inherent in the UFM introduces a significant disparity between theory and reality. Specifically, optimizing weights in the earlier layers of a network can lead to outcomes markedly different from those achieved by direct optimization of the last-layer features. In contrast, our findings are \textit{optimization-agnostic} and applicable when direct optimization of the last-layer features is unfeasible.

Due to the space limit, we cannot accommodate all related works in understanding \NC and refer readers to \citep{survey} and appendix Table \ref{appendix_table_comparison} for a more comprehensive survey and comparison with our work.

\par \textbf{Weight Decay.}
The concept of WD or $\ell_2$ regularization originates from early research in the stability of inverse problems (\citet{tikhonov1943stability}), and has since been extensively investigated in the field of statistics (\citet{hoerl1970ridge}). In the context of neural networks, WD serves as a constraint of the network capacity (\citet{goodfellow2016deep}). Several studies have demonstrated that WD enhances the model generalization by suppressing irrelevant weight vector components and diminishing static noise in the targets (\citet{krogh1991simple,shalev2014understanding}). Additionally, various studies regard WD as a mechanism that favorably affects optimization dynamics. Several works contribute to the success of WD in changing the effective learning rate (\citet{van2017l2,li2020understanding,li2020reconciling}). 
\citet{andriushchenko2023we} demonstrates that WD improves the balance in the bias-variance optimization tradeoff, which leads to lower training loss.

\par \textbf{Batch Normalization.}
BN was first introduced by \citet{bn} to address the issue of internal covariate shift in deep neural networks. \citet{liao2016importance} argues that BN mitigates the ill-conditioning problem as the network depth increases. \citet{luo2018towards} decomposes BN intro population normalization and an explicit regularization. Numerous empirical studies have demonstrated BN's positive effects on the optimization landscape
through large-scale experiments (\citet{bjorck2018understanding, santurkar2018does,kohler2019exponential}). \citet{yang2019mean} shows that BN regularizes the gradients and improves the optimization landscape using mean field theory. More recently,
\citet{balestriero2022batch} explores BN from the perspective of function approximation, arguing that BN adapts the geometry of network's spline partition to match the data.

\section{Theoretical Results}
\label{sec:theory}
\subsection{Problem Setup and Notations}

\paragraph{Neural Network with Cross-Entropy (CE) Loss.} In this work, we consider neural network classifiers without bias terms trained using CE loss on a balanced dataset. A vanilla deep neural network classifier is composed of a feature representation function $\boldsymbol{h}^{(L)}(\boldsymbol{x})$ and a linear classifier parameterized by $\mathbf{W}^{(L)}$. Specifically, an $L$-layer vanilla deep neural network can be mathematically formulated as:
\ifdoublecol
\begin{align*}
f(\boldsymbol{x};\boldsymbol{\theta})&=\boldsymbol{W}^{(L)}\boldsymbol{h}^{(L)}\\
\boldsymbol{h}^{(i)}&=\sigma\left(\boldsymbol{W}^{(i-1)} \boldsymbol{h}^{(i-1)}\right), \forall i\leq L
\end{align*}
For the rest of the paper, we use $\mathbf{W}$ as an abbreviation of $\mathbf{W}^{(L)}$ and $\mathbf{h}$ as an abbreviation of $\mathbf{h}^{(L)}$.
\else
$$f(\boldsymbol{x};\boldsymbol{\theta})=\underbrace{\boldsymbol{W}^{(L)}}_{\text {Last layer weight $\mathbf{W}=\mathbf{W}^{(L)}$}} \underbrace{BN\left(\sigma\left(\boldsymbol{W}^{(L-1)} \cdots \sigma\left(\boldsymbol{W}^{(1)} \boldsymbol{x}+\boldsymbol{b}^{(1)}\right)+\dots+\boldsymbol{b}^{(L-1)}\right)\right)}_{\text {last-layer feature } \boldsymbol{h}=\phi_{\boldsymbol{\theta}}(\boldsymbol{x})}.$$
\fi

Each layer is composed of an affine transformation parameterized by weight matrix $\boldsymbol{W}^{(l)}$ followed by a non-linear activation $\sigma$ such as $\text{ReLU}(x)=\max\{x,0\}$ and BN.

The network is trained by minimizing the empirical risk over all samples $\left\{\left(\boldsymbol{x}_{c, i}, \boldsymbol{y}_c\right)\right\}, c\in [C], i\in [N]$ where each class contains $N$ samples and $\boldsymbol{y}_c$ is the one-hot encoded label vector for class $c$. We also denote $\h_{c,i}=\boldsymbol{h}(\boldsymbol{x}_{c,i})$ as the last-layer feature corresponding to $\boldsymbol{x}_{c,i}$. The training process minimizes the average CE loss 
\ifdoublecol
\begin{align*}\mathcal{L}&=\frac{1}{CN}\sum_{c=1}^C \sum_{i=1}^{N} \mathcal{L}_{\mathrm{CE}}\left(f(\boldsymbol{x}_{c,i};\boldsymbol{\theta}), \boldsymbol{y}_c\right)\\&=\frac{1}{CN}\sum_{c=1}^C \sum_{i=1}^{N} \mathcal{L}_{\mathrm{CE}}\left(\boldsymbol{Wh}_{c,i}, \boldsymbol{y}_c\right),
\end{align*}
\else
$$\mathcal{L}=\frac{1}{CN}\sum_{c=1}^C \sum_{i=1}^{N} \mathcal{L}_{\mathrm{CE}}\left(f(\boldsymbol{x}_{c,i};\boldsymbol{\theta}), \boldsymbol{y}_c\right)=\frac{1}{CN}\sum_{c=1}^C \sum_{i=1}^{N} \mathcal{L}_{\mathrm{CE}}\left(\boldsymbol{Wh}_{c,i}, \boldsymbol{y}_c\right),$$
\fi

where the cross entropy loss function for a one-hot encoding $\boldsymbol{y}_c$ is:
$$\mathcal{L}_{\mathrm{CE}}(\boldsymbol{z}, \boldsymbol{y}_c)=-\log\left(\frac{\exp(z^{(c)})}{\sum_{c'=1}^{C}\exp(z^{(c')})}\right).$$
\begin{comment}
We also denote that total loss over the entire training set as $\mathcal{L}=\frac{1}{CN}\sum_{c=1}^C\sum_{i=1}^NL_{c,i}$. During the training process, the training samples are typically organized into mini-batches of size $B$. At each step, the weights of the model are updated using gradient descent with respect to the average batch loss $\mathcal{L}(\theta)=-\frac{1}{B}\sum_{i=1}^{B}L_i$.

We also assume that the training dataset contains $C$ classes with $N$ samples from each class, which correspond to the case of a balanced training set. Let $\mathbf{x}_{c,i}$ denote the input sample with label class $c$ and index $i$ within the class, and $\mathbf{h}^{(l)}_{c,i}$ denote its corresponding features in the network.
\end{comment}

\paragraph{Batch Normalization and Weight Decay.} For a given batch of vectors $\{\V_1, \V_2, \cdots, \V_b\}\subset \R^d$, let $v^{(k)}$ denote the $k$'th element of $\V$. BN developed by \citet{bn} performs the following operation along each dimension $k\in[d]$:
$$BN(\V_i)^{(k)}=\frac{v_i^{(k)}-\mu^{(k)}}{\sigma^{(k)}}\times \gamma^{(k)}+b^{(k)}.$$
Where $\mu^{(k)}$ and $(\sigma^{(k)})^2$ are the mean and variance along the $k$'th dimension of all vectors in the batch. The vectors $\boldsymbol{\gamma}$ and $\boldsymbol{b}$ are trainable parameters that represent the desired variance and mean after BN. In our work, we consider BN layers without bias (i.e. $\boldsymbol{b}=0$).

WD is a technique in deep learning training that regularizes neural network weights. Specifically, the Frobenius norm of each weight matrix $\boldsymbol{W}^{(l)}$ and BN weight vector $\boldsymbol{\gamma}^{(l)}$ is added as a penalty term to the final loss. Thus, the regularized loss function with WD parameter $\lambda$ is
\begin{align}
\mathcal{L}_{\mathrm{reg}}=\mathcal{L}+\frac{\lambda}{2}\sum_{l=1}^L(\|\boldsymbol{\gamma}^{(l)}\|^2+\|\mathbf{W}^{(l)}\|_F^2),\label{loss_form}
\end{align}
 We consider the simplified layer-peeled model that only applies WD regularization to the network's final linear and BN layer. Under this setting, the regularized loss is:
\begin{align}
\mathcal{L}_{\mathrm{reg}}=\mathcal{L}+\frac{\lambda}{2}(\|\boldsymbol{\gamma}\|^2+\|\mathbf{W}\|_F^2),\label{loss_pelled}
\end{align}
where $\mathbf{W}$ is the last layer weight matrix and $\boldsymbol{\gamma}$ is the weight of the BN layer before the final linear transformation.

\subsection{Cosine Similarity Measure of Neural Collapse}
\label{cossimm}
Numerous measures of NC have been used in past literature, including within-class covariance (\citet{doi:10.1073/pnas.2015509117}), signal-to-noise (SNR) ratio (\citet{centralpath}), as well as class distance normalized variance (CDNV, \citet{galanti2022role}).  In this work, we focus on the cosine similarity measure (\citet{Kornblith2020WhyDB}) of \NC, which emphasizes simplicity and geometric interpretability at the cost of discarding norm information. Cosine similarity is widely used as a measure between features of different samples in both practical feature learning and machine learning theory.

The average intra-class cosine similarity of class $c$ is defined as:
$$\mathit{intra}_c=\frac{1}{N^2}\sum_{i=1}^N\sum_{j=1}^N \cos_{\angle}(\mathbf{h}_{c,i}-\tilde{\mathbf{h}}_G, \mathbf{h}_{c,j}-\tilde{\mathbf{h}}_G),$$
where
$$\cos_{\angle}(\mathbf{x}, \mathbf{y})=\frac{\mathbf{x}^\top\mathbf{y}}{\|\mathbf{x}\|\cdot \|\mathbf{y}\|}, \quad \tilde{\mathbf{h}}_G=\mathop{\text{Avg}}\limits_{c,i}\{\mathbf{h}_{c,i}\}.$$
Similarity, the inter-class cosine similarity between two classes $c,c'$ is defined as:
$$inter_{c,c'}=\frac{1}{N^2}\sum_{i=1}^N\sum_{j=1}^N \cos_{\angle}(\mathbf{h}_{c,i}-\tilde{\mathbf{h}}_G, \mathbf{h}_{c',j}-\tilde{\mathbf{h}}_G)$$
In our theoretical analysis, we consider batch normalized last layer features without the bias term, and thus the global mean $\tilde{\mathbf{h}}_G$ is guaranteed to be zero and thus can be discarded.
\paragraph{Relationship with \NC.} While cosine similarity does not measure vector norms, it can describe {\it necessary} conditions for the core observations of \NC as follows:
\begin{enumerate}
    \renewcommand{\labelenumi}{(NC\arabic{enumi})}
    \item {\it (Variability Collapse)} All features in the same class collapse to the class mean and must achieve an intra-class cosine similarity $intra_c\rightarrow 1$.
    \item {\it (Convergence to Simplex ETF)} Class means converge to the vertices of a simplex ETF, which implies that $inter_{c,c'}\rightarrow -\frac{1}{C-1}$.
    \item {\it (Convergence to Self-Duality)} Centered class weights $\dot{\mathbf{w}}_c$ and their corresponding features $\tilde{\mathbf{h}}_c$ converge to each other up to rescaling, i.e., $\cos_{\angle}(\dot{\mathbf{w}}_c, \tilde{\mathbf{h}}_c)\rightarrow 1$.
\end{enumerate}   
As \citet{doi:10.1073/pnas.2015509117} has shown that NC4 is a corollary of NC1-3, we will also mainly focus on NC1-3.
% \par We now present our theoretical results for \NC in neural network classifiers with near-optimal training cross-entropy loss. Before presenting our core theoretical result on BN and WD, we first present a more general preliminary theorem that provides theoretical bounds for the intra-class and inter-class cosine similarity for any classifier with near-optimal (unregularized) average cross-entropy loss. 

\subsection{Main Results}
Before presenting our main theorem (Theorem~\ref{thm:informal_main}) on BN and WD, we first present a more general preliminary theorem that provides theoretical bounds for the intra-class and inter-class cosine similarity for any classifier with near-optimal (unregularized) CE loss. 
Our first theorem states that if the average last-layer feature norm and the last-layer weight matrix norm are both {\it bounded}, then achieving {\it near-optimal loss} implies that {\it most classes} have intra-class cosine similarity near one and {\it most pairs of classes} have inter-class cosine similarity near $-\frac{1}{C-1}$. % The non-asymptotic version of the theorem is in Appendix (Theorem~\ref{main2}).
\begin{theorem}[\NC proximity guarantee with bounded norms]
    \label{main}
    For any neural network classifier without bias trained on a dataset with the number of classes $C\geq 3$, samples per class $N\geq 1$, and the last layer feature dimension $d\geq C$. Under the following assumptions:
    \begin{enumerate}
        \item The quadratic average of the last-layer feature norms $\sqrt{\frac{1}{CN} \sum_{c=1}^C\sum_{i=1}^N \|\mathbf{h}_{c,i}\|^2}\leq \alpha$.
        \item The Frobenius norm of the last-layer weight $\|\mathbf{W}\|_F\leq \sqrt{C}\beta$.
        \item The average cross-entropy loss over all samples $\mathcal{L}\leq m+\epsilon$ for small $\epsilon>0$.
    \end{enumerate}
    Here $m=\log(1+(C-1)\exp(-\frac{C}{C-1}\alpha\beta))$ is the minimum achievable loss under the norm constraints.
    Then for at least $1-\delta$ fraction of all classes, with $\frac{\epsilon}{\delta}\ll1$, there is
    $$\text{intra}_c\geq 1-O\left(\frac{e^{O(C\alpha\beta)}}{\alpha\beta}\sqrt{\frac{\epsilon}{\delta}}\right),$$
    $$\cos_{\angle}(\dot{\mathbf{w}}_c, \tilde{\mathbf{h}}_c)\geq 1-O(e^{O(C\alpha\beta)}\sqrt{\frac{\epsilon}{\delta}}),$$
    and for at least $1-\delta$ fraction of all pairs of classes $c,c'$, with $\frac{\epsilon}{\delta}\ll1$, there is
    $$\text{inter}_{c,c'}\leq -\frac{1}{C-1} +O\left(\frac{e^{O(C\alpha\beta)}}{\alpha\beta}(\frac{\epsilon}{\delta})^{1/6}\right).$$
\end{theorem}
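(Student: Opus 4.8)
The plan is to prove the theorem in three movements: (i) characterize the constrained minimizer of the average loss as the self-dual simplex ETF and verify that its value is exactly $m$; (ii) show that the excess loss $\mathcal{L}-m$ upper-bounds, up to explicit constants, a collection of nonnegative ``defect'' quantities measuring deviation from that minimizer; and (iii) translate small defects into the claimed cosine-similarity bounds. First I would fix a class $c$ and write its loss as $\log\left(1+\sum_{c'\neq c}\exp\big((w_{c'}-w_c)^\top h_{c,i}\big)\right)$, then apply Jensen's inequality twice: once over the samples $i$ within the class, whose gap is the within-class dispersion of the logits, and once over the $C-1$ competing classes via convexity of $\exp$, reducing the problem to the averaged margins. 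Combined with the norm budgets $\frac{1}{CN}\sum\|h_{c,i}\|^2\le\alpha^2$ and $\|W\|_F^2\le C\beta^2$ and a Cauchy--Schwarz bound on the inner products, this recovers $m=\log\!\big(1+(C-1)e^{-\frac{C}{C-1}\alpha\beta}\big)$ as a lower bound, with equality forcing feature collapse, equal norms at budget, and the self-dual simplex-ETF geometry (which fits in feature space since $d\ge C$).

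The quantitative core is a \emph{sharpness} (quadratic-growth) estimate: near the ETF optimum the loss grows at least quadratically in the defects, so that $\mathcal{L}\le m+\epsilon$ pins the \emph{average} defect to $O(\epsilon)$. The delicate point is the size of the sharpness constant: the Hessian of the log-sum-exp at the maximal-margin configuration is exponentially small in the margin $\alpha\beta$, so inverting it to bound the defects introduces the factor $e^{O(C\alpha\beta)}$ appearing in every conclusion. I would then pass from the averaged statement to individual classes by a Markov-type argument: since the excess loss is an average over the $C$ classes (respectively the $\binom{C}{2}$ pairs) of nonnegative contributions summing to $O(\epsilon)$, at least a $1-\delta$ fraction of classes (pairs) have individual contribution $O(\epsilon/\delta)$, which is the origin of the $\epsilon/\delta$ and of the ``most classes/pairs'' phrasing.

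Finally I would convert a small per-class defect into the stated geometric bounds. For $\mathit{intra}_c$, near-maximality of the margin $w_c^\top h_{c,i}\approx\alpha\beta$ together with the norm budgets forces, via near-equality in Cauchy--Schwarz, each feature to be nearly aligned with $w_c$ with only an $O(\sqrt{\epsilon/\delta})$ orthogonal remainder; since all same-class features concentrate around the common direction $w_c$, their pairwise cosines exceed $1-O(\sqrt{\epsilon/\delta})$ after inverting the sharpness constant. The self-duality bound $\cos_{\angle}(\dot{\mathbf{w}}_c,\tilde{\mathbf{h}}_c)\ge 1-O(\sqrt{\epsilon/\delta})$ follows from the same near-equality, since $w_c^\top\tilde{\mathbf{h}}_c$ close to $\|w_c\|\,\|\tilde{\mathbf{h}}_c\|\approx\alpha\beta$ forces alignment. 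The inter-class bound is obtained analogously from the averaged cross-margins $w_{c'}^\top\tilde{\mathbf{h}}_c\approx-\tfrac{\alpha\beta}{C-1}$, but here the cosine of the \emph{feature} means must be recovered from information about \emph{logit} inner products through the approximate self-duality and collapse estimates, and dividing by feature norms that are themselves only controlled up to error forces an additional root; optimizing the resulting trade-off degrades the exponent from $\tfrac12$ to $\tfrac16$.

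I expect the main obstacle to be the sharpness step and its interaction with the gap between logit space and feature space. The loss only directly constrains $Wh_{c,i}$, so controlling feature cosines requires ruling out large components of $h_{c,i}$ in the null space (or poorly conditioned directions) of $W$; the argument that such components waste the norm budget and thereby prevent the margin from reaching $\alpha\beta$ must be made quantitative and uniform. Coupled with the need to track the exponentially small sharpness constant explicitly (to justify the $e^{O(C\alpha\beta)}$ prefactor), and with the fact that the hypotheses bound only quadratic \emph{averages} of norms — so a secondary averaging argument is needed to control the fraction of features whose individual norm deviates from $\alpha$ — this is where the bulk of the technical work, and the eventual loss in the inter-class exponent, will reside.
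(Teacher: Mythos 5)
Your proposal follows essentially the same route as the paper's proof: a double application of Jensen's inequality (over samples within a class, then over competing classes) to establish the lower bound $m$, a decomposition of the excess loss $\epsilon$ into per-inequality gaps, a strong-convexity/quadratic-growth argument whose exponentially small modulus $e^{-O(C\alpha\beta)}$ is the source of the $e^{O(C\alpha\beta)}$ prefactor, a subset-mean (Markov-type) argument to pass from averages to a $1-\delta$ fraction of classes, near-equality in Cauchy--Schwarz to force alignment, and a trade-off optimization that degrades the inter-class exponent to $1/6$. You also correctly anticipate the two genuine technical hurdles the paper resolves with dedicated lemmas — converting control of unnormalized feature means into control of means of \emph{normalized} features (handled via a generalized H\"older inequality), and the Chebyshev-based division argument for the cross-margins — so no substantive gap remains.
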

% \paragraph{Remarks.} 

% \begin{comment}The current state of the literature on Neural Collapse has largely focused on the study of global minimizers, landscape analysis, and training dynamics of the layer-peeled model. These works assume the final layer's feature vectors can be arbitrarily assigned or modified using gradient descent. However, this assumption is unrealistic, even in over-parameterized networks.  Our work builds upon the layer-peeled model but introduces a novel approach that doesn't require specific training dynamics performed directly on the feature vectors. Our result only requires that the model achieves low loss and does not require any additional assumptions on the training method and the structure of deeper layers (except for a final Batch Normalization before the last-layer features). Therefore, our result has direct implications for a realistic trained neural network that fits the architectural assumptions and achieves low training loss.
% \end{comment}
% \begin{itemize}
%     \item The near-optimal training loss with $\epsilon \ll 1$ is a realistic scenario in neural network experiments
    
%     demonstrated in most successful neural network classifiers exhibiting \NC, including the original experiments by \citet{doi:10.1073/pnas.2015509117}, at the terminal phase of training.
%     \item Since $\frac{e^{O(C\alpha\beta)}}{\alpha\beta}$ is a mostly increasing function of $\alpha\beta$, lower last-layer feature and weight norms can provide stronger guarantees on Neural Collapse measured using cosine similarity.
% \end{itemize}

The quantitative bounds of our theorem imply that smaller last-layer feature and weight norms can provide stronger guarantees on \NC.

\begin{comment}
    \item The $e^{O(\alpha\beta)}$ term is about $e^{\frac{C}{C-1}\alpha\beta}$, which is close to $e^{\alpha\beta}$. This means that the term $\frac{e^{O(\alpha\beta)}}{\alpha\beta}$ is increasing when $\alpha\beta>\frac{C-1}{C}$ and decreasing when $\alpha\beta<\frac{C-1}{C}$. However, in practice, it's almost always the case that $\alpha\beta>\frac{C-1}{C}$, therefore the term $\frac{e^O(\alpha\beta)}{\alpha\beta}$ can be viewed as an increasing term with regard to both $\alpha$ and $\beta$. Note that while $\alpha$ and $\beta$ are constants in our theoretical bound, they are instead regularized as part of the loss function in \eqref{regloss} during the training process. Under the same cross-entropy loss function, larger WD values result in smaller $\alpha$ and $\beta$. Therefore, our bound indicates that larger WD values can  NC at the near-optimal loss regime.
\end{comment}

The proof of Theorem~\ref{main} is inspired by the optimal-case proof from \citet{LU2022224}, which shows the global optimality conditions using Jensen's inequality. Our proof extends to the near-optimal case by carefully relaxing the three strict Jensen conditions into near-optimal quantitative guarantees and analyzing the dynamics between the resulting Jensen gaps. Specifically, we show in Lemma~\ref{strongjensensub} (based on strongly convex function result from \cite{Merentes2010}) that if a set of variables achieves roughly equal value on the LHS and RHS of Jensen's inequality for a strongly convex function, then the mean of every subset cannot deviate too far from the global mean.

\begin{lemma}[Subset mean close to global mean by Jensen's inequality on strongly convex functions]
    \label{strongjensensub}
    Let $\{x_i\}_{i=1}^N\subset \mathcal{I}$ be a set of $N$ real numbers, let $\tilde{x}=\frac{1}{N}\sum_{i=1}^N x_i$ be the mean over all $x_i$ and $f$ be a function that is $m$-strongly-convex on $\mathcal{I}$. If
    $$\frac{1}{N}\sum_{i=1}^N f(x_i)\leq f(\tilde{x})+\epsilon,$$
    i.e., Jensen's inequality is satisfied with gap $\epsilon$, then for any subset of samples $S\subseteq [N]$, let $\delta=\frac{|S|}{N}$, there is $$\tilde{x}+\sqrt{\frac{2\epsilon(1-\delta)}{m\delta}}\geq \frac{1}{|S|}\sum_{i\in S} x_i\geq \tilde{x}-\sqrt{\frac{2\epsilon(1-\delta)}{m\delta}}.$$
\end{lemma}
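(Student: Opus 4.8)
The plan is to reduce the two-sided deviation bound to control of a single quantity, the variance $\Var = \frac{1}{N}\sum_{i=1}^N (x_i - \tilde{x})^2$, and to proceed in two stages: first convert the Jensen-gap hypothesis into an upper bound on $\Var$, and then convert that variance bound into a bound on how far any subset mean can stray from $\tilde{x}$.

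First I would exploit $m$-strong convexity in its most convenient algebraic form: the function $g(x) = f(x) - \frac{m}{2}x^2$ is (ordinarily) convex on $\mathcal{I}$. Applying the standard Jensen inequality to $g$ gives $\frac{1}{N}\sum_i g(x_i) \geq g(\tilde{x})$, which after rearranging the quadratic terms becomes $\frac{1}{N}\sum_i f(x_i) \geq f(\tilde{x}) + \frac{m}{2}\Var$. This is the ``strengthened Jensen'' inequality for strongly convex functions, and I expect this is the content imported from \cite{Merentes2010}. Combining it with the hypothesis $\frac{1}{N}\sum_i f(x_i) \leq f(\tilde{x}) + \epsilon$ immediately yields $\frac{m}{2}\Var \leq \epsilon$, i.e. $\Var \leq \frac{2\epsilon}{m}$.

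The second stage is purely about sums of reals. Write $A = \sum_{i\in S}(x_i - \tilde{x})$, so that $\frac{1}{|S|}\sum_{i\in S} x_i - \tilde{x} = A/|S|$. Because all deviations from the mean sum to zero, $\sum_{i\in S^c}(x_i - \tilde{x}) = -A$ for the complement $S^c$ as well. Cauchy--Schwarz applied separately to $S$ and to $S^c$ gives $A^2 \leq |S|\sum_{i\in S}(x_i-\tilde{x})^2$ and $A^2 \leq |S^c|\sum_{i\in S^c}(x_i-\tilde{x})^2$. Dividing by $|S|$ and $|S^c|$ respectively and adding, the right-hand sides combine into $N\Var$ while the left-hand side becomes $A^2\bigl(\tfrac{1}{|S|}+\tfrac{1}{|S^c|}\bigr) = A^2\tfrac{N}{|S|\,|S^c|}$, so $A^2 \leq |S|\,|S^c|\,\Var$. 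Hence $(A/|S|)^2 \leq \tfrac{|S^c|}{|S|}\Var = \tfrac{1-\delta}{\delta}\Var \leq \tfrac{2\epsilon(1-\delta)}{m\delta}$, and taking square roots gives the claimed two-sided bound on the subset mean.

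I expect the only subtle point to be obtaining the tight $(1-\delta)$ factor rather than a cruder $1/\delta$: a single application of Cauchy--Schwarz to $S$ alone would bound $A^2$ by $|S|\cdot N\Var$ and lose this factor. The improvement comes precisely from using the complementary subset together with the zero-sum-of-deviations identity, which symmetrizes the estimate and lets the two partial variances be added back into the full variance. Everything else is routine rearrangement.
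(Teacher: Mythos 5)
Your proposal is correct and follows essentially the same route as the paper: the paper's first stage is exactly the variance-strengthened Jensen inequality (cited from Merentes et al., which you rederive directly via convexity of $f(x)-\tfrac{m}{2}x^2$), and its second stage is the same zero-sum-of-deviations plus Cauchy--Schwarz-on-both-$S$-and-$S^c$ argument, yielding the identical bound $A^2 \le |S|\,|S^c|\,\mathrm{Var}$ and hence the $(1-\delta)/\delta$ factor.
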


 This lemma can be a general tool to convert optimal-case conditions derived using Jensen's inequality into high-probability proximity bounds under near-optimal conditions.

% \begin{proof}[Proof Sketch of Theorem \ref{main}]

%  Using the strong convexity of $\exp(x)$ and $\log(1+(C-1)\exp(x))$ along with Lemma \ref{strongjensensub} and the optimal case proof of \citet{LU2022224}, we show that most classes $c$ have high intra-class weight-feature vector cosine similarity, and most pairs of classes $c,c'$ have inter-class weight-feature vector cosine similarity. This upper and lower bound is then used to lower bound $\|\tilde{\bar{\h}}_c\|$ and upper bound $\langle \tilde{\bar{\h}}_c, \tilde{\bar{\h}}_{c'} \rangle$ where $$\tilde{\bar{\h}}_c=\frac{1}{N}\sum_{i=1}^N \frac{\h_{c,i}}{\|\h_{c,i}\|}$$ is the mean {\it normalized} feature vector of class $c$. The intra-class and inter-class cosine similarity follows immediately from these results.

% \end{proof} 

We now proceed to the formal version of the main theorem that theoretically demonstrates the relationship between \NC, BN, and WD.

 \begin{theorem}[Formal Version of Theorem~\ref{thm:informal_main}]
    \label{thm:main_cor}
    For a neural network classifier without bias  trained on a dataset with the number of classes $C\geq 3$ and samples per class $N\geq 1$, we consider its layer-peeled model with batch normalization before the final layer with parameter $\boldsymbol{\gamma}$, weight decay parameter $\lambda < 1/\sqrt{C}$ and regularized CE loss
    \[
    \mathcal{L}_{\mathrm{reg}}=\frac{1}{CN}\sum_{c=1}^C \sum_{i=1}^{N} \mathcal{L}_{\mathrm{CE}}\left(\boldsymbol{Wh}_{c,i}, \boldsymbol{y}_c\right)+\frac{\lambda}{2}(\|\boldsymbol{\gamma}\|^2+\|\mathbf{W}\|_F^2)
    \]
    satisfying $\mathcal{L}_{\mathrm{reg}}\leq m_{\mathrm{reg}}+\epsilon$ for small $\epsilon$,
    where $m_{reg}$ is the minimum achievable regularized loss.
    Then for at least $1-\delta$ fraction of all classes, with $\frac{\epsilon}{\delta}\ll1$, $\epsilon < \lambda$ and for small constant $\kappa>0$ and $\rho=(Ce/\lambda)^{\kappa C}$, the intra-class cosine similarity for class $c$
    $$\quad \mathit{intra}_c\geq 1-\frac{C-1}{C}\sqrt{\frac{128\rho\epsilon(1-\delta)}{\delta}}.$$
    The cosine similarity between feature and weight for class $c$
    $$\cos_{\angle}(\dot{\mathbf{w}}_c, \tilde{\mathbf{h}_c})\geq 1-2\sqrt{\frac{2\rho\epsilon(1-\delta)}{\delta}}.$$
    For at least $1-\delta$ fraction of all pairs of classes $c,c'$, with $\frac{\epsilon}{\delta}\ll1$, the inter-class cosine similarity $inter_{c,c'}$
    \vspace{-0.1in}
    \begin{align*}
    \leq -\frac{1}{C-1}+\frac{C\rho}{C-1}\sqrt{\frac{2\epsilon}{\delta}}+4(\rho\sqrt{\frac{2\epsilon}{\delta}})^{1/3}+\sqrt{\rho\sqrt{\frac{2\epsilon}{\delta}}}.
    \end{align*}

\end{theorem}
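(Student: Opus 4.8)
The plan is to deduce Theorem~\ref{thm:main_cor} from the bounded-norm guarantee of Theorem~\ref{main} by using batch normalization and weight decay to pin down the two norm parameters $\alpha,\beta$ as functions of $\lambda$ and $\epsilon$. First I would exploit the structure of last-layer BN with zero bias: each post-BN coordinate has empirical mean zero and empirical variance $(\gamma^{(k)})^2$ over the full set of $CN$ samples, so coordinatewise the second moment equals the variance, yielding the exact identity $\frac{1}{CN}\sum_{c,i}\|\mathbf{h}_{c,i}\|^2 = \|\boldsymbol{\gamma}\|^2$. Thus the quadratic average feature norm is exactly $\alpha=\|\boldsymbol{\gamma}\|$, and setting $\beta=\|\mathbf{W}\|_F/\sqrt{C}$ meets Assumption~2 of Theorem~\ref{main} with equality. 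The weight-decay penalty then reads $\frac{\lambda}{2}(\|\boldsymbol{\gamma}\|^2+\|\mathbf{W}\|_F^2)=\frac{\lambda}{2}(\alpha^2+C\beta^2)$, recasting the whole problem in the $(\alpha,\beta)$ variables that Theorem~\ref{main} understands.

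The crux is to bound the product $\alpha\beta$, which enters Theorem~\ref{main} through the amplifying factor $e^{O(C\alpha\beta)}$, and to turn this into the target $\rho=(Ce/\lambda)^{\kappa C}$. My plan has three sub-steps. First, compute the minimal regularized loss $m_{\mathrm{reg}}=\min_{\alpha,\beta}\big[m(\alpha\beta)+\tfrac{\lambda}{2}(\alpha^2+C\beta^2)\big]$ with $m(t)=\log(1+(C-1)e^{-\frac{C}{C-1}t})$; by AM--GM ($\alpha^2+C\beta^2\ge 2\sqrt{C}\,\alpha\beta$, tight at $\alpha=\sqrt{C}\beta$) this reduces to the one-dimensional convex problem $m_{\mathrm{reg}}=\min_{t\ge 0}\phi(t)$ with $\phi(t)=m(t)+\lambda\sqrt{C}\,t$. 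The hypothesis $\lambda<1/\sqrt{C}$ guarantees $\phi'(0)=\lambda\sqrt{C}-1<0$ while $\lim_{t\to\infty}\phi'(t)=\lambda\sqrt{C}>0$, so $\phi$ has an interior minimizer $t^\ast\approx\frac{C-1}{C}\log(\sqrt{C}/\lambda)$. Second, evaluating $\phi$ there (or at the explicit point $t_0=\frac{C-1}{C}\log(\sqrt{C}/\lambda)$) gives the sharp estimate $m_{\mathrm{reg}}=O\!\big(\lambda\sqrt{C}\log(C/\lambda)\big)$. Third, since $\phi(t)\ge\lambda\sqrt{C}\,t$ and the achieved $t=\alpha\beta$ satisfies $\phi(\alpha\beta)\le m_{\mathrm{reg}}+\epsilon$, combined with $\epsilon<\lambda$ I get $\alpha\beta\le\frac{m_{\mathrm{reg}}+\epsilon}{\lambda\sqrt{C}}=O(\log(C/\lambda))$, hence $e^{O(C\alpha\beta)}\le(Ce/\lambda)^{\kappa C}=\rho$ for a suitable constant $\kappa$. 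I would also note the matching lower bound: $m(\alpha\beta)\le\mathcal{L}_{\mathrm{reg}}\le m_{\mathrm{reg}}+\epsilon$ forces $(C-1)e^{-\frac{C}{C-1}\alpha\beta}$ small, so $\alpha\beta=\Theta(\log(C/\lambda))$ is pinned two-sidedly, which in turn controls the $\tfrac{1}{\alpha\beta}$ factor appearing in Theorem~\ref{main}.

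It remains to certify the unregularized loss gap at the achieved norms and invoke Theorem~\ref{main}. Writing $\mathcal{L}=\mathcal{L}_{\mathrm{reg}}-\frac{\lambda}{2}(\alpha^2+C\beta^2)$ and using both $\mathcal{L}_{\mathrm{reg}}\le m_{\mathrm{reg}}+\epsilon$ and the definitional inequality $m_{\mathrm{reg}}\le m(\alpha\beta)+\frac{\lambda}{2}(\alpha^2+C\beta^2)$, the penalty terms cancel and I obtain $\mathcal{L}-m(\alpha\beta)\le\epsilon$. Thus Assumption~3 of Theorem~\ref{main} holds at these very $\alpha,\beta$ with gap $\epsilon$, and I substitute $\alpha=\|\boldsymbol{\gamma}\|$, $\beta=\|\mathbf{W}\|_F/\sqrt{C}$, and $e^{O(C\alpha\beta)}\le\rho$ into its three conclusions. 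Carrying the explicit (non-asymptotic) constants from the proof of Theorem~\ref{main} then collapses the intra-class and self-duality bounds into the stated $\sqrt{\rho\epsilon(1-\delta)/\delta}$ forms with prefactors $\frac{C-1}{C}$ and $128$, and the inter-class bound into its three-term form after propagating the $\tfrac12,\tfrac13,\tfrac14$ powers.

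The step I expect to be the main obstacle is the sharp estimate of $m_{\mathrm{reg}}$: the naive bound $m_{\mathrm{reg}}\le\phi(0)=\log C$ would force $\alpha\beta=O(\tfrac{\log C}{\lambda\sqrt{C}})$ and hence $e^{\Theta(\sqrt{C}\log C/\lambda)}$, which blows up like $e^{\Theta(1/\lambda)}$ and is far too large to match $\rho=e^{\Theta(C\log(C/\lambda))}$. Obtaining the correct $\alpha\beta=O(\log(C/\lambda))$ hinges on the interior-minimizer analysis of $\phi$ (hence the role of $\lambda<1/\sqrt{C}$) and on the clean cancellation enabled by $\epsilon<\lambda$; the remaining constant-chasing needed to reproduce the precise coefficients is routine but tedious.
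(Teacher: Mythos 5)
Your proposal is correct and follows essentially the same route as the paper's proof: the exact BN identity $\alpha=\|\boldsymbol{\gamma}\|$ (the paper's Lemma~\ref{bnprop}), the AM--GM reduction $\alpha^2+C\beta^2\ge 2\sqrt{C}\alpha\beta$ to the one-dimensional problem $\min_\gamma\,[m(\gamma)+\sqrt{C}\lambda\gamma]$, the identification of the interior minimizer $\gamma^*=\frac{C-1}{C}\log(\frac{C-(C-1)\lambda}{\lambda})=O(\log(C/\lambda))$ (the paper's Proposition~\ref{funcprop}), the cancellation of the penalty terms to certify an unregularized gap of at most $\epsilon$, and the final invocation of Theorem~\ref{main} with $e^{O(C\alpha\beta)}\le\rho$. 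The only (harmless) deviation is that you bound the achieved $\alpha\beta$ by the linear estimate $\alpha\beta\le(m_{\mathrm{reg}}+\epsilon)/(\lambda\sqrt{C})$ together with $\epsilon<\lambda$, whereas the paper uses a second-order expansion around $\gamma^*$ to get the two-sided control $|\gamma-\gamma^*|\le\sqrt{O(1/\lambda)\epsilon}$; both yield the needed $\alpha\beta=O(\log(C/\lambda))$.
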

% Our preliminary theorem above shows that lower values of the average feature norm and weight Frobenius norm of the final layer provide stronger guarantees of the proximity to \NC. Note that WD is used to regularize the norms of weight matrices and weight vectors. Therefore, higher WD values should result in smaller weight matrix and weight vector norms. Our following proposition shows that regularizing the weight vector of a BN layer without the bias term is equivalent to regularizing the quadratic average of the feature norms of its output vectors:

Since $\rho=(Ce/\lambda)^{\kappa C}$ is a decreasing function of $\lambda$, higher values of $\lambda$ would result in larger lower bounds of $intra_c$ and smaller upper bounds of $inter_{c,c'}$ under the same loss gap $\epsilon$. According such, under the presence of BN and WD of the final layer, larger values of WD provide stronger \NC guarantees in the sense that the intra-class cosine similarity of most classes is closer to 1 and the inter-class cosine similarity of most pairs of classes is closer to $-\frac{1}{C-1}$.

% The proof of Theorem~\ref{thm:main_cor} is based on Theorem \ref{main} and the following observation on BN.

% \begin{lemma}[BN normalizes quadratic average of feature norms]
% \label{bnprop}
% Let $\{\mathbf{h}_i\}_{i=1}^B$ be a set of Batch Normalized feature vectors with variance vector $\boldsymbol{\gamma}$ and bias term $\boldsymbol{\beta}=0$ (i.e. $\mathbf{h}_i=BN(\mathbf{x}_i)$ for some $\{\mathbf{x}_i\}_{i=1}^B$). Then
% \vspace{-0.1in}
% \begin{align}
% \sqrt{\frac{1}{N}\sum_{i=1}^N \|\mathbf{h}_i\|_2^2}=\|\boldsymbol \gamma\|_2. \label{bn_lemma_form}
% \end{align}
% \end{lemma} 
% \vspace{-0.1in}
% The lemma indicates that regularizing the BN variance vector $\gamma$ is effectively equivalent to regularizing the quadratic average of the feature norms. 

%Intuitively, under the same conditions, a higher regularization coefficient in the training loss function should result in lower values of the regularized parameters. Therefore, a higher WD value (i.e., regularization coefficient of the weight matrices and variance vectors) should result in a lower weight norm and last-layer feature norm and a tighter bound in Theorem \ref{main}. This intuition is formalized in the following main theorem:

\subsection{Conclusion}

Our theoretical result shows that last-layer BN, last-layer WD, and near-optimal average CE loss are sufficient conditions to guarantee proximity to the \NC structure as measured using cosine similarity, regardless of the training method and earlier layer structure. Moreover, our quantitative bound implies that a larger WD value and smaller loss result in stronger bounds on \NC.

\section{Empirical Results}
\label{expt}
\begin{figure*}
\includegraphics[width=0.9\textwidth]{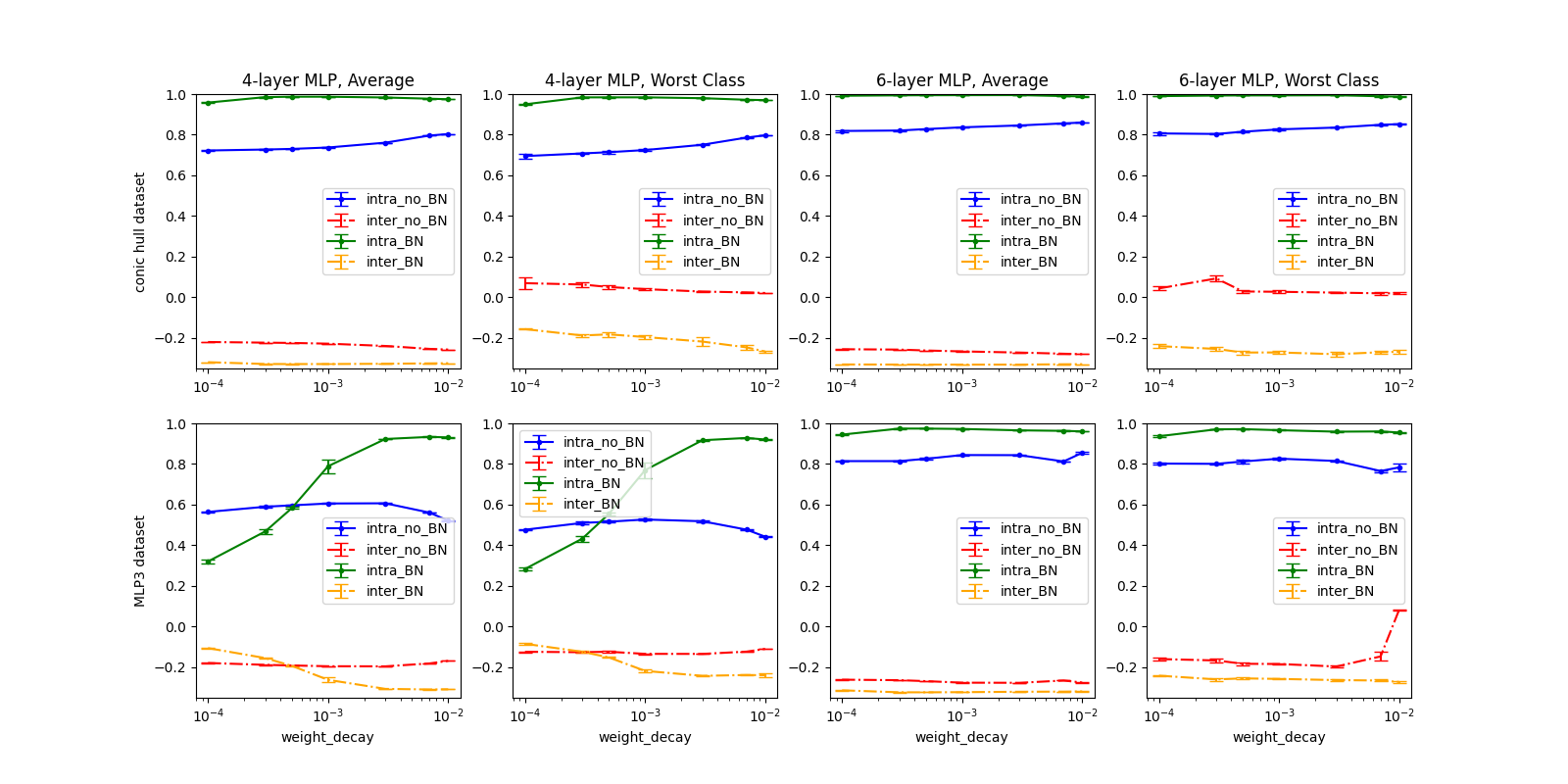}
\vspace{-0.2in}
\caption{\NC increases with WD under BN: Minimum intra-class and maximum inter-class Cosine Similarity for 4-layer and 6-layer MLP under Different WD and BN on the synthetic dataset generated using a randomly initialized 3-layer MLP. Higher values of intra-class and lower values of inter-class cosine similarity imply a higher degree of Neural Collapse. The {\bf \textcolor{green}{green}} and {\bf \textcolor{yellow}{yellow}} lines are cosine similarity measures for the model with BN, which demonstrates stronger \NC along with higher WD values. Standard deviation over 5 experiments.}
\label{fig:neural_collapse}

\end{figure*}

% Conclusions 1: \NC is most significant with BN and high WD
% Conclusion 2: The degree of \NC improves with the decrease of loss with \NC but not without
% Conclusion 3: The degree of \NC is more significant at lower last-layer feature norm values, under the assumption of perfect interpolation.
% Conjecture: The effectiveness of BN and WD is partially due to the combined effect of regularizing the norm of the weight matrices and feature vector norms along with cross-entropy loss. As demonstrated in Theorem \ref{main}, 

In this section, we present extensive empirical evidence to complement our theoretical discoveries. Specifically, our experiments highlight the significance of BN and WD in the emergence of \NC by suggesting that: 
% assuming perfect interpolation of the training data:
% We measure \NC with intra-class and inter-class cosine similarity. Our experiments suggest that, assuming perfect interpolation of the training data:
% We investigate several insights from our theorems by measuring \NC using intra-class and inter-class cosine similarity. 
% present empirical evidence highlighting the significance of BN and WD in the emergence of \NC. We investigate several insights from our theorems by measuring \NC using intra-class and inter-class cosine similarity. Our experiments suggest that, assuming perfect interpolation of the training data:
\ifdoublecol
\vspace{-0.2in}
\begin{itemize}
    \item The degree of \NC is most significant under the presence of BN and high WD values.
    \item The degree of \NC improves with decreasing loss during training more steadily under the presence of BN.
    \item The degree of \NC is more significant at lower last-layer feature norm values.
\end{itemize}
\else
\begin{itemize}
    \item The degree of \NC is most significant under the presence of BN and high WD values.
    \item The degree of \NC improves with decreasing loss during training more steadily under the presence of BN.
    \item The degree of \NC is more significant at lower last-layer feature norm values.
\end{itemize}
\fi

\subsection{Setup}
\label{setup}
We perform experiments on both synthetic and real-world datasets.
\par{\textbf{Synthetic Datasets.}}
% \subsubsection{Synthetic Datasets}
\label{setup_syn}
Our first set of experiments uses a vanilla neural network (i.e., Multi-Layer Perceptron with ReLU activation) to classify well-defined synthetic datasets of different distribution complexities. We aim to use straightforward model architectures and well-defined distributions to explore the effect of different hyperparameters in \NC under a controlled setting. We consider MLP models with and without BN. In BN models, one BN layer is located after the last ReLU activation and before the final linear transformation.

\par Our first dataset is the conic hull dataset, where the feature space $\mathbb{R}^d$ is separated into $C$ classes using $\lceil \log C \rceil$ randomly generated hyperplanes. Since every pair of classes is linearly separable, neural networks can find a set of weights that perfectly classify all data. Thus, the conic hull dataset is a great starting point for understanding deep classification models. In our experiments, we use class number $C=4$, dimension $d=16$, and training dataset size $N=8000$.

\par We also perform experiments on a more complex dataset where the class labels are generated using a randomly initialized MLP. We ensure that the number of layers and parameters within this data-generator MLP is less than any model used for training. The number of classes, dimensions, and training samples we use are identical to the conic hull dataset.

\paragraph{Real-World Datasets. (Results in Appendix Section \ref{sec:appendexpt})}
% \subsubsection{Real-World Datasets}
Our next set of experiments explores the effect of BN and WD using standard computer vision datasets MNIST (\citet{mnist}), CIFAR-10, CIFAR-100 (\citet{cifar10}), and ImageNet32 (\citet{imagenet}). We use VGG11 and VGG19 (\citet{vgg}) convolutional neural networks as the architecture. Similar to the synthetic experiments, we consider the models with and without BN. The BN model incorporates a BN layer after selected convolution layers. Both models are official implementations of the PyTorch Library.

\paragraph{\textbf{Measures of proximity to the \NC structure.}}
% \subsubsection{Measures of the degree of \NC}
Our experiments adopt the geometrically intuitive cosine similarity measure of \NC as in our theoretical results. While most prior empirical works of \NC focus on the average measures of NC over all classes, (e.g., \citet{doi:10.1073/pnas.2015509117,ji2022unconstrained}), we additionally measure the stricter {\it minimum} intra-class and {\it maximum} inter-class (i.e. the {\bf worst-case} measure over all classes/pairs of classes). When the number of classes is large, the difference between the average and worst-case measures can be very significant and reveal further insights into the details of the feature geometric configuration, as later demonstrated in our experiments.

% Our experiments show that {\bf models with batch normalization and appropriate WD achieve the highest levels of \NC measured using cosine similarity}, which supports the predictions of Theorem \ref{maincor}.

\ifdoublecol

% \vspace{0.2in}

\fi

\begin{figure*}
    \includegraphics[width=0.95\textwidth]{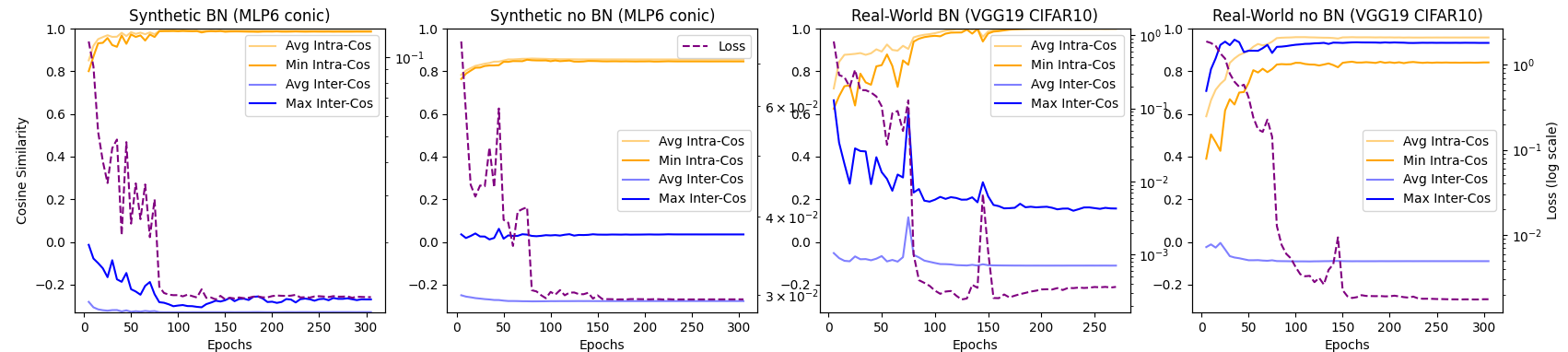}
\vspace{-0.1in}
\caption{\NC closely represents loss value under BN: Relationship between \NC and training loss during the training process. The purple dashed line is the training loss presented in the log scale with axis labels on the right. The models with Batch Normalization (plots 1 and 3) demonstrate more correlation between loss value and \NC during training.}
\label{figure_exp2}
\end{figure*}

\subsection{Relationship with the Presence of BN and WD}
\label{section:bnwd}
In our first set of experiments, we explore the degree of \NC under different presences of BN and values of WD. We conduct experiments on both synthetic and real-world data as described in section \ref{setup} with WD values varying between $10^{-4}$ and $10^{-2}$. Our experimental results for synthetic datasets are presented in Figure \ref{fig:neural_collapse}, while those for real-world datasets can be found in appendix section ~\ref{sec:appendixbnwd}.
% Figure \ref{fig:plot_CV} in Appendix.

Our experiments show that, in both synthetic and realistic scenarios, the highest level of \NC is achieved by models with BN and appropriate WD. Moreover, BN allows the degree of \NC to increase smoothly along with the increase of WD within the range of perfect interpolation, while the degree of \NC is unstable or decreases with the increase of WD in non-BN models. Such a phenomenon is also more pronounced in simpler neural networks and easier classification tasks than in realistic classification tasks.

\begin{figure*}[h]
\centering
\includegraphics[width=1.03\textwidth]{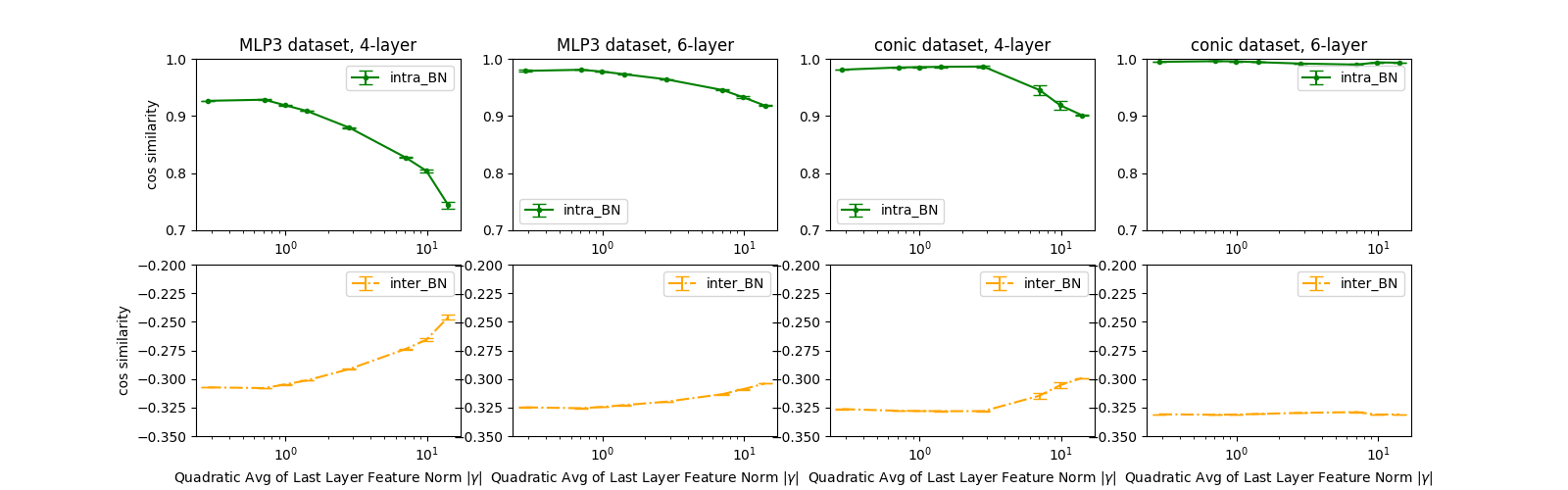}
\vspace{-0.1in}
\caption{\NC correlates with feature norm: Min intra-class and max inter-class Cosine Similarity for synthetic dataset and MLP models with BN under different $|\boldsymbol{\gamma}|$ values. Higher intra-class and lower inter-class cosine similarity indicate a higher degree of \NC. Note that the intra-class and inter-class cosine similarity are split into two plots to display more detailed changes. Except for the 6-layer MLP trained on the conic hull dataset, all settings demonstrate a negative correlation between proximity to \NC and the last-layer feature norm value as constrained by $|\boldsymbol{\gamma}|$. Standard Deviation over 3 experiments.}
\label{fig:plot_fix_const}
\end{figure*}

\subsection{Relationship with Training Loss}
\label{section:loss}
Our next set of experiments explores the emergence of \NC as the training loss decreases during the training process. Specifically, we focus on the evolution of minimum intra-class and maximum inter-class cosine similarity during training. Theorem~\ref{thm:main_cor} implies that, under the presence of BN and WD, the bound on \NC scales with the loss optimality gap $\epsilon$. However, it does not provide guarantees without the presence of BN layers. As such, we hypothesize that the presence of BN layers facilitates the formation of the $\NC$ structure during training as the training loss decreases. Specifically, we record the models' cosine similarity measure every five epochs during training for both models with and without BN.

\par We present our results in Figure~\ref{figure_exp2}. We note that for the synthetic dataset experiment with BN, the degree of \NC demonstrates a strong correlation with training loss (purple dashed line) throughout the training process while the model without BN observes little change in the \NC beyond the first few epochs even though the loss keeps decreasing later on into the training process. For real-world experiments, the model with BN continues to demonstrate a significant correlation between training loss and \NC, while the model without BN observes an increase (instead of the expected decrease) in maximum inter-class cosine similarity during the first phases of training despite a decrease in training loss. Additional experiments with synthetic data under different WD values and real-world data are in Appendix Section \ref{section:appendloss}.
the supplemental materials.

\subsection{Relationship with Feature Norm}
\label{section:featnorm}
Note that Theorem \ref{main} implies that higher feature norm (i.e. $\alpha$) yields stronger theoretical bounds on the degree of \NC. Inspired by this result, we directly investigate the relationship between the proximity of \NC and the last-layer feature norm. Specifically, we set the weight vector of the BN layer (i.e. $\boldsymbol{\gamma}$ in (\ref{loss_pelled})) to a constant value fixed during training. We then compare the cosine similarity measure of \NC under different $|\boldsymbol{\gamma}|$ values. We hypothesize that lower $|\boldsymbol{\gamma}|$ values would induce stronger neural collapse at the terminal phase of training, assuming a small training loss is achieved, and a higher WD value facilitates \NC by inducing smaller $|\boldsymbol{\gamma}|$ value during training. A WD factor of $0.005$ is used for all experiments in this section.

\par We perform this experiment only on synthetic data due to the existence of multiple BN layers in real-world models such as VGG, which makes such operations ambiguous. We vary the constant value set for each entry of the feature vector from $0.02$ to $1$, and the actual $|\boldsymbol{\gamma}|$ value is scaled by a factor of $\sqrt{d}$. Our results are presented in Figure \ref{fig:plot_fix_const}. We note that for most configurations, the cosine similarity of \NC demonstrates a negative correlation with the value of $|\gamma|$. The only exception is the combination of the 6-layer MLP model trained on the conic hull dataset, where the model fits the data so well that near perfect \NC is achieved regardless of the $|\boldsymbol{\gamma}|$ value. Additional experiments with different WD values are in Appendix Section \ref{section:appendfeatnorm}.

\section{Limitations and Future Work}
Our theoretical exploration into deep neural network phenomena, specifically \NC, has its limitations and offers various avenues for further work. Based on our work, we have identified several directions for future efforts:

\begin{itemize}
    \item Our work, like previous studies employing the layer-peeled model, primarily focuses on the last-layer features and posits that BN and WD are only applied to the penultimate layer. \NC has been empirically observed in deeper network layers (\citet{benshaul2022nearest, depthminimization}) and shown to be optimal for regularized MSE loss in deeper unconstrained features models (\citet{tirer2022extended, súkeník2023deep}). An insightful future direction would involve investigating how the proximity bounds to \NC can be generalized to deeper layers of neural networks and understanding how these theoretical guarantees evolve with network depth.
    \item The theoretical model we have developed is idealized, omitting several intricate details inherent to practical neural networks. These include bias in linear layers and BN layers and the sequence of BN and activation layers. % Consequently, a worthwhile avenue for future research would be to refine the \NC proximity bounds to accommodate more realistic network settings.
\end{itemize}

\begin{comment}
In this paper, we theoretically and empirically showed batch normalization and weight decay to be critical factors for \NC Emergence. Specifically, we propose the cosine similarity measure to evaluate how close a given set of features is to \NC. Our theoretical result shows that, in the layer-peeled batch normalized model, batch normalization along with high weight decay provides theoretical guarantees of \NC emergence when the regularized cross-entropy loss is near optimal. Our further experiments show that \NC is indeed most significant when both batch normalization and high weight decay are present, showing these conditions to be minimal for a strong degree of \NC emergence.

However, we acknowledge several limitations of our findings and provide future directions to continue our research:
\begin{itemize}
    \item Our cosine similarity measure does not capture the norm equality and self-duality aspects of \NC and solely focuses on the directions of the feature vectors. Further work can investigate how batch normalization and weight decay affect the remaining aspect of \NC.
    \item Our theoretical analysis adopts the layer-peeled model and assumes that batch normalization and weight decay are only applied to the final layer of the network, which does not exactly represent practical Neural Network training. Future work can extend our analysis to deeper layers by considering how regularized weight matrices affect batch normalized features of earlier layers.
\end{itemize}
\end{comment}
\bibliographystyle{plainnat}
\bibliography{references}
\newpage
\renewcommand{\thesection}{\Alph{section}}
\setcounter{section}{0}

\section{Comparison with other Theoretical Works on the Emergence of \NC}
\label{appendix_table_comparison}
\begin{table}[h]
    \centering
    \label{tab:contribution-comparison}
    \begin{tabular}{l|cccc|ccc}
        \toprule
         & MSE & CE & Reg. & Norm. & Opt. & Landscape & Near-Opt. \\
        \midrule
        \cite{ji2022unconstrained} & & \checkmark & & & \checkmark$^*$ & \checkmark$^*$ & \\
        \cite{zhu2021geometric} & & \checkmark & \checkmark & & \checkmark & \checkmark & \\
        \cite{LU2022224} & & \checkmark &  & \checkmark & \checkmark & & \\
        \cite{poggio2020explicit} & \checkmark & & & \checkmark & \checkmark & \checkmark & \\
        \cite{tirer2022extended} & \checkmark & & \checkmark & & \checkmark & &\\
        \cite{súkeník2023deep} & \checkmark & & \checkmark & & \checkmark & &\\
        \cite{centralpath} & \checkmark & & \checkmark & & \checkmark & \checkmark & \\
        \cite{normgeometric} & & \checkmark &  & \checkmark & \checkmark & \checkmark \\
        \cite{pmlr-v145-e22b} & & \checkmark & & \checkmark & \checkmark & & \\
        This Work & & \checkmark & \checkmark & \checkmark & \checkmark & & \checkmark \\
        % Add more papers as necessary
        \bottomrule
    \end{tabular}
    
    \caption{Comparison with existing theoretical works on the emergence of \NC. "Reg." denotes weight or feature norm regularization assumption, "Norm." denotes weight or feature norm constraint/normalization, "Opt." denotes optimality conditions, and "Landscape" denotes landscape or gradient flow analysis. $^*$ Shows the direction of gradient flow as it tends towards infinity without normalization/regularization.}
\end{table}

\section{Additional Experiments}
\label{sec:appendexpt}
\subsection{Experiment Details}
Unless otherwise specified, all models are trained on RTX4090 GPUs with learning rate $lr=0.001$ for CIFAR10/100 and $lr=0.0001$ for ImageNet32, which decays by a factor of $0.1$ every $1/4$ of the training epochs. Experiments are trained with the Adam optimizer for 300 epochs with Cross Entropy loss. For CIFAR100 and CIFAR10 experiments, models are trained using 8000 training samples. For ImageNet32, the training sample size is 100k.
\subsection{Relationship of \NC with BN and WD on real-world dataset}
\label{sec:appendixbnwd}
\paragraph{Results for CIFAR10 and CIFAR 100}  In figure \ref{fig:plot_CV} we present experimental results for standard computer vision datasets CIFAR10 and CIFAR100 (\cite{cifar10}) using VGG (\cite{vgg}) networks. We trained on weight decay values of $\lambda=3e-4, 5e-4, 1e-3, 5e-3, 7e-3, 1e-2$ using two VGG implementations with and without BN in the PyTorch (\cite{pytorch}) library. Similar to the synthetic experiments, we consider both the average cosine similarity measures and that of the worst-performing class/pair of classes in terms of $intra_c$ and $inter_{c,c'}$ value. The {\bf \textcolor{green}{green}} and {\bf \textcolor{red}{red}} lines are the intra-class and inter-class cosine similarity measures for the model with BN, respectively.

We observe that, in alignment with our hypothesis, models with BN demonstrate stronger \NC than models without BN (i.e. for intra-class, the {\bf \textcolor{green}{green}} lines with BN are higher than the {\bf \textcolor{blue}{blue}} lines without BN, while the {\bf \textcolor{red}{red}} lines for inter-class cosine similarity $inter_{c,c'}$ are above the {\bf \textcolor{yellow}{yellow}} lines without BN). Furthermore, \NC is more evident as the WD value $\lambda$ increases in BN models, observable as the intra-class cosine similarity ({\bf \textcolor{blue}{blue}}) increases while the inter-class cosine similarity ({\bf \textcolor{red}{red}}) decreases with the increase of WD value.
\begin{figure*}[h]
\centering
\includegraphics[width=1.0\textwidth]{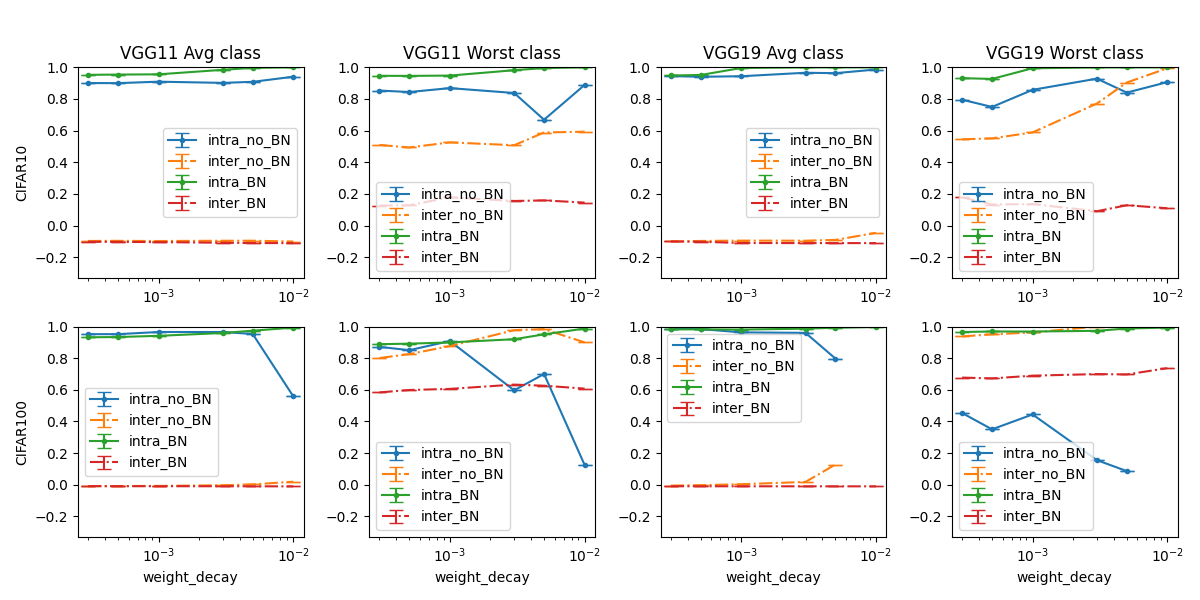}
\caption{Intra-class and Inter-class Cosine Similarity for VGG11 and VGG19 and datasets CIFAR10 and CIFAR 100 under Different WD and BN combinations. Higher intra-class and lower inter-class cosine similarity indicate a higher degree of \NC. Both the average measures over all classes and the worst class are presented. The {\bf \textcolor{green}{green}} and {\bf \textcolor{red}{red}} lines are cosine similarity measures for the model with BN. In most cases, the models with BN demonstrates observably better \NC than non-BN models, and the \NC is more evident in models trained with larger WD value.}
\label{fig:plot_CV}
\end{figure*}
\paragraph{Results for ImageNet32 (1000 classes).} In Figure \ref{fig:plot_ImageNet}, we perform experiments on the ImageNet32 dataset dataset with the VGG11, VGG19 and ResNet Model with BN. The better-performing ResNet model demonstrate the most evident \NC, which increases with the WD parameter. On the other hand, while the VGG models continue to demonstrate increases intra-class cosine similarity with increasing WD, the inter-class cosine similarly also increase, in contrary with our theoretical prediction. This shows that optimization factors takes more precedence than the our optimization-agnostic theoretical bound as the number of classes $C$ increases.
\begin{figure*}[h]
\centering
\includegraphics[width=1.0\textwidth]{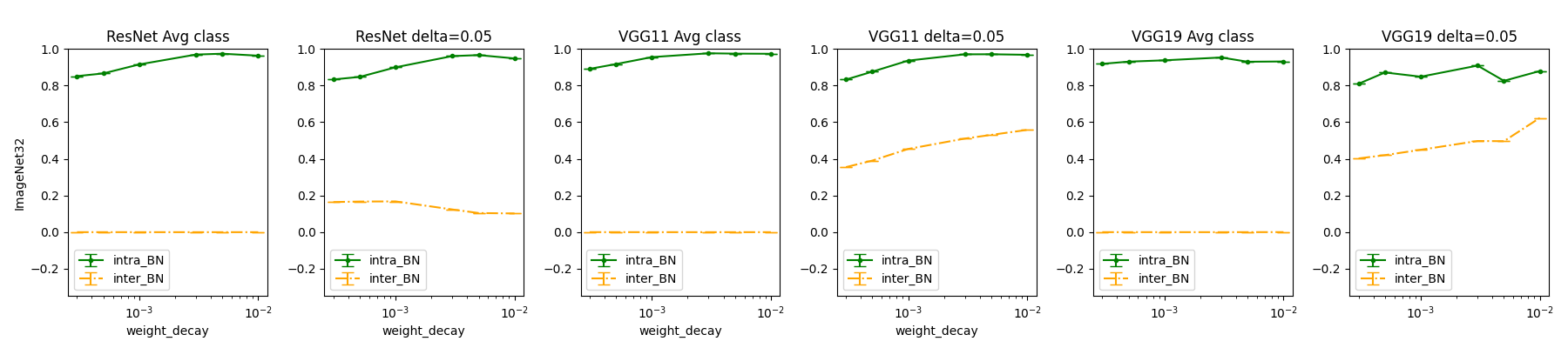}
\caption{Intra-class and Inter-class Cosine Similarity for ImageNet32 under Different WD and BN with different models.. Higher intra-class and lower inter-class cosine similarity indicate a higher degree of \NC. Both the average measures over all classes and the worst class are presented. The {\bf \textcolor{green}{green}} and {\bf \textcolor{yellow}{yellow}} lines are cosine similarity measures for the model with BN.}
\label{fig:plot_ImageNet}
\end{figure*}
\subsection{Relation of \NC with training loss}
\label{section:appendloss}
In main content Section
"Relationship with Training Loss"
% \ref{section:loss}
we provided one example \NC vs training loss of both synthetic and real-world data. In Figure \ref{fig:losssyn} we provide additional experiments for synthetic data and in Figure \ref{fig:lossreal} we present additional experiments for real-world data and models. Note that most experiments strengthen our claim that BN allows \NC to increase reliably with the minimization of training loss.
\begin{figure}
    \centering
    \includegraphics[width=\textwidth]{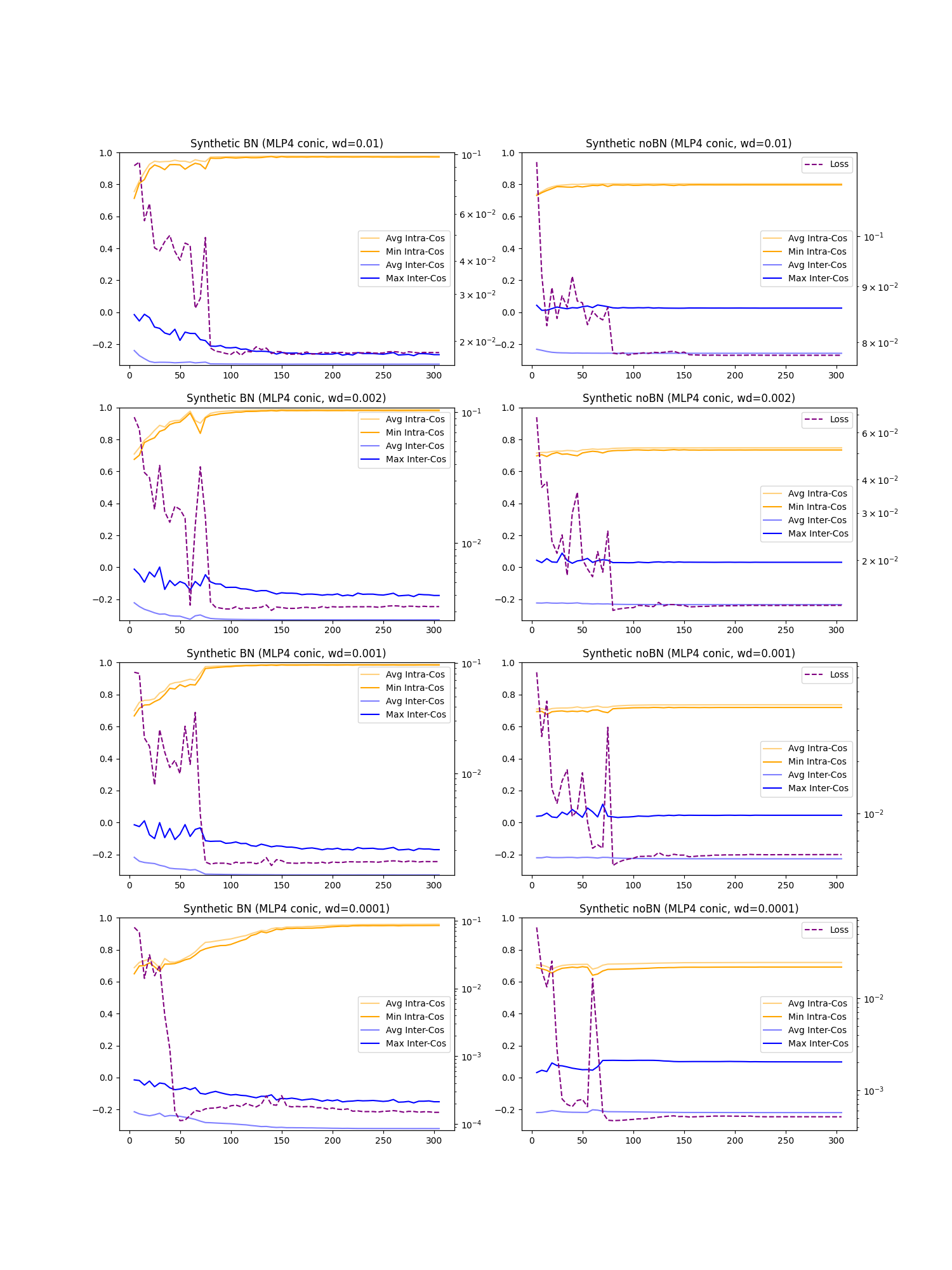}
    \caption{Minimum intra-class cosine similarity and maximum inter-class cosine similarity vs loss during training with different weight decay values using 4-layer MLP trained on the conic hull dataset. Note that the \NC measures barely change during training without BN but increases reliably with loss decrease with BN.}
    \label{fig:losssyn}
\end{figure}
\begin{figure}
    \centering
    \includegraphics[width=\textwidth]{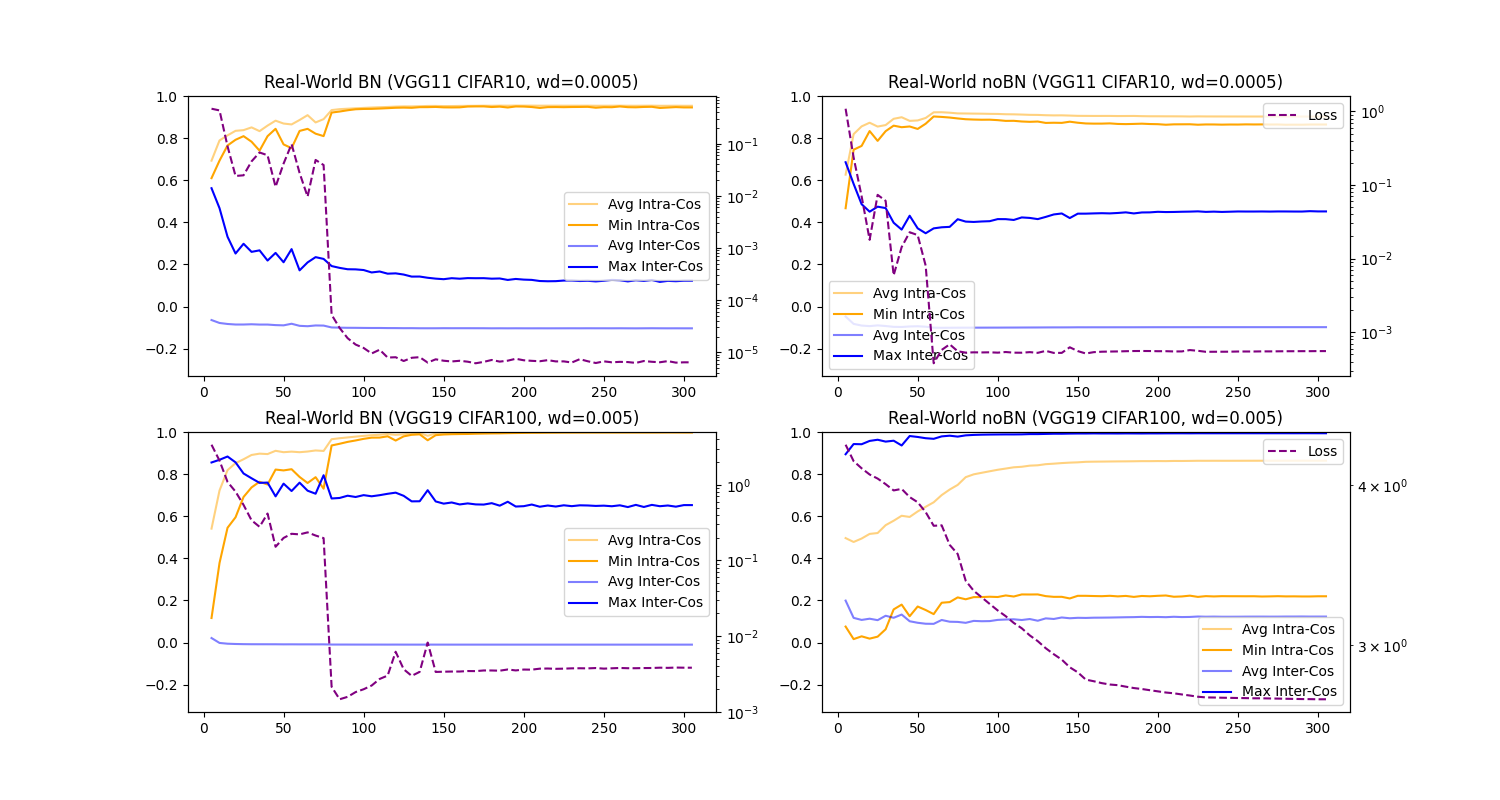}
    \caption{Minimum intra-class cosine similarity and maximum inter-class cosine similarity vs loss during training with real-world data. Note that the \NC measures barely change during training without BN but increases reliably with loss decrease with BN.}
    \label{fig:lossreal}
\end{figure}
\subsection{Relation of \NC with Last-layer Feature Norm}
\label{section:appendfeatnorm}
In main content Section 
"Relation of with Last-layer Feature Norm"
%\ref{section:featnorm}
, we presented the result for the relationship of \NC with layer-layer feature norm as parameterized by the norm of the batch norm $\boldsymbol{\gamma}$ vector. We only presented results for weight decay parameter $wd=0.005$. In Figure \ref{fig:featnormwd} we provide additional results for the experiment at a wider range of weight decay values. As indicated by Section \ref{section:bnwd}, lower weight decay parameter results in higher \NC.
\begin{figure}
    \centering
    \includegraphics[width=\textwidth]{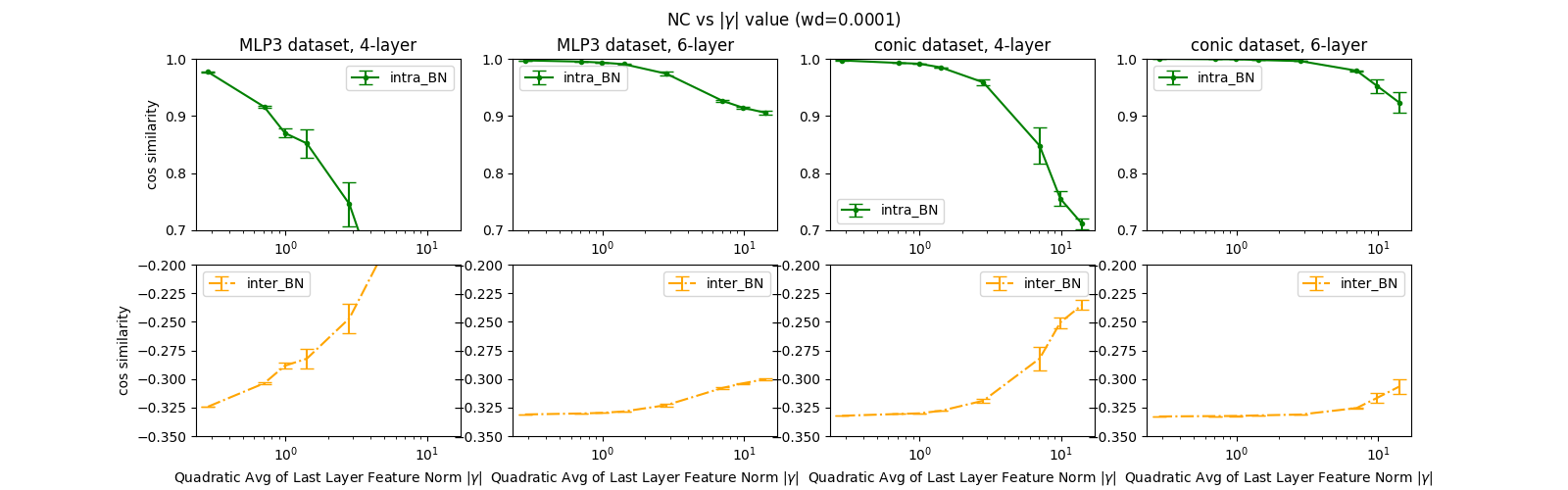}
    \includegraphics[width=\textwidth]{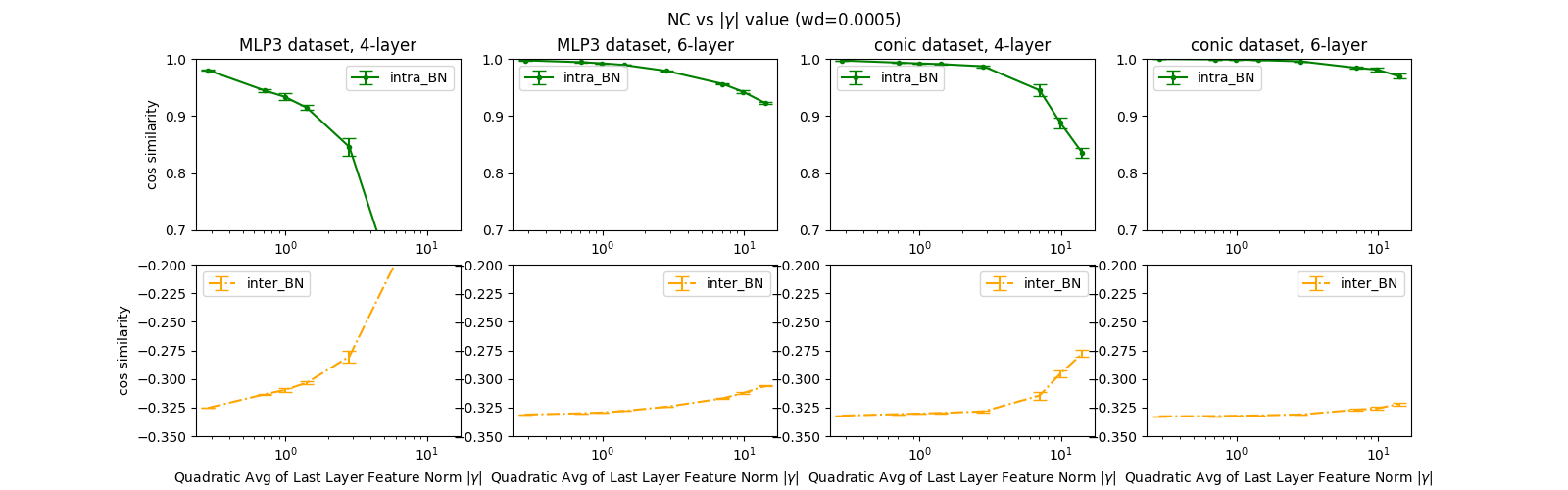}
    \includegraphics[width=\textwidth]{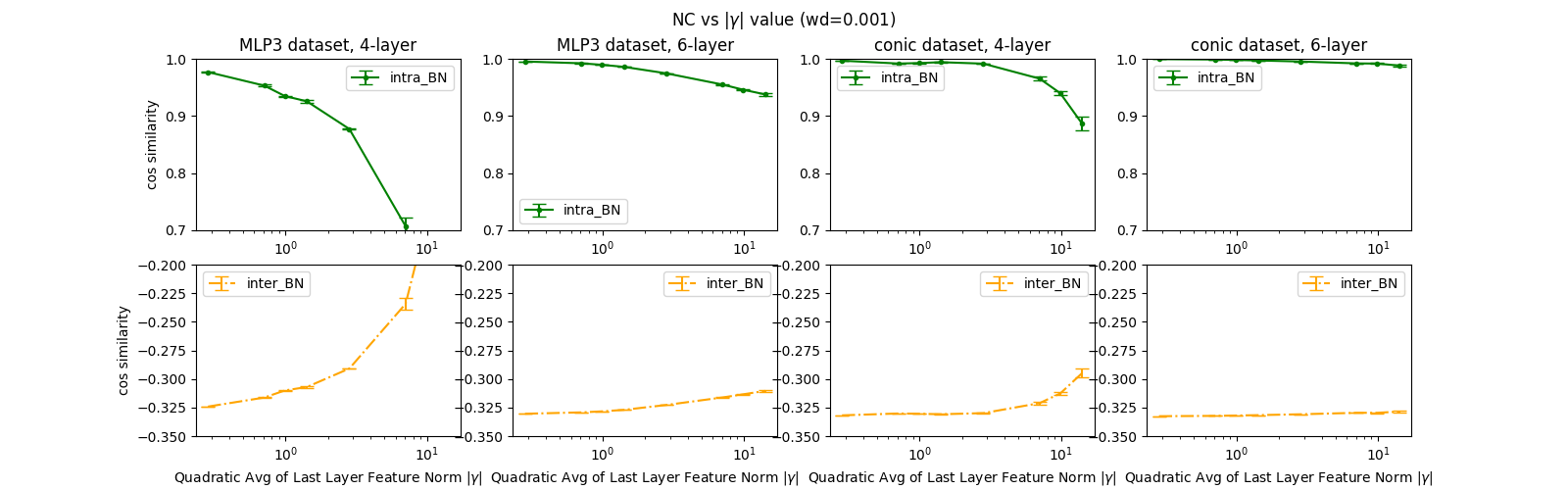}
    \includegraphics[width=\textwidth]{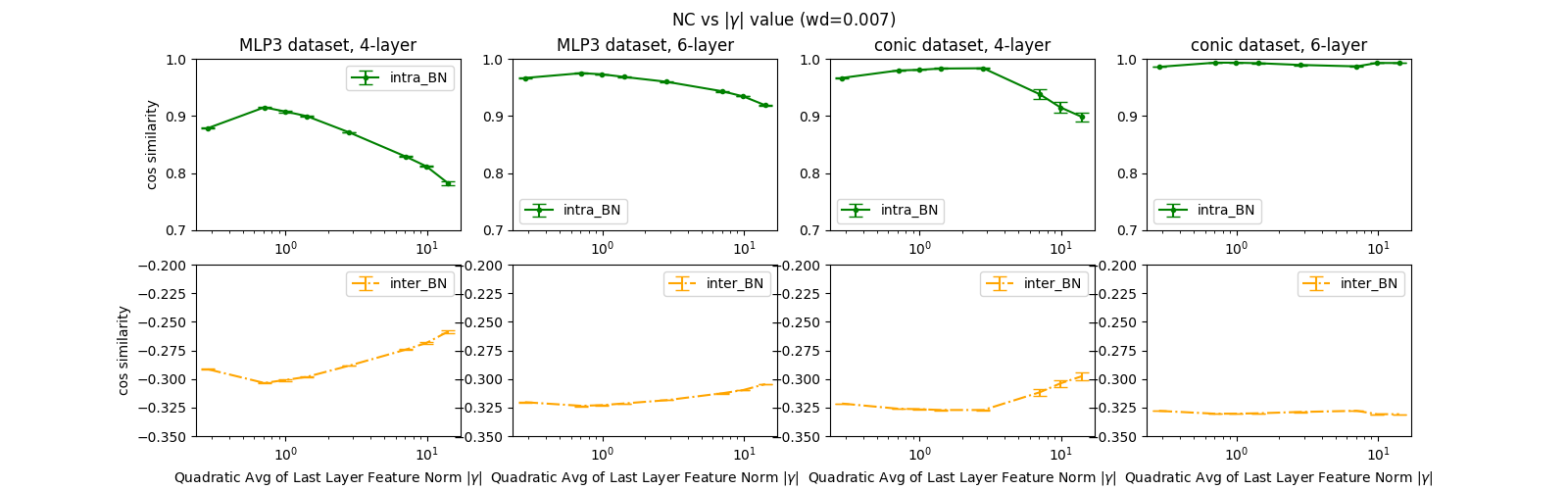}
    \caption{Relationship of \NC with last-layer feature norm under different WD values. Most experiments show that \NC is more significant at a higher last-layer feature norm. At very small feature norm and high weight decay, the model is no longer able to closely fit the training data, which explains a small initial decrease in \NC at the lower $\boldsymbol{\gamma}$ values}
    \label{fig:featnormwd}
\end{figure}

\newpage
\section{Proofs}
\label{appendix_proof}
\subsection{Proof of Lemma \ref{strongjensensub}}
Our first lemma demonstrate that if a set of variables achieves roughly equal value on the LHS and RHS of Jensen's inequality for a strongly convex function, then the mean of every subset cannot deviate too far from the global mean.
\begin{lemma}[Restatement of Lemma \ref{strongjensensub}]
    \label{strongjensensubappendix}
    Let $\{x_i\}_{i=1}^N\subset \mathcal{I}$ be a set of $N$ real numbers, let $\tilde{x}=\frac{1}{N}\sum_{i=1}^N x_i$ be the mean over all $x_i$ and $f$ be a function that is $m$-strongly-convex on $\mathcal{I}$. If
    $$\frac{1}{N}\sum_{i=1}^N f(x_i)\leq f(\tilde{x})+\epsilon$$
    Then for any subset of samples $S\subseteq [N]$, let $\delta=\frac{|S|}{N}$, there is $$\tilde{x}+\sqrt{\frac{2\epsilon(1-\delta)}{m\delta}}\geq \frac{1}{|S|}\sum_{i\in S} x_i\geq \tilde{x}-\sqrt{\frac{2\epsilon(1-\delta)}{m\delta}}$$
\end{lemma}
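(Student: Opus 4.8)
The plan is to convert near-tightness of Jensen's inequality into a bound on the empirical variance of the $x_i$, and then to run a Cauchy--Schwarz partition argument showing that no single subset can absorb enough of that small variance to let its mean drift far from $\tilde{x}$. The whole argument is elementary once the strong-convexity engine is in place; the two conceptual ingredients are (i) strong convexity upgrades the qualitative Jensen gap into a quantitative variance control, and (ii) a variance that is uniformly small forces every block mean to be near the global mean.

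First I would invoke the strongly-convex refinement of Jensen's inequality (Theorem~4 of \citet{Merentes2010}) with uniform weights $t_i=1/N$. Being careful with the modulus convention --- a function that is strongly convex with modulus $c$ in their sense is $2c$-strongly-convex in ours, so here $c=m/2$ --- this yields
$$f(\tilde{x})\leq \frac{1}{N}\sum_{i=1}^N f(x_i)-\frac{m}{2N}\sum_{i=1}^N (x_i-\tilde{x})^2.$$
Combining with the hypothesis $\frac{1}{N}\sum_i f(x_i)\leq f(\tilde{x})+\epsilon$ and cancelling $f(\tilde{x})$ from both sides gives the variance bound $\frac{1}{N}\sum_{i=1}^N (x_i-\tilde{x})^2\leq \frac{2\epsilon}{m}$, which is the only place strong convexity enters.

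Next I would set $D=\sum_{i\in S}(x_i-\tilde{x})$; since the $x_i$ are centered about $\tilde{x}$ we have $\sum_{i\notin S}(x_i-\tilde{x})=-D$. Splitting the total sum of squares across $S$ and its complement and applying Cauchy--Schwarz (equivalently convexity of $t\mapsto t^2$) to each block bounds it below by $\frac{D^2}{|S|}+\frac{D^2}{N-|S|}$; writing $|S|=\delta N$ this equals $\frac{D^2}{N\delta(1-\delta)}$. Feeding in the variance bound in the form $\sum_i (x_i-\tilde{x})^2\leq \frac{2\epsilon N}{m}$ gives $|D|\leq N\sqrt{\frac{2\epsilon\delta(1-\delta)}{m}}$.

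Finally, since the subset mean is $\frac{1}{|S|}\sum_{i\in S} x_i=\tilde{x}+D/|S|$, dividing the bound on $|D|$ by $|S|=\delta N$ produces the two-sided estimate
$$\left|\frac{1}{|S|}\sum_{i\in S} x_i-\tilde{x}\right|\leq \frac{1}{\delta N}\cdot N\sqrt{\frac{2\epsilon\delta(1-\delta)}{m}}=\sqrt{\frac{2\epsilon(1-\delta)}{m\delta}},$$
which is exactly the claim. I do not expect a serious obstacle here: the argument is short and self-contained, and the only genuinely delicate point is the factor-of-two bookkeeping between the two strong-convexity conventions, which must be tracked consistently so that the constant in the final radical comes out as $\frac{2\epsilon(1-\delta)}{m\delta}$ rather than half or twice that.
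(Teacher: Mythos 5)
Your proposal is correct and follows essentially the same route as the paper's proof: both invoke Theorem~4 of \citet{Merentes2010} with uniform weights (with the same modulus-convention bookkeeping) to obtain the variance bound $\frac{1}{N}\sum_i(x_i-\tilde{x})^2\leq \frac{2\epsilon}{m}$, then split the sum of squares over $S$ and its complement via Cauchy--Schwarz to bound $D=\sum_{i\in S}(x_i-\tilde{x})$ and divide by $|S|=\delta N$. No substantive differences.
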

\begin{proof}
    For the proof, we use a result from~\cite{Merentes2010} which bounds the Jensen inequality gap using the variance of the variables for strongly convex functions:
    \begin{lemma}[Theorem 4 from~\cite{Merentes2010}]
        If $f: I\rightarrow \mathbb{R}$ is strongly convex with modulus $c$, then
        $$f\left(\sum_{i=1}^nt_ix_i\right)\leq \sum_{i=1}^n t_i f(x_i)-c\sum_{i=1}^n t_i(x_i-\bar{x})^2$$
        for all $x_1,\dots, x_n \in I$, $t_1,\dots,t_n>0$ with $t_1+\dots+t_n=1$ and $\bar{x}=t_1x_1+\dots+t_nx_n$
    \end{lemma}
    In the original definition of the authors, a strongly convex function with modulus $c$ is equivalent to a $2c$-strongly-convex function. We can apply $t_i=\frac{1}{N}$ for all $i$ and substitute the definition for strong convexity measure to obtain the following corollary:
    \begin{corollary}
    \label{jensengap}
    If $f: I\rightarrow \mathbb{R}$ is $m$-strongly-convex on $\mathcal{I}$, and 
    $$\frac{1}{N}\sum_{i=1}^Nf(x_i) = f\left(\frac{1}{N}\sum_{i=1}^N x_i\right)+\epsilon$$
    for $x_1,\dots, x_N \in \mathcal{I}$, then $\frac{1}{N}\sum_i(x_i-\bar{x})^2\leq \frac{2\epsilon}{m}$
    \end{corollary}
    From \ref{jensengap}, we know that $\frac{1}{N} \sum_{i=1}^n (x_i-\tilde{x})^2\leq \frac{2\epsilon}{m}$. Let $D=\sum_{i\in S}(x_i-\tilde{x})$, by the convexity of $x^2$, there is
    \begin{align*}
        \sum_{i=1}^n (x_i-\tilde{x})^2&=\sum_{i\in S}(x_i-\tilde{x})^2+\sum_{i\notin S}(x_i-\tilde{x})^2\\
        &\geq |S|(\frac{1}{|S|}\sum_{i\in S}(x_i-\tilde{x}))^2+ (N-|S|)(\frac{1}{N-|S|}\sum_{i\notin S}(x_i-\tilde{x}))^2\\
        &= \frac{1}{S}(\sum_{i\in S}(x_i-\tilde{x}))^2+\frac{1}{N-|S|}(\sum_{i\notin S}(x_i-\tilde{x}))^2\\
        &=\frac{1}{S}D^2+\frac{1}{N-|S|}(-D)^2\\
        &=\frac{D^2}{N}(\frac{1}{\delta}+\frac{1}{1-\delta})\\
        &=\frac{D^2}{N}(\frac{1}{\delta(1-\delta)})
    \end{align*}
    Therefore $\frac{D^2}{N}(\frac{1}{\delta(1-\delta)})\leq \frac{2\epsilon N}{m}$, and $|D|\leq \sqrt{\frac{2\epsilon\delta(1-\delta)N^2}{\lambda}}$. Using $\frac{1}{|S|}\sum_{i\in S} x_i=\frac{1}{|S|}(|S|\tilde{x}+D)$ and $|S|=\delta N$ completes the proof.
\end{proof}
% \subsection{Proof of Lemma \ref{bnprop}}

\subsection{Proof of Theorem \ref{main}}

We first present several lemmas that facilitate the proof technique used in the main proof. Our first lemma in this section tighens Lemma \ref{jensengap} specifically for the function $e^x$ and only provides the upper bound. Note that, within any predefined range $[a,b]$, $\exp(x)$ can only be guaranteed to be $e^a$ strongly convex, which may be bad if the lower bound $a$ is small or does not exist. Our further result in the following lemma shows that we can provide a better upper bound of the subset mean for the exponential function that is dependent on $\exp(\tilde{x})$ and does not require other prior knowledge of the range of $x_i$:
\begin{lemma}
    \label{explemma}
    Let $\{x_i\}_{i=1}^N\subset \mathbb{R}$ be any set of $N$ real numbers, let $\tilde{x}=\frac{1}{N}\sum_{i=1}^N x_i$ be the mean over all $x_i$. If $$\frac{1}{N}\sum_{i=1}^N \exp(x_i)\leq \exp(\tilde{x})+\epsilon$$ then for any subset $S\subseteq [N]$, let $\delta=\frac{|S|}{N}$, the there is $$\frac{1}{|S|}\sum_{i\in S} x_i\leq \tilde{x}+\sqrt{\frac{2\epsilon}{\delta\exp(\tilde{x})}}.$$
\end{lemma}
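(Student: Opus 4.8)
The plan is to bound how far the subset mean $\frac{1}{|S|}\sum_{i\in S}x_i$ can exceed $\tilde{x}$ by showing that any large positive deviation would force the Jensen gap $\frac{1}{N}\sum_i\exp(x_i)-\exp(\tilde{x})$ above $\epsilon$. I would set $D=\sum_{i\in S}(x_i-\tilde{x})$, so that the subset mean equals $\tilde{x}+D/|S|$ and the target bound is equivalent to $D\le N\sqrt{2\delta\epsilon/\exp(\tilde{x})}$. Since the claim is one-sided, the case $D\le 0$ is immediate (the subset mean is then at most $\tilde{x}$), so I would assume $D>0$.

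First I would split $\sum_{i=1}^N\exp(x_i)$ over $S$ and its complement and apply Jensen's inequality to each group, using convexity of $\exp$, to obtain $\sum_{i\in S}\exp(x_i)\ge |S|\exp(\tilde{x}+D/|S|)$ and $\sum_{i\notin S}\exp(x_i)\ge(N-|S|)\exp(\tilde{x}-D/(N-|S|))$; here the complement average is $\tilde{x}-D/(N-|S|)$ because all deviations from $\tilde{x}$ sum to zero. The key step is then the \emph{asymmetric} use of two elementary bounds after factoring out $\exp(\tilde{x})$: on the $S$ term I would apply $\exp(t)\ge 1+t+t^2/2$, which is valid because the argument $t=D/|S|$ is positive, while on the complement term I would only use $\exp(t)\ge 1+t$. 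The two linear contributions $+D$ and $-D$ cancel (again because deviations sum to zero), leaving $\sum_i\exp(x_i)\ge(N+D^2/(2|S|))\exp(\tilde{x})$.

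Combining this with the hypothesis $\sum_i\exp(x_i)\le N\exp(\tilde{x})+N\epsilon$ gives $\exp(\tilde{x})\,D^2/(2|S|)\le N\epsilon$, i.e. $D^2\le 2|S|N\epsilon/\exp(\tilde{x})$; substituting $|S|=\delta N$ yields $D\le N\sqrt{2\delta\epsilon/\exp(\tilde{x})}$, and dividing by $|S|=\delta N$ produces exactly $\frac{1}{|S|}\sum_{i\in S}x_i=\tilde{x}+D/|S|\le\tilde{x}+\sqrt{2\epsilon/(\delta\exp(\tilde{x}))}$, as claimed.

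The main subtlety is precisely this asymmetry in the two Taylor-type bounds. Retaining the quadratic term on $S$ is what manufactures the $D^2$ needed to control the deviation, and the positivity $D>0$ is exactly what licenses $\exp(t)\ge 1+t+t^2/2$ at $t=D/|S|$; on the complement the argument is negative, where that quadratic inequality fails, so only the linear bound is available. This is why the lemma yields a one-sided (upper) bound and, unlike the generic strong-convexity estimate of Corollary~\ref{jensengap}, requires no lower bound on the range of the $x_i$: the strong-convexity constant of $\exp$ on an interval unbounded below degenerates to zero, whereas here the curvature actually used is anchored at $\exp(\tilde{x})$.
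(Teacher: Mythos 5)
Your proposal is correct and follows essentially the same route as the paper's own proof: the same decomposition of $\sum_i \exp(x_i)$ over $S$ and its complement, Jensen's inequality on each group, and the asymmetric second-order/first-order bounds $\exp(t)\ge 1+t+t^2/2$ (for $t=D/|S|>0$) and $\exp(t)\ge 1+t$, yielding the identical estimate $D^2\le 2|S|N\epsilon/\exp(\tilde{x})$. Your closing remark correctly identifies why this anchors the curvature at $\exp(\tilde{x})$ rather than at a (possibly degenerate) lower endpoint of the range.
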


\begin{proof}
Let $D=\sum_{i\in S} (x_i-\tilde{x})$. Note that if $D<0$ then the upper bound is obviously satisfied since the subset mean will be smaller than the global mean. Therefore, we only consider the case when $D>0$
\begin{align*}
    \sum_{i=1}^N \exp(x_i)&= \sum_{i\in S}\exp(x_i)+\sum_{i\notin S}\exp(x_i)\\
    &\geq |S|\exp(\frac{1}{|S|}\sum_{i\in S}x_i)+(N-|S|)\exp(\frac{1}{N-|S|}\sum_{i\notin S}x_i)\\
    &\geq |S|\exp(\tilde{x}+\frac{D}{|S|})+(N-|S|)\exp(\tilde{x}-\frac{D}{N-|S|})\\
    &\geq |S|\exp(\tilde{x})(1+\frac{D}{|S|}+\frac{D^2}{2|S|^2})+(N-|S|)\exp(\tilde{x})(1-\frac{D}{N-|S|})\\
    &=(N+\frac{D^2}{2|S|})\exp(\tilde{x})\\
    N\exp(\tilde{x})+N\epsilon&\geq (N+\frac{D^2}{2|S|})\exp(\tilde{x})\\
    D^2&\leq \frac{2|S|N\epsilon}{\exp(\tilde{x})}\\
    D&\leq N\sqrt{\frac{2\delta\epsilon}{\exp(\tilde{x})}}
\end{align*}
Using $\frac{1}{|S|}\sum_{i\in S} x_i=\frac{1}{|S|}(|S|\tilde{x}+D)$ and $|S|=\delta N$ completes the proof.
\end{proof}

Our next lemma focuses on a property of Batch Normalization: we show that BN effectively normalizes the quadratic average of the vector norms.
\begin{lemma}% [Restatement of Lemma \ref{bnprop}]
\label{bnprop}
 Let $\{\mathbf{h}_i\}_{i=1}^N$ be a set of feature vectors immediately after Batch Normalization with variance vector $\boldsymbol{\gamma}$ and bias term $\boldsymbol{\beta}=0$ (i.e. $\mathbf{h}_i=BN(\mathbf{x}_i)$ for some $\{\mathbf{x}_i\}_{i=1}^N$). Then
$$\sqrt{\frac{1}{N}\sum_{i=1}^N \|\mathbf{h}_i\|_2^2}=\|\boldsymbol \gamma\|_2$$
\end{lemma}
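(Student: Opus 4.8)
The plan is to reduce the statement to a coordinate-wise identity, exploiting the defining property of batch normalization: along each dimension it rescales the centered data by exactly the inverse of its empirical standard deviation, so that the denominator cancels the empirical second moment of the centered coordinate. First I would expand the quadratic average of the squared norms coordinate-wise and swap the order of the two summations, writing
$$\frac{1}{N}\sum_{i=1}^N \|\mathbf{h}_i\|_2^2 = \sum_{k=1}^d \frac{1}{N}\sum_{i=1}^N (h_i^{(k)})^2.$$

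Next, for a fixed coordinate $k$, I would substitute the batch normalization formula with zero bias, namely $h_i^{(k)} = \gamma^{(k)}\,(x_i^{(k)}-\mu^{(k)})/\sigma^{(k)}$, where $\mu^{(k)}$ and $(\sigma^{(k)})^2$ are the empirical mean and variance of the $k$-th coordinate over the batch. Pulling the constant factor $(\gamma^{(k)})^2/(\sigma^{(k)})^2$ out of the inner sum leaves $\frac{1}{N}\sum_{i=1}^N (x_i^{(k)}-\mu^{(k)})^2$, which is precisely $(\sigma^{(k)})^2$ by the definition of the empirical variance. The two variance factors cancel exactly, so the inner sum collapses to $(\gamma^{(k)})^2$. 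Summing over $k$ then gives $\sum_{k=1}^d (\gamma^{(k)})^2 = \|\boldsymbol{\gamma}\|_2^2$, and taking square roots yields the claim.

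There is no real obstacle here; the only point worth flagging is the cancellation in the previous step, which is exactly the sense in which batch normalization ``normalizes'' each dimension to unit variance before the affine rescaling by $\gamma^{(k)}$. Because the bias is assumed to be zero, the centered and rescaled coordinates have empirical second moment exactly $(\gamma^{(k)})^2$, independent of the input distribution. I would note for completeness that the argument relies on the variance normalization matching the BN definition given earlier (no additive constant in $\sigma^{(k)}$ and the same $\frac{1}{N}$ averaging), and that it uses no cross-coordinate structure, so the identity holds verbatim for any input vectors $\{\mathbf{x}_i\}_{i=1}^N$.
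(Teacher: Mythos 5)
Your proof is correct and follows essentially the same coordinate-wise computation as the paper: the normalized coordinates have unit empirical second moment, so rescaling by $\gamma^{(k)}$ gives $(\gamma^{(k)})^2$ per coordinate, and summing over $k$ yields $\|\boldsymbol{\gamma}\|_2^2$. The only bookkeeping the paper adds is that when the $N$ vectors are normalized in several mini-batches of possibly different sizes (with per-batch statistics), the identity holds within each batch and hence for the weighted average over all batches; your argument covers this verbatim once applied batch by batch.
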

\begin{proof}
    Let $\bg$ be the variance vector for the Batch Normalization layer, and consider a single batch $\{\x_i\}_{i=1}^B$ be a batch of $B$ vectors, and $$h_i^{(k)}=\frac{x^{(k)}_i-\tilde{x}^{(k)}}{\sigma^{(k)}}\times \gamma^{(k)}$$ for all $B$. By the linearity of mean and standard deviation, $\hat{x}^{(k)}_i=\frac{x^{(k)}_i-\tilde{x}^{(k)}}{\sigma^{(k)}_\x}$ must have mean 0 and standard deviation 1. As a result, $\sum_{i=1}^B \hat{x}^{(k)}_i=0$ and $\frac{1}{B}\sum_{i=1}^B (\hat{x}^{(k)}_i)^2=1$. Therefore, $$\sum_{i=1}^B (h_i^{(k)})^2=\sum_{i=1}^B \gamma^{(k)}(\hat{x}^{(k)}_i)^2=B(\gamma^{(k)})^2$$$$\sum_{i=1}^B \|\h_i\|^2=\sum_{k=1}^d\sum_{i=1}^B (h_i^{(k)})^2=\sum_{k=1}^d\sum_{i=1}^B \gamma^{(k)}(\hat{x}^{(k)}_i)^2=\sum_{k=1}^dB(\gamma^{(k)})^2=B\|\bg\|^2$$\\
    
Now, Consider a set of $N$ vectors divided into $m$ batches of size $\{B_j\}_{j=1}^m$. (This accounts for the fact that during training, the last mini-batch may have a different size than the other mini-batches if the number of training data is not a multiple of $B$). Then, $$\sum_{i=1}^N \|\h_i\|^2=\sum_{j=1}^m \sum_{i=1}^{B_j} \|\h_{j,i}\|^2=\sum_{j=1}^m B_j\|\bg\|^2=N\|\bg\|^2$$ Therefore, $\sqrt{\frac{1}{N}\sum_{i=1}^N \|\h_i\|^2}=\|\bg\|$
\end{proof}

Directly approaching the average intra-class and inter-class cosine similarity of vector set(s) is a relatively difficult task. Our following lemma shows that the inter-class and inter-class cosine similarities can be computed as the norm and dot product of the vectors $\tilde{\bar{\mathbf{h}}}_c$, respectively, where $\tilde{\bar{\mathbf{h}}}_c$ is the mean {\it normalized} vector among all vectors in a class.
\begin{lemma}
    \label{avgnormvec}
    Let $c,c'$ be 2 classes, each containing $N$ feature vectors $\h_{c,i}\in \mathbb{R}^d$. Define the average intra-class cosine similarity of picking two vectors from the same class $c$ as 
    $$\mathit{intra}_c=\frac{1}{N^2}\sum_{i=1}^N\sum_{j=1}^N \cos_{\angle}(\mathbf{h}_{c,i}, \mathbf{h}_{c,j})$$
    and the intra-class cosine similarity between two classes $c,c'$ is defined as the average cosine similarity of picking one feature vector of class $c$ and another from class $c'$ as
    $$\mathit{inter}_c=\frac{1}{N^2}\sum_{i=1}^N\sum_{j=1}^N \cos_{\angle}(\mathbf{h}_{c,i}, \mathbf{h}_{c',j})$$
    Let $\tilde{\bar{\mathbf{h}}}_c=\frac{1}{N}\sum_{i=1}^{N}\frac{\mathbf{h}_{c,i}}{\|\mathbf{h}_{c,i}\|}$. Then $\mathit{intra}_c=\|\tilde{\bar{\mathbf{h}}}_c\|^2$ and $\mathit{inter}_{c,c'}=\tilde{\bar{\mathbf{h}}}_c\cdot \tilde{\bar{\mathbf{h}}}_{c'}$
\end{lemma}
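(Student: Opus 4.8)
The plan is to reduce both identities to the bilinearity of the Euclidean inner product. The key observation is that the cosine similarity $\cos_{\angle}(\mathbf{h}_{c,i}, \mathbf{h}_{c,j})$ is, by its very definition, nothing but the ordinary inner product of the two \emph{unit-normalized} feature vectors. Writing $\bar{\mathbf{h}}_{c,i} = \mathbf{h}_{c,i}/\|\mathbf{h}_{c,i}\|$, we have $\cos_{\angle}(\mathbf{h}_{c,i}, \mathbf{h}_{c,j}) = \bar{\mathbf{h}}_{c,i} \cdot \bar{\mathbf{h}}_{c,j}$, so that all norm information is discarded and only the normalized directions survive.

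With this substitution in hand, I would rewrite $\mathit{intra}_c$ as a double sum of inner products of normalized vectors and then factor it. Because the inner product is bilinear, a double sum of the form $\sum_{i}\sum_{j} \bar{\mathbf{h}}_{c,i}\cdot\bar{\mathbf{h}}_{c,j}$ separates as $\left(\sum_i \bar{\mathbf{h}}_{c,i}\right)\cdot\left(\sum_j \bar{\mathbf{h}}_{c,j}\right)$. Distributing the normalization constant $1/N^2$ as two factors of $1/N$, each inner sum becomes exactly the mean normalized vector $\tilde{\bar{\mathbf{h}}}_c = \frac{1}{N}\sum_i \bar{\mathbf{h}}_{c,i}$, which yields $\mathit{intra}_c = \tilde{\bar{\mathbf{h}}}_c\cdot\tilde{\bar{\mathbf{h}}}_c = \|\tilde{\bar{\mathbf{h}}}_c\|^2$.

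The inter-class identity follows by the identical manipulation, the only difference being that the two index ranges now refer to the distinct classes $c$ and $c'$; the bilinear factorization then gives $\left(\frac{1}{N}\sum_i \bar{\mathbf{h}}_{c,i}\right)\cdot\left(\frac{1}{N}\sum_j \bar{\mathbf{h}}_{c',j}\right) = \tilde{\bar{\mathbf{h}}}_c\cdot\tilde{\bar{\mathbf{h}}}_{c'}$. There is no genuine obstacle here: the entire content is the single algebraic step of recognizing cosine similarity as an inner product of unit vectors and then applying bilinearity to decouple the double sum. The only point requiring minor care is the bookkeeping of the $1/N$ factors, so that the \emph{averaged} (rather than merely summed) normalized vector $\tilde{\bar{\mathbf{h}}}_c$ emerges cleanly from the $1/N^2$ prefactor.
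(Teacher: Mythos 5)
Your proposal is correct and follows exactly the same route as the paper's proof: rewrite each cosine similarity as the inner product of the unit-normalized vectors $\bar{\mathbf{h}}_{c,i}\cdot\bar{\mathbf{h}}_{c,j}$, then use bilinearity to factor the double sum into a product of the two averaged normalized vectors. No gaps; the bookkeeping of the $1/N$ factors is handled as the paper does.
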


\begin{proof}
For the intra-class cosine similarity,
\begin{align*}
    \mathit{intra}_c &= \frac{1}{N^2}\sum_{i=1}^N\sum_{j=1}^N \bar{\mathbf{h}}_{c,i}\cdot \bar{\mathbf{h}}_{c,j}\\
    &= \frac{1}{N^2}\sum_{i=1}^N\sum_{j=1}^N \frac{\mathbf{h}_{c,i}}{\|\mathbf{h}_{c,i}\|}\cdot \frac{\mathbf{h}_{c,j}}{\|\mathbf{h}_{c,j}\|}\\
    &= \frac{1}{N^2}\sum_{i=1}^N\sum_{j=1}^N \frac{\mathbf{h}_{c,i}\cdot \mathbf{h}_{c,j}}{\|\mathbf{h}_{c,i}\|\|\mathbf{h}_{c,j}\|}\\
    &= \left(\frac{1}{N}\sum_{i=1}^N \frac{\mathbf{h}_{c,i}}{\|\mathbf{h}_{c,i}\|}\right)\cdot\left(\frac{1}{N}\sum_{j=1}^N \frac{\mathbf{h}_{c,j}}{\|\mathbf{h}_{c,j}\|}\right)\\
    &= \|\tilde{\bar{\mathbf{h}}}_c\|^2
\end{align*}

and for the inter-class cosine similarity,
\begin{align*}
    \mathit{inter}_{c,c'} &= \frac{1}{N^2}\sum_{i=1}^N\sum_{j=1}^N \bar{\mathbf{h}}_{c,i}\cdot \bar{\mathbf{h}}_{c',j}\\
    &= \frac{1}{N^2}\sum_{i=1}^N\sum_{j=1}^N \frac{\mathbf{h}_{c,i}}{\|\mathbf{h}_{c,i}\|}\cdot \frac{\mathbf{h}_{c',j}}{\|\mathbf{h}_{c',j}\|}\\
    &= \frac{1}{N^2}\sum_{i=1}^N\sum_{j=1}^N \frac{\mathbf{h}_{c,i}\cdot \mathbf{h}_{c',j}}{\|\mathbf{h}_{c,i}\|\|\mathbf{h}_{c',j}\|}\\
    &= \left(\frac{1}{N}\sum_{i=1}^N \frac{\mathbf{h}_{c,i}}{\|\mathbf{h}_{c,i}\|}\right)\cdot\left(\frac{1}{N}\sum_{j=1}^N \frac{\mathbf{h}_{c',j}}{\|\mathbf{h}_{c',j}\|}\right)\\
    &= \tilde{\bar{\mathbf{h}}}_c\cdot \tilde{\bar{\mathbf{h}}}_{c'}
\end{align*}
\end{proof}

We prove the intra-class cosine similarity by first showing that the norm of the mean (un-normalized) class-feature vector for a class is near the quadratic average of feature means (i.e., $\|\tilde{\h}_c\|=\|\frac{1}{N}\sum_{i=1}^N\h_{c,i}\|\approx \sqrt{\frac{1}{N}\sum_{i=1}^N\|\h_{c,i}\|^2}$). However, to show intra-class cosine similarity, we need instead a bound on $\|\tilde{\bar{\h}}_c\|=\|\frac{1}{N}\sum_{i=1}^N\bar{\h}_{c,i}\|$ (recall that $\bar{\rvv}=\frac{\rvv}{\|\rvv\|}$ denotes the normalized vector). The following lemma provides a conversion between these requirements:
\begin{lemma}
    \label{avgintralemma}
    Let unit vector $\mathbf{u} \in \mathbb{R}^d, \|\rvu\|=1$, and let $\{\mathbf{v}_i\}_{i=1}^N \subset \mathbb{R}^d$ be a set of vectors such that $\frac{1}{N}\sum_{i=1}^{N}\|\mathbf{v}_i\|^2 \leq \alpha^2$. Define the mean of the vectors $\mathbf{v}_i$ as $\tilde{\mathbf{v}} := \frac{1}{N}\sum_{i=1}^{N} \mathbf{v}_i$.
    
    Suppose that
    \[
    \langle \mathbf{u}, \tilde{\mathbf{v}} \rangle = \frac{1}{N} \sum_{i=1}^N \langle \mathbf{u}, \mathbf{v}_i \rangle \geq c,
    \]
    where $\frac{\alpha }{\sqrt{2}} \leq c \leq \alpha $. Define $\bar{\mathbf{v}}_i = \frac{\mathbf{v}_i}{\|\mathbf{v}_i\|}$ and $\tilde{\bar{\mathbf{v}}} := \frac{1}{N} \sum_{i=1}^N \bar{\mathbf{v}}_i$.

    Then,
    \[
    \|\tilde{\bar{\mathbf{v}}}\| \geq 2 \left(\frac{c}{\alpha }\right)^2 - 1.
    \]
\end{lemma}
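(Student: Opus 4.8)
The plan is to lower-bound the scalar projection $\langle \mathbf{u}, \tilde{\bar{\mathbf{v}}}\rangle$ and then invoke Cauchy--Schwarz together with $\|\mathbf{u}\|=1$ to get $\|\tilde{\bar{\mathbf{v}}}\| \ge \langle \mathbf{u}, \tilde{\bar{\mathbf{v}}}\rangle$, which reduces the whole statement to a scalar estimate. Since $\langle \mathbf{u}, \tilde{\bar{\mathbf{v}}}\rangle = \frac1N\sum_{i=1}^N \langle\mathbf{u},\bar{\mathbf{v}}_i\rangle$, I only need to control the average of the cosines $\langle\mathbf{u},\bar{\mathbf{v}}_i\rangle = \langle\mathbf{u},\mathbf{v}_i\rangle/\|\mathbf{v}_i\|$. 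First I would split the index set into the block $P = \{i:\langle\mathbf{u},\mathbf{v}_i\rangle>0\}$ and its complement $P^c$, bound the contribution of each block separately, and recombine.

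For the positive block the two usable facts are that the positive inner products already carry all the mass, $\sum_{i\in P}\langle\mathbf{u},\mathbf{v}_i\rangle \ge \sum_{i=1}^N\langle\mathbf{u},\mathbf{v}_i\rangle \ge Nc$ (the dropped terms are nonpositive, so this only helps), together with the norm budget $\sum_{i\in P}\|\mathbf{v}_i\|^2 \le N\alpha^2$. The cleanest route to the cosine sum is the Engel/Titu form of Cauchy--Schwarz:
\[
\sum_{i\in P}\frac{\langle\mathbf{u},\mathbf{v}_i\rangle}{\|\mathbf{v}_i\|}
= \sum_{i\in P}\frac{\langle\mathbf{u},\mathbf{v}_i\rangle^2}{\|\mathbf{v}_i\|\,\langle\mathbf{u},\mathbf{v}_i\rangle}
\ge \frac{\big(\sum_{i\in P}\langle\mathbf{u},\mathbf{v}_i\rangle\big)^2}{\sum_{i\in P}\|\mathbf{v}_i\|\,\langle\mathbf{u},\mathbf{v}_i\rangle}
\ge \frac{(Nc)^2}{N\alpha^2} = N\Big(\tfrac{c}{\alpha}\Big)^2,
\]
where the denominator is controlled using $\langle\mathbf{u},\mathbf{v}_i\rangle \le \|\mathbf{v}_i\|$ (from $\|\mathbf{u}\|=1$), so that $\sum_{i\in P}\|\mathbf{v}_i\|\langle\mathbf{u},\mathbf{v}_i\rangle \le \sum_{i\in P}\|\mathbf{v}_i\|^2 \le N\alpha^2$.

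For the negative block I would use only the trivial per-term bound $\langle\mathbf{u},\bar{\mathbf{v}}_i\rangle \ge -1$, so its total contribution is at least $-|P^c| = -(N-M)$ with $M=|P|$. To turn this into something useful I need a lower bound on $M$, which follows from $Nc \le \sum_{i\in P}\langle\mathbf{u},\mathbf{v}_i\rangle \le \sum_{i\in P}\|\mathbf{v}_i\| \le \sqrt{M}\,\sqrt{\sum_{i\in P}\|\mathbf{v}_i\|^2} \le \alpha\sqrt{MN}$, giving $M\ge N(c/\alpha)^2$ and hence $N-M \le N\big(1-(c/\alpha)^2\big)$. Adding the two blocks and dividing by $N$ then yields $\langle\mathbf{u},\tilde{\bar{\mathbf{v}}}\rangle \ge (c/\alpha)^2 - \big(1-(c/\alpha)^2\big) = 2(c/\alpha)^2-1$, and the Cauchy--Schwarz reduction finishes the proof. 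The hypothesis $c\ge\alpha/\sqrt2$ is exactly what makes the right-hand side nonnegative, so the bound is not vacuous, while $c\le\alpha$ keeps $(c/\alpha)^2\le1$ so that $M\le N$ stays consistent.

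The step I expect to be the main obstacle is the positive-block estimate. The naive instinct is to lower-bound $\sum_{i\in P}\langle\mathbf{u},\mathbf{v}_i\rangle/\|\mathbf{v}_i\|$ by a generic ``sum of $a_i/b_i$'' inequality of the shape $\sum a_i/b_i \ge A\sqrt{M}/\sqrt{B}$; that bound, however, is only tight when the $a_i=\langle\mathbf{u},\mathbf{v}_i\rangle$ are all equal and can fail when the mass concentrates on a few indices, so I would avoid relying on it. The Engel-form argument is robust precisely because it exploits the extra structural constraint $\langle\mathbf{u},\mathbf{v}_i\rangle\le\|\mathbf{v}_i\|$ available here but absent from the generic inequality; getting this denominator bound right is the crux of the argument.
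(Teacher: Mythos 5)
Your proof is correct, and its skeleton matches the paper's: project everything onto $\mathbf{u}$ (the paper does this via an explicit orthogonal decomposition $\mathbf{v}_i = a_i\mathbf{u} + b_i\mathbf{w}_i$, you via Cauchy--Schwarz $\|\tilde{\bar{\mathbf{v}}}\|\geq\langle\mathbf{u},\tilde{\bar{\mathbf{v}}}\rangle$ --- the same reduction), split indices by the sign of $\langle\mathbf{u},\mathbf{v}_i\rangle$, bound each negative-block cosine by $-1$, and lower-bound $|P|$ by $N(c/\alpha)^2$ via Cauchy--Schwarz. Where you genuinely diverge is the crux estimate $\sum_{i\in P}\langle\mathbf{u},\mathbf{v}_i\rangle/\|\mathbf{v}_i\|\geq N(c/\alpha)^2$. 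The paper gets this by two successive applications of the generalized H\"older inequality (its Lemma on exponents $2/3,1/3$ and then $3/4,1/4$), routed through the intermediate quantity $\sum_{i\in P}a_i^{2/3}$. You instead write each term as $a_i^2/(\|\mathbf{v}_i\|a_i)$ and apply the Engel/Titu form of Cauchy--Schwarz, controlling the denominator with the structural fact $a_i=\langle\mathbf{u},\mathbf{v}_i\rangle\leq\|\mathbf{v}_i\|$ so that $\sum_{i\in P}\|\mathbf{v}_i\|a_i\leq\sum_{i\in P}\|\mathbf{v}_i\|^2\leq N\alpha^2$. Both yield exactly the bound $N(c/\alpha)^2$; your route is more elementary and shorter, needing only Cauchy--Schwarz rather than a generalized H\"older lemma. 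Your instinct to distrust the generic ``$\sum a_i/b_i\geq A\sqrt{M}/\sqrt{B}$'' proposition is also well placed --- that inequality is false in general (it fails when the $a_i$ concentrate on a few indices), and the paper's appendix proof indeed replaces it with the H\"older argument for precisely this reason.
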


The proof of Lemma~\ref{avgintralemma} uses a generalization of Holder's Inequality, which we state as follows.

\begin{lemma}[Generalized Holder's Inequality \cite{chen2014brief}]
\label{lemma_holder}
    For real positive exponents $\lambda_i$ satisfying $\lambda_a + \lambda_b + \cdots + \lambda_z = 1$, the following inequality holds.
    \[
    \sum_{i=1}^n |a_i|^{\lambda_a}|b_i|^{\lambda_b}\cdots |z_i|^{\lambda_z}
    \leq
    \left(\sum_{i=1}^n|a_i|\right)^{\lambda_a}\left(\sum_{i=1}^n|b_i|\right)^{\lambda_b}\cdots \left(\sum_{i=1}^n|z_i|\right)^{\lambda_z}
    \]
\end{lemma}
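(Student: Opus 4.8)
The plan is to reduce the multi-factor inequality to the weighted arithmetic–geometric mean (AM–GM) inequality applied termwise, after first normalizing each of the sequences to have unit $\ell_1$ sum. To begin I would set $A = \sum_{i=1}^n |a_i|$, $B = \sum_{i=1}^n |b_i|$, through $Z = \sum_{i=1}^n |z_i|$, and dispose of the degenerate case: if any one of these totals vanishes—say $A = 0$—then every $a_i = 0$, so each summand on the left carries the factor $|a_i|^{\lambda_a} = 0$ (here $\lambda_a > 0$ is essential), making the entire left-hand side zero while the right-hand side is nonnegative, so the inequality holds trivially. Hence I may assume $A, B, \ldots, Z > 0$.

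With all totals positive I would introduce the normalized quantities $\hat a_i = |a_i|/A$, $\hat b_i = |b_i|/B$, \ldots, $\hat z_i = |z_i|/Z$, each of which is a nonnegative sequence summing to $1$. The core estimate is the weighted AM–GM inequality: for nonnegative reals $t_a, t_b, \ldots, t_z$ and weights $\lambda_a + \cdots + \lambda_z = 1$ with every $\lambda > 0$,
\[
t_a^{\lambda_a} t_b^{\lambda_b} \cdots t_z^{\lambda_z} \leq \lambda_a t_a + \lambda_b t_b + \cdots + \lambda_z t_z,
\]
which is Jensen's inequality for the concave function $\log$, exponentiated. Applying this pointwise with $t_a = \hat a_i$, $t_b = \hat b_i$, and so on, then summing over $i$, gives
\[
\sum_{i=1}^n \hat a_i^{\lambda_a} \cdots \hat z_i^{\lambda_z} \leq \sum_{i=1}^n \left(\lambda_a \hat a_i + \cdots + \lambda_z \hat z_i\right) = \lambda_a + \cdots + \lambda_z = 1,
\]
where the final equalities use that each normalized sequence sums to $1$ and the exponents sum to $1$.

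Finally I would un-normalize. Since $|a_i|^{\lambda_a} \cdots |z_i|^{\lambda_z} = A^{\lambda_a} \cdots Z^{\lambda_z}\,\hat a_i^{\lambda_a} \cdots \hat z_i^{\lambda_z}$, summing over $i$ and inserting the bound just derived yields
\[
\sum_{i=1}^n |a_i|^{\lambda_a} \cdots |z_i|^{\lambda_z} = A^{\lambda_a}\cdots Z^{\lambda_z}\sum_{i=1}^n \hat a_i^{\lambda_a}\cdots \hat z_i^{\lambda_z} \leq A^{\lambda_a}\cdots Z^{\lambda_z},
\]
which is exactly the claimed inequality once $A, \ldots, Z$ are rewritten as $\sum_i|a_i|, \ldots, \sum_i|z_i|$.

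The only step requiring genuine care is the weighted AM–GM inequality itself: I expect the main obstacle to be presenting its justification cleanly—via concavity of $\log$ together with the boundary convention that a factor $t^{\lambda}$ with $t = 0$ and $\lambda > 0$ contributes $0$—and verifying that the termwise application remains valid when some individual $\hat a_i$ vanish. An alternative route is induction on the number of factors, peeling off one sequence at a time and invoking the classical two-sequence Hölder inequality with conjugate exponents $p, q$; but the normalized AM–GM argument is more direct and avoids the bookkeeping of repeatedly recomputing conjugate exponents.
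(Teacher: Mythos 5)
Your proof is correct, but there is nothing in the paper to compare it against: the paper does not prove this lemma at all, it simply cites it as a known result from the reference on H\"older's inequality and uses it as a black box in the proof of Lemma~\ref{avgintralemma}. Your argument---dispose of the degenerate case where some column sums to zero, normalize each sequence by its $\ell_1$ sum, apply the weighted AM--GM inequality $t_a^{\lambda_a}\cdots t_z^{\lambda_z}\le \lambda_a t_a+\cdots+\lambda_z t_z$ termwise, sum over $i$ using that each normalized sequence and the exponents both sum to $1$, and then un-normalize---is the standard textbook proof of the generalized H\"older inequality, and every step checks out: the boundary convention $0^{\lambda}=0$ for $\lambda>0$ handles vanishing entries both in the degenerate case and inside the pointwise AM--GM step, and the final rescaling identity $|a_i|^{\lambda_a}\cdots|z_i|^{\lambda_z}=A^{\lambda_a}\cdots Z^{\lambda_z}\,\hat a_i^{\lambda_a}\cdots \hat z_i^{\lambda_z}$ is exact. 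So your proposal supplies a clean, self-contained justification of a step the paper outsources to a citation; the alternative induction-on-factors route you mention would also work but, as you say, is more cumbersome.
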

Now we are ready to prove Lemma~\ref{avgintralemma}.
\begin{proof}[Proof of Lemma~\ref{avgintralemma}]
    We divide all indices $i\in[N]$ into 2 sets: $$pos=\{i\in [N]|\langle\mathbf{u}, \mathbf{v}_i\rangle \geq 0\}$$ and $$neg=\{i\in [N]|\langle\mathbf{u}, \mathbf{v}_i\rangle < 0\}$$ 
    Denote $I = |pos|$ as the number of indices $i$ such that $\langle u,v_i\rangle \geq 0$. We assume wlog that $pos = \{1,2,\cdots,I\}$ and $neg = \{I+1,I+2,\cdots,N\}$.  Denote $a_i = \langle u,v_i\rangle$. Then we can decompose each vector $v_i$ as follows.
    \[
    v_i = a_i u + b_iw_i,\text{ where the unit vector }w_i\bot u, b_i\in\R
    \]
    Then by normalizing $v_i$ we get
    \[
    \bar{v_i} = \frac{a_i}{\sqrt{a_i^2 + b_i^2}}u + \frac{b_i}{\sqrt{a_i^2 + b_i^2}}w_i
    \]
    Thus we know the vector $\tilde{\bar{v}}$ can be represented as 
    \[
    \tilde{\bar{v}} = \frac{1}{N} \sum_{i=1}^N \frac{a_i}{\sqrt{a_i^2 + b_i^2}}u + 
    \frac{1}{N}\sum_{i=1}^N
    \frac{b_i}{\sqrt{a_i^2 + b_i^2}}w_i
    \]
    Its norm can be lower bounded by
    \begin{align*}
         \|\tilde{\bar{v}}\|^2 = 
         \left\lVert \frac{1}{N} \sum_{i=1}^N \frac{a_i}{\sqrt{a_i^2 + b_i^2}}u \right\lVert^2
        + \left\lVert  
        \frac{1}{N}\sum_{i=1}^N
        \frac{b_i}{\sqrt{a_i^2 + b_i^2}}w_i
        \right\lVert^2
        \geq \frac{1}{N^2}\left(\sum_{i=1}^N \frac{a_i}{\sqrt{a_i^2\ + b_i^2}}\right)^2 
    \end{align*}
    Take the square root of both side, and we get
    \begin{align}
    \label{eq:lemmac6_Nnormv}
        N\|\tilde{\bar{v}}\|\geq  \sum_{i=1}^N \frac{a_i}{\sqrt{a_i^2 + b_i^2}}
    =\sum_{i=1}^I \frac{a_i}{\sqrt{a_i^2 + b_i^2}} + \sum_{i=I+1}^N \frac{a_i}{\sqrt{a_i^2 + b_i^2}}
    \end{align}
    Since for any $i\geq I+1$, $a_i > 0$, and also we know for any $x,y\geq 0$,
    \[
    \left|\frac{x}{\sqrt{x^2+y^2}}\right|\leq 1
    \]
    Thus for any $i\geq I+1$,
    \[
    \frac{a_i}{\sqrt{a_i^2 + b_i^2}}\geq -1
    \]
    By substituting this into Equation~\ref{eq:lemmac6_Nnormv}, we have
    \begin{align}
    \label{eq:lemmac6_Nnormv2}
        N\|\tilde{\bar{v}}\|\geq  \sum_{i=1}^N \frac{a_i}{\sqrt{a_i^2 + b_i^2}}
    \geq \sum_{i=1}^I \frac{a_i}{\sqrt{a_i^2 + b_i^2}} -N+I
    \end{align}
    Since $\langle u, \tilde{v}\rangle \geq c$, we have
    \[
    \sum_{i=1}^N a_i \geq Nc
    \]
    Consequently,
    \begin{align}
    \label{eq:lemmac6_sum_ai}
        \sum_{i=1}^I a_i > \sum_{i=1}^N a_i \geq Nc
    \end{align}
    We also have $\frac{1}{N}\sum_{i=1}^N \|v_i\|^2\leq \alpha^2$. So we have
    \begin{align}
    \label{eq:lemmac6_sum_ai_sqaure}
        \sum_{i=1}^I a_i^2\leq 
        \sum_{i=1}^I(a_i^2 +b_i^2)\leq 
        \sum_{i=1}^N (a_i^2+b_i^2) \leq \sum_{i=1}^N \|v_i\|^2 \leq N\alpha^2
    \end{align}
    By Cauchy-Schwarz Inequality,
    \[
    \sum_{i=1}^I a_i^2 \sum_{i=1}^I 1
    \geq \left(\sum_{i=1}^I a_i\right)^2
    \]
    So we know
    \begin{align}
        \label{eq:lemmac6_I_lowerbound}
        I \geq \frac{\left(\sum_{i=1}^I a_i\right)^2}{\sum_{i=1}^I a_i^2}  \geq \frac{N^2c^2}{N\alpha^2} = \frac{Nc^2}{\alpha^2}
    \end{align}
    By Lemma~\ref{lemma_holder},
    \[
    \left(\sum_{i=1}^I \frac{a_i}{\sqrt{a_i^2+b_i^2}}\right)^{2/3}
    \left(\sum_{i=1}^I (a_i^2+b_i^2)\right)^{1/3}
    \geq \sum_{i=1}^I a_i ^{2/3}
    \]
    Combining with Equation~\ref{eq:lemmac6_sum_ai_sqaure}, we have
    \begin{align}
    \label{eq:lemmac6_holder1}
        \sum_{i=1}^I \frac{a_i}{\sqrt{a_i^2+b_i^2}}
        \geq 
        \sqrt{\frac{\left(\sum_{i=1}^I a_i^{2/3}\right)^3}{N\alpha^2}}
    \end{align}
    Then we apply Lemma~\ref{lemma_holder} again as follows.
    \[
    \left(\sum_{i=1}^I a_i^{2/3}\right)^{3/4}
    \left(\sum_{i=1}^I a_i^2\right)^{1/4}
    \geq \sum_{i=1}^I a_i
    \]
    Combining with Equation~\ref{eq:lemmac6_sum_ai} and Equation~\ref{eq:lemmac6_sum_ai_sqaure},
    \begin{align}
    \label{eq:lemmac6_holder2}
        \left(\sum_{i=1}^I a_i^{2/3}\right)^3
        \geq 
        \frac{(\sum_{i=1}^I a_i)^4}{\sum_{i=1}^I a_i^2}
        \geq
        \frac{N^4c^4}{N\alpha^2} = \frac{N^3c^4}{\alpha^2}
    \end{align}
    Using Equation~\ref{eq:lemmac6_holder1} and Equation~\ref{eq:lemmac6_holder2}, we have
    \[
    \sum_{i=1}^I \frac{a_i}{\sqrt{a_i^2+b_i^2}}
    \geq 
    \sqrt{\frac{N^2c^4}{\alpha^4}} = \frac{Nc^2}{\alpha^2}
    \]
    Plugging this into Equation~\ref{eq:lemmac6_Nnormv2} and apply Equation~\ref{eq:lemmac6_I_lowerbound}, we have
    \[  
    N\|\tilde{\bar{v}}\|
    \geq 
    \frac{Nc^2}{\alpha^2}-N+\frac{Nc^2}{\alpha^2} = N\left(\frac{2c^2}{\alpha^2}-1\right)
    \]
    This leads to our conclusion.
\end{proof}
To make this lemma generalize to other proofs in future work, we provide the generalized corollary of the above lemma by setting $\mathbf{u}$ to be the normalized mean vector of $\mathbf{v}$:
\begin{corollary}
        \label{intra_cor}
        Let $\{\mathbf{v}_i\}_{i=1}^N \subset \mathbb{R}^d$ such that $\frac{1}{N}\|\mathbf{v}_i\|^2\leq \alpha^2$. 
        If $$\|\tilde{\rvv}\|:=\|\frac{1}{N} \sum_{i=1}^N \mathbf{v}_i\|\geq c,$$ for $\frac{\alpha}{\sqrt{2}} \leq c \leq \alpha$ and let $\bar{\mathbf{v}}:=\frac{\mathbf{v}}{\|\mathbf{v}\|}$
        then $$\|\tilde{\bar{\mathbf{v}}}\|:=\|\frac{1}{N}\sum_{i=1}^N \bar{\mathbf{v}_i}\|\geq 2(\frac{c}{\alpha})^2-1.$$
\end{corollary}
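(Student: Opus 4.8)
The plan is to derive Corollary~\ref{intra_cor} as an immediate specialization of Lemma~\ref{avgintralemma}. The only difference between the two statements is that the lemma posits an auxiliary unit vector $\mathbf{u}$ satisfying $\langle \mathbf{u}, \tilde{\mathbf{v}} \rangle \geq c$, whereas the corollary replaces this external hypothesis with the intrinsic condition $\|\tilde{\mathbf{v}}\| \geq c$. First I would observe that, since the admissible range for $c$ demands $c \geq \alpha/\sqrt{2} > 0$, the assumption $\|\tilde{\mathbf{v}}\| \geq c$ forces $\tilde{\mathbf{v}} \neq \mathbf{0}$. Hence the unit vector $\mathbf{u} := \tilde{\mathbf{v}}/\|\tilde{\mathbf{v}}\|$ is well-defined and satisfies $\|\mathbf{u}\| = 1$.

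With this canonical choice of $\mathbf{u}$, the inner-product hypothesis of Lemma~\ref{avgintralemma} is verified automatically: $\langle \mathbf{u}, \tilde{\mathbf{v}} \rangle = \langle \tilde{\mathbf{v}}, \tilde{\mathbf{v}} \rangle / \|\tilde{\mathbf{v}}\| = \|\tilde{\mathbf{v}}\| \geq c$. The remaining hypotheses of the lemma carry over verbatim from the corollary's assumptions — namely the second-moment bound $\frac{1}{N}\sum_{i=1}^N \|\mathbf{v}_i\|^2 \leq \alpha^2$ and the range $\alpha/\sqrt{2} \leq c \leq \alpha$. Invoking Lemma~\ref{avgintralemma} with this $\mathbf{u}$ then delivers exactly $\|\tilde{\bar{\mathbf{v}}}\| \geq 2(c/\alpha)^2 - 1$, which is the claimed bound, completing the argument.

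Since the corollary is a direct instantiation of the lemma, there is no substantive obstacle in the derivation; the only point requiring a moment's care is confirming that $\tilde{\mathbf{v}}$ is nonzero, so that the normalization defining $\mathbf{u}$ is legitimate. This is guaranteed by the strict positivity $c > 0$ built into the admissible range $\alpha/\sqrt{2} \leq c \leq \alpha$, and I would state it explicitly rather than leaving it implicit, since it is the sole sanity check separating the corollary from a purely formal restatement of Lemma~\ref{avgintralemma}.
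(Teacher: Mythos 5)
Your proposal is correct and matches the paper's own proof exactly: both take $\mathbf{u} := \tilde{\mathbf{v}}/\|\tilde{\mathbf{v}}\|$, verify $\langle \mathbf{u}, \tilde{\mathbf{v}}\rangle = \|\tilde{\mathbf{v}}\| \geq c$, and invoke Lemma~\ref{avgintralemma} with $\beta = 1$. Your explicit check that $\tilde{\mathbf{v}} \neq \mathbf{0}$ (since $c \geq \alpha/\sqrt{2} > 0$) is a small but worthwhile addition the paper leaves implicit.
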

\begin{proof}
    Let $\mathbf{u}:=\frac{\tilde{\mathbf{v}}}{\|\tilde{\mathbf{v}}\|}$ then $\|\rvu\|=1$, $$\frac{1}{N}\sum_{i=1}^N\langle\frac{\tilde{\mathbf{v}}}{\|\tilde{\mathbf{v}}\|}, \rvv_i\rangle=\langle\rvu,\tilde{\rvv}\rangle=\frac{\|\tilde{\rvv}\|^2}{\|\tilde{\rvv}\|}=\|\tilde{\rvv}\|\geq c$$
    The corollary directly follows from Lemma \ref{avgintralemma} with $\beta=\|\rvu\|=1$
\end{proof}

Similarly, for inter-class cosine similarity, we have the following lemma:
\begin{lemma}
    \label{interdivide}
    Let $\mathbf{w}\in \mathbb{R}^d$, $\{\mathbf{h}_{i}\}_{i=1}^N\subset \mathbb{R}^d$. Let $\tilde{\mathbf{h}}=\frac{1}{N}\sum_{i=1}^N \mathbf{h}_i$ and $\tilde{\bar{\mathbf{h}}}=\frac{1}{N}\sum_{i=1}^N \frac{\mathbf{h}_i}{\|\mathbf{h}_i\|}$. If the following condition is satisfied:
    \begin{align*}
    \mathbf{w}\cdot\tilde{\mathbf{h}}&= c & \text{for } c<0\\
    \|\mathbf{w}\|&\leq \beta\\
    \frac{1}{N}\sum_{i=1}^n \|\mathbf{h}_i\|^2 &\leq \alpha^2\\
    \|\tilde{\mathbf{h}}\| &\geq \alpha-\frac{\epsilon}{\beta}\\
    \epsilon&\ll \alpha\beta
    \end{align*}
    Then $\cos_\angle(\mathbf{w}, \tilde{\bar{\mathbf{h}}})\leq -\frac{c}{\alpha\beta}+4(\frac{\epsilon}{\alpha\beta})^{1/3}$
\end{lemma}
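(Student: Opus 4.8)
The plan is to bound the inner product $\mathbf{w}\cdot\tilde{\bar{\mathbf{h}}}=\frac{1}{N}\sum_{i=1}^N \frac{\mathbf{w}\cdot\mathbf{h}_i}{\|\mathbf{h}_i\|}$ directly and only pass to the cosine at the very end. Write $b_i=\|\mathbf{h}_i\|$ and $q_i=\frac{\mathbf{w}\cdot\mathbf{h}_i}{\|\mathbf{h}_i\|}$, so the target is $\frac1N\sum_i q_i$ while the hypothesis $\mathbf{w}\cdot\tilde{\mathbf{h}}=c$ becomes $\frac1N\sum_i q_i b_i=c$. The one per-term fact I would lean on is sample-level Cauchy--Schwarz, $|\mathbf{w}\cdot\mathbf{h}_i|\le\|\mathbf{w}\|\,\|\mathbf{h}_i\|\le\beta b_i$, which gives the uniform cap $|q_i|\le\beta$. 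This cap is precisely what controls the terms with small $\|\mathbf{h}_i\|$ (where $q_i$ would otherwise blow up), so that no case split on the size of $b_i$ is needed.

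First I would extract norm concentration from the two norm hypotheses. The triangle inequality gives $\|\tilde{\mathbf{h}}\|\le\frac1N\sum_i b_i$, so the assumption $\|\tilde{\mathbf{h}}\|\ge\alpha-\epsilon/\beta$ yields $\frac1N\sum_i b_i\ge\alpha-\epsilon/\beta$. Combined with $\frac1N\sum_i b_i^2\le\alpha^2$, this bounds the mean-square deviation of the norms from $\alpha$:
\[
\frac1N\sum_{i=1}^N (b_i-\alpha)^2=\frac1N\sum_i b_i^2-2\alpha\Big(\tfrac1N\sum_i b_i\Big)+\alpha^2\le\alpha^2-2\alpha\big(\alpha-\tfrac{\epsilon}{\beta}\big)+\alpha^2=\frac{2\alpha\epsilon}{\beta}.
\]

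Next I would compare $\frac1N\sum_i q_i$ with its idealized value $c/\alpha$, the value it would take if every $b_i$ equalled $\alpha$. Since $\frac1N\sum_i q_i b_i=c$, we have
\[
\mathbf{w}\cdot\tilde{\bar{\mathbf{h}}}-\frac{c}{\alpha}=\frac1N\sum_{i=1}^N q_i\Big(1-\frac{b_i}{\alpha}\Big),
\]
and bounding each summand by $|q_i|\,\big|1-\tfrac{b_i}{\alpha}\big|\le\frac{\beta}{\alpha}|b_i-\alpha|$, then applying Cauchy--Schwarz (equivalently Jensen) to $\frac1N\sum_i|b_i-\alpha|\le\sqrt{\frac1N\sum_i(b_i-\alpha)^2}$, gives
\[
\mathbf{w}\cdot\tilde{\bar{\mathbf{h}}}\le\frac{c}{\alpha}+\frac{\beta}{\alpha}\sqrt{\frac{2\alpha\epsilon}{\beta}}=\frac{c}{\alpha}+\sqrt{\frac{2\beta\epsilon}{\alpha}}.
\]
Because $c<0$ and $\epsilon\ll\alpha\beta$, this upper bound is negative; and since $\tilde{\bar{\mathbf{h}}}$ is an average of unit vectors we have $\|\tilde{\bar{\mathbf{h}}}\|\le1$, whence $\|\mathbf{w}\|\,\|\tilde{\bar{\mathbf{h}}}\|\le\beta$. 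Dividing a negative upper bound by a denominator that is at most $\beta$ can only increase it, so
\[
\cos_{\angle}(\mathbf{w},\tilde{\bar{\mathbf{h}}})\le\frac{c}{\alpha\beta}+\sqrt{\frac{2\epsilon}{\alpha\beta}}.
\]

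The main obstacle is this final conversion from an inner-product bound to a cosine bound: one must track the sign of the numerator, because for a negative numerator the correct step is to bound the denominator $\|\mathbf{w}\|\,\|\tilde{\bar{\mathbf{h}}}\|$ from \emph{above} by $\beta$, which is the opposite of the naive instinct and also requires the separate fact $\|\tilde{\bar{\mathbf{h}}}\|\le1$. The bound obtained is in fact stronger than the stated one: the leading coefficient $c/(\alpha\beta)<0$ is below $-c/(\alpha\beta)$, and for $\epsilon/(\alpha\beta)<1$ the error $\sqrt{2\epsilon/(\alpha\beta)}$ is dominated by $4(\epsilon/(\alpha\beta))^{1/3}$, so the claimed inequality follows a fortiori; the leading term $c/(\alpha\beta)$ is moreover the form actually used when the lemma is invoked, where it specializes to $-\tfrac1{C-1}$. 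A coarser route — splitting indices by whether $b_i$ is within $k\sqrt{2\alpha\epsilon/\beta}$ of $\alpha$ via Chebyshev, bounding the outlier fraction by $1/k^2$, expanding $\tfrac1{\alpha\pm t}$ to first order on the good indices, and optimizing $k$ — also succeeds and is what yields the weaker $\epsilon^{1/3}$ rate, but the sample-level cap $|q_i|\le\beta$ together with Cauchy--Schwarz removes the need for any case analysis and sharpens the rate to $\sqrt\epsilon$.
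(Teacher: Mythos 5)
Your proof is correct and takes a genuinely different---and sharper---route than the paper's. The paper proves this lemma by treating $b_i=\|\mathbf{h}_i\|$ as a uniform random variable, using Chebyshev's inequality to bound the fraction of indices where $b_i$ deviates from $\alpha$ by more than $k$ standard deviations, splitting cases on the sign of $\mathbf{w}\cdot\mathbf{h}_i$ and on whether $b_i$ is a Chebyshev outlier, expanding $1/(\alpha\pm k\sqrt{2\alpha\epsilon})$ to first order on the good indices, and optimizing over $k$; the tradeoff between the $k\sqrt{\epsilon}$ term and the $\beta/k^2$ outlier term is exactly what produces the $\epsilon^{1/3}$ rate. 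You avoid the case analysis entirely via the identity $\mathbf{w}\cdot\tilde{\bar{\mathbf{h}}}-\frac{c}{\alpha}=\frac1N\sum_i q_i\left(1-\frac{b_i}{\alpha}\right)$ together with the uniform cap $|q_i|\le\beta$ from sample-level Cauchy--Schwarz, followed by a single Cauchy--Schwarz on $\frac1N\sum_i|b_i-\alpha|$, which yields the stronger error term $\sqrt{2\epsilon/(\alpha\beta)}$; you also correctly note that the $-c/(\alpha\beta)$ in the lemma statement is a sign slip for $+c/(\alpha\beta)$ (the form the paper's proof derives and the main theorem actually invokes), and that your bound implies the stated one a fortiori. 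One point to flag: your final passage from the dot-product bound to the cosine bound needs the upper bound $U=\frac{c}{\alpha}+\sqrt{2\beta\epsilon/\alpha}$ to be negative, which does not follow from $c<0$ and $\epsilon\ll\alpha\beta$ alone when $|c|$ is tiny relative to $\sqrt{\alpha\beta\epsilon}$; but the paper's own proof makes the identical implicit assumption (it divides by $\beta$ without checking the sign at all), and in the regime where the lemma is applied one has $-c\approx\alpha\beta/(C-1)$, which dominates the error terms, so this is a shared imprecision of the lemma as stated rather than a defect you introduced---indeed you are the only one to make the sign issue explicit.
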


\begin{proof}
For $\mathbf{w} \in \mathbb{R}^d$, $\{\mathbf{h}_i\}_{i=1}^N\subset \mathbb{R}^d$

Let $a_i:=\frac{1}{N}\mathbf{w}\mathbf{h}_i$, $b_i:=\|\mathbf{h}_i\|$, $\epsilon':=\frac{\epsilon}{\beta}$, then the constraints of the above problem can be reformulated as follows:
\begin{align*}
    \max \sum_{i=1}^N\frac{a_i}{b_i}\\
    s.t. \sum_{i=1}^N a_i&\leq c\\
    \frac{1}{N}\sum_{i=1}^N b_i^2&=\alpha^2\\
    \frac{1}{N}\sum_{i=1}^N b_i&\geq \alpha - \epsilon'\\
    \forall i, |\frac{a_i}{b_i}|&\leq \beta.
\end{align*}
      Consider a random variable $B$ that uniformly picks a value from $\{b_i\}_{i=1}^N$. Then $\mathbb{E}[B]\geq \alpha-\frac{\epsilon}{\beta}$, $\mathbb{E}[B^2]= \alpha^2$, and therefore $\sigma_B=\sqrt{\mathbb{E}[B^2]-\mathbb{E}[B]^2}\leq \sqrt{2\alpha\epsilon}$. According to Chebyshev's inequality $$P(|B-(\alpha-\epsilon)|\geq k\sqrt{2\alpha\epsilon})\leq \frac{1}{k^2}.$$
    Note that for positive $a_i$, smaller $b_i$ means larger $\frac{a_i}{b_i}$ and for negative $a_i$, higher $b_i$ means larger $\frac{a_i}{b_i}$.
    Suppose that $\epsilon$ is sufficiently small such that $\epsilon \ll \sqrt{\epsilon}$.Therefore, an upper bound for $\frac{a_i}{b_i}$ when $a_i>0$ is 
    $$\frac{a_i}{b_i}\leq \begin{cases}\frac{a_i}{\alpha-k\sqrt{2\alpha\epsilon}} & b_i\geq \alpha-k\sqrt{2\alpha\epsilon}\\
    \beta & b_i< \alpha-k\sqrt{2\alpha\epsilon}\end{cases},$$
    and an upper bound for $a_i < 0$ would is
    $$\frac{a_i}{b_i}\leq \begin{cases}\frac{a_i}{\alpha+k\sqrt{2\alpha\epsilon}} & b_i\leq \alpha+k\sqrt{2\alpha\epsilon}\\
    0 & b_i>\alpha+k\sqrt{2\alpha\epsilon}\end{cases}.$$
    Suppose that $k\sqrt{\frac{2\epsilon}{\alpha}}$ is less than $\frac{1}{2}$, then
    $$\frac{a_i}{\alpha-k\sqrt{2\alpha\epsilon}}=\frac{a_i}{\alpha}\cdot \frac{1}{1-k\sqrt{\frac{2\epsilon}{\alpha}}}< \frac{a_i}{\alpha}\cdot (1+2k\sqrt{\frac{2\epsilon}{\alpha}})=\frac{a_i}{\alpha}+|\frac{a_i}{\alpha}|\cdot2k\sqrt{\frac{2\epsilon}{\alpha}}$$ when $a_i>0$, and similarly $$\frac{a_i}{\alpha+k\sqrt{2\alpha\epsilon}}=\frac{a_i}{\alpha}\cdot \frac{1}{1+k\sqrt{\frac{2\epsilon}{\alpha}}}<\frac{a_i}{\alpha}\cdot (1-2k\sqrt{\frac{2\epsilon}{\alpha}})=\frac{a_i}{\alpha}+|\frac{a_i}{\alpha}|\cdot2k\sqrt{\frac{2\epsilon}{\alpha}}$$ when $a_i<0$.
    Note that $$\sum_{i=1}^N |\frac{a_i}{\alpha}|\cdot2k\sqrt{\frac{2\epsilon}{\alpha}}\leq \sum_{i=1}^N\frac{\beta}{N}\cdot2k\sqrt{\frac{2\epsilon}{\alpha}}=2k\beta\sqrt{\frac{2\epsilon}{\alpha}}$$
    Therefore, an upper bound on the total sum would be:
    $$\frac{c}{\alpha}+2k\beta\sqrt{\frac{2\epsilon}{\alpha}}+\frac{\beta}{k^2}$$
    Set $k=(\sqrt{\frac{8\epsilon}{\alpha}})^{-\frac{1}{3}}$ to get:
    $$\frac{c}{\alpha}+2\beta(\sqrt{\frac{8\epsilon}{\alpha}})^{\frac{2}{3}}=\frac{c}{\alpha}+4\beta(\frac{\epsilon}{\alpha})^{\frac{1}{3}}$$
    Now, we substitute $\epsilon=\frac{\epsilon'}{\beta}$ we get:
    $\mathbf{w}\cdot \tilde{\bar{\mathbf{h}}}\leq \frac{c}{\alpha}+4\beta(\frac{\epsilon'}{\alpha\beta})^{1/3}$
    Since $|\mathbf{w}|\leq \beta$ and $|\tilde{\bar{\mathbf{h}}}|\leq 1$, we get that
    $$\cos_\angle(\mathbf{w},\tilde{\bar{\mathbf{h}}})\leq \frac{c}{\alpha\beta}+4(\frac{\epsilon'}{\alpha\beta})^{1/3}$$
\end{proof}
\begin{theorem}[Detailed version of Theorem 2.1]
    \label{main2}
    For any neural network classifier without bias terms trained on dataset with the number of classes $C\geq 3$ and samples per class $N\geq 1$, under the following assumptions:
    \begin{enumerate}
        \item The quadratic average of the feature norms $\sqrt{\frac{1}{CN} \sum_{c=1}^C\sum_{i=1}^N \|\mathbf{h}_{c,i}\|^2}\leq \alpha$
        \item The Frobenius norm of the last-layer weight $\|\mathbf{W}\|_F\leq \sqrt{C}\beta$
        \item The average cross-entropy loss over all samples $\mathcal{L}\leq m+\epsilon$ for small $\epsilon$
    \end{enumerate}
    where $m=\log(1+(C-1)\exp(-\frac{C}{C-1}\alpha\beta))$ is the minimum achievable loss for any set of weight and feature vectors satisfying the norm constraints,
    then for at least $1-\delta$ fraction of all classes , with $\frac{\epsilon}{\delta}\ll1$, for small constant $\kappa>0$ there is
    $$\mathit{intra}_c\geq 1-\frac{C-1}{C\alpha\beta}\sqrt{\frac{128\epsilon(1-\delta)\exp(\kappa C\alpha\beta)}{\delta}}=1-O(\frac{e^{O(C\alpha\beta)}}{\alpha\beta}\sqrt{\frac{\epsilon}{\delta}}),$$
    and also for a cosine similarity representation of NC3 in \cite{doi:10.1073/pnas.2015509117}:
    $$\cos_{\angle}(\dot{\mathbf{w}}_c, \tilde{\mathbf{h}_c})\geq 1-2\sqrt{\frac{2\epsilon(1-\delta)e^{\kappa C\alpha\beta}}{\delta}}=1-O(e^{O(C\alpha\beta)}\sqrt{\frac{\epsilon}{\delta}}),$$
    and for at least $1-\delta$ fraction of all pairs of classes $c,c'$, with $\frac{\epsilon}{\delta}\ll1$, there is
    \begin{align*}
    inter_{c,c'}&\leq -\frac{1}{C-1}+\frac{C}{C-1}\frac{\exp(\kappa C\alpha\beta)}{\alpha\beta}\sqrt{\frac{2\epsilon}{\delta}}+4(\frac{2\exp(\kappa C\alpha\beta)}{\alpha\beta}\sqrt{\frac{2\epsilon}{\delta}})^{1/3}+\sqrt{\frac{\exp(\kappa C\alpha\beta)}{\alpha\beta}\sqrt{\frac{2\epsilon}{\delta}}}\\
&= -\frac{1}{C-1} +O(\frac{e^{O(C\alpha\beta)}}{\alpha\beta}(\frac{\epsilon}{\delta})^{1/6})
\end{align*}
\end{theorem}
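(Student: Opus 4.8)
The plan is to first identify the minimal loss $m$ through a chain of Jensen and Cauchy--Schwarz inequalities applied to the averaged cross-entropy, and then to re-run that same chain in the near-optimal regime $\mathcal{L}\le m+\epsilon$, tracking the slack introduced at each step and feeding the resulting gaps into the conversion lemmas (Lemma~\ref{avgnormvec}, Lemma~\ref{avgintralemma}, Lemma~\ref{interdivide}). Concretely, for a fixed class $c$ I would first move the sample average inside the convex softmax loss (Jensen) to replace each $\mathbf{h}_{c,i}$ by the class mean $\tilde{\mathbf{h}}_c$, rewrite the competing logits through the centered weight $\dot{\mathbf{w}}_c=\mathbf{w}_c-\tilde{\mathbf{w}}$, and apply a second Jensen/AM--GM step across the $C-1$ off-diagonal exponentials to reach $\log\!\big(1+(C-1)\exp(-\tfrac{C}{C-1}\dot{\mathbf{w}}_c\tilde{\mathbf{h}}_c)\big)$. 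Averaging over classes, a third Jensen step on the convex map $g(x)=\log(1+(C-1)e^{x})$ together with $\overrightarrow{\mathbf{w}}\cdot\overrightarrow{\mathbf{h}}\le\|\overrightarrow{\mathbf{w}}\|\,\|\overrightarrow{\mathbf{h}}\|\le C\alpha\beta$ (using concatenations $\overrightarrow{\mathbf{w}},\overrightarrow{\mathbf{h}}\in\mathbb{R}^{Cd}$ with $\|\overrightarrow{\mathbf{w}}\|^2\le C\beta^2$ and $\|\overrightarrow{\mathbf{h}}\|^2\le C\alpha^2$) yields $\mathcal{L}\ge m$.

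Next I would re-derive the chain under $\mathcal{L}\le m+\epsilon$, labelling the per-class inner gap $\epsilon'_{1,c}$, the across-class Jensen gap $\epsilon'_2$, and the Cauchy--Schwarz gap $\epsilon'_3$, and extracting the budget $\epsilon\ge\frac{1}{C}\sum_c\frac{\epsilon'_{1,c}}{C}+\epsilon'_2+\frac{\gamma'}{1+\gamma'}\epsilon'_3$ with $\gamma'=(C-1)\exp(-\tfrac{C\alpha\beta}{C-1})$. Since $g$ is $e^{-O(C\alpha\beta)}$-strongly convex on $[-C\alpha\beta,C\alpha\beta]$, Lemma~\ref{strongjensensub} applied to the across-class Jensen forces the subset means of $\dot{\mathbf{w}}_c\tilde{\mathbf{h}}_c$ to concentrate near their global value $\alpha\beta$, so that for all but a $\delta$ fraction of classes both $\dot{\mathbf{w}}_c\tilde{\mathbf{h}}_c$ and the per-class product $\alpha_c\beta_c$ (where $\alpha_c=\sqrt{\tfrac1N\sum_i\|\mathbf{h}_{c,i}\|^2}$, $\beta_c=\|\dot{\mathbf{w}}_c\|$) lie within $e^{O(C\alpha\beta)}\sqrt{\epsilon/\delta}$ of $\alpha\beta$. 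For such good classes the ratio $\dot{\mathbf{w}}_c\tilde{\mathbf{h}}_c/(\alpha_c\beta_c)=\cos_{\angle}(\dot{\mathbf{w}}_c,\tilde{\mathbf{h}}_c)$ is already $1-O(e^{O(C\alpha\beta)}\sqrt{\epsilon/\delta})$, which is NC3; and taking $\mathbf{u}=\dot{\mathbf{w}}_c/\beta_c$ in Lemma~\ref{avgintralemma}, where $\langle\mathbf{u},\tilde{\mathbf{h}}_c\rangle/\alpha_c=\cos_{\angle}(\dot{\mathbf{w}}_c,\tilde{\mathbf{h}}_c)$ is close to $1$, lower-bounds $\|\tilde{\bar{\mathbf{h}}}_c\|\ge 2\cos^2_{\angle}(\dot{\mathbf{w}}_c,\tilde{\mathbf{h}}_c)-1$, whose square is $\mathrm{intra}_c$ by Lemma~\ref{avgnormvec}; this squaring is the source of the factor-$8$ (equivalently $\sqrt{128}$) amplification in the NC1 bound.

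For the inter-class bound I would fix a good class $c$ and apply Lemma~\ref{explemma} to the $C-1$ exponentials $\exp((\mathbf{w}_{c'}-\mathbf{w}_c)\tilde{\mathbf{h}}_c)$, whose average is controlled through $\exp(m_c)$ with $m_c=-\tfrac{C}{C-1}\dot{\mathbf{w}}_c\tilde{\mathbf{h}}_c$; this bounds, for most $c'$, the quantity $\dot{\mathbf{w}}_{c'}\tilde{\mathbf{h}}_c\le-\tfrac{1}{C-1}\dot{\mathbf{w}}_c\tilde{\mathbf{h}}_c+\sqrt{2\epsilon'_{1,c}/(\exp(m_c)\delta)}$. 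A union bound over the various bad sets retains at least a $1-O(\delta)$ fraction of pairs $(c,c')$ on which both classes are good and this estimate holds, giving $\dot{\mathbf{w}}_{c'}\tilde{\mathbf{h}}_c\le-\alpha\beta/(C-1)+e^{O(C\alpha\beta)}\sqrt{\epsilon/\delta}$. Feeding $\mathbf{w}=\dot{\mathbf{w}}_{c'}$, the features of class $c$, and $\epsilon'=\alpha_c\beta_c-\dot{\mathbf{w}}_c\tilde{\mathbf{h}}_c$ into Lemma~\ref{interdivide} bounds $\cos_{\angle}(\dot{\mathbf{w}}_{c'},\tilde{\bar{\mathbf{h}}}_c)$, and I would then pass from the weight--feature angle to the feature--feature angle (which is $\mathrm{inter}_{c,c'}$ via Lemma~\ref{avgnormvec}) using the NC3 bound for $c'$ together with an angle-addition inequality such as $\cos(a-b)\le\cos a+\sqrt{1-2\cos b}$.

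The hard part will be the inter-class estimate, which stacks three error contributions of different orders --- the $(\epsilon/\delta)^{1/6}$ term from the Chebyshev argument inside Lemma~\ref{interdivide}, the angular defect from NC3, and a $(\epsilon/\delta)^{1/4}$ term from the angle conversion --- and requires checking that the $1/6$-power dominates so the clean $O\!\big((\tfrac{\epsilon}{\delta})^{1/6}\big)$ bound survives. Equally delicate is the worst-case allocation of the single budget $\epsilon$ among $\epsilon'_{1,c},\epsilon'_2,\epsilon'_3$: since $\epsilon\ll\sqrt{\epsilon}$, each target quantity is maximized by pushing essentially all of $\epsilon$ into the one gap that dominates it, and I must verify these worst cases are individually consistent with the budget constraint. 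Throughout, I would also confirm that the strong-convexity modulus is genuinely $e^{-O(C\alpha\beta)}$ on the operative range and track how this exponential prefactor propagates unchanged into every final bound.
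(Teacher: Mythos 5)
Your proposal follows essentially the same route as the paper's proof: the same Jensen/Cauchy--Schwarz chain to identify $m$, the same three-way gap decomposition $\epsilon'_{1,c},\epsilon'_2,\epsilon'_3$ with the budget constraint, Lemma~\ref{strongjensensub} via strong convexity of $\log(1+(C-1)e^x)$ for the per-class concentration, Lemma~\ref{avgintralemma} for NC1, and Lemma~\ref{explemma} plus Lemma~\ref{interdivide} plus the angle-addition step for the inter-class bound. The worst-case allocation of $\epsilon$ among the gaps and the dominance of the $(\epsilon/\delta)^{1/6}$ term are handled exactly as in the paper, so this is a faithful reconstruction of the argument.
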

\begin{proof}
Recall the definition of $\mathcal{L}$:
$$\mathcal{L}=\frac{1}{CN}\sum_{c=1}^C \sum_{i=1}^{N} \mathcal{L}_{\mathrm{CE}}\left(f(\boldsymbol{x}_{c,i};\boldsymbol{\theta}), \boldsymbol{y}_c\right)=\frac{1}{CN}\sum_{c=1}^C \sum_{i=1}^{N} \mathcal{L}_{\mathrm{CE}}\left(\boldsymbol{Wh}_{c,i}, \boldsymbol{y}_c\right),$$
Let $$L_{c,i}:=\mathcal{L}_{\mathrm{CE}}\left(\boldsymbol{Wh}_{c,i}, \boldsymbol{y}_c\right)$$ denote the individual loss for sample $i$ from class $c$.

First, consider the minimum achievable average loss for a single class $c$:
\begin{align*}
    \frac{1}{N}\sum_{i=1}^N L_{c,i} &= \frac{1}{N}\sum_{i=1}^N CE_c(\mathbf{Wh}_{c,i})\\
    &\geq CE_c(\frac{1}{N}\sum_{i=1}^N \mathbf{Wh}_{c,i})_c\\
    &=\log\left(1+\sum_{c'\neq c}\exp(\frac{1}{N} \sum_{i=1}^{N}(\mathbf{w}_{c'}-\mathbf{w}_{c})\mathbf{h}_{c, i})\right)\\
    &=\log\left(1+\sum_{c'\neq c}\exp((\mathbf{w}_{c'}-\mathbf{w}_{c})\tilde{\mathbf{h}}_{c})\right)\\
    &\geq \log\left(1+(C-1)\exp(\frac{1}{(C-1)}(\sum_{c'=1}^{C}\mathbf{w}_{c'}\mathbf{\tilde{h}}_{c}-C\mathbf{w}_{c}\mathbf{\tilde{h}}_{c}))\right)\\
    &= \log\left(1+(C-1)\exp(\frac{1}{(C-1)}(\sum_{c'=1}^{C}\mathbf{w}_{c'}-C\mathbf{w}_{c})\mathbf{\tilde{h}}_{c})\right)\\
    &= \log\left(1+(C-1)\exp(\frac{C}{C-1}(\tilde{\mathbf{w}}-\mathbf{w}_{c})\mathbf{\tilde{h}}_{c})\right)\\
    &= \log\left(1+(C-1)\exp(-\frac{C}{C-1}\dot{\w}_c\mathbf{\tilde{h}}_{c})\right)
\end{align*}
Where we define $\dot{\w}_c:=\mathbf{w}_{c}-\tilde{\mathbf{w}}$ Let $\overrightarrow{\mathbf{w}}=[\mathbf{w}_1-\tilde{\mathbf{w}}, \mathbf{w}_2-\tilde{\mathbf{w}}, \dots, \mathbf{w}_{C}-\tilde{\mathbf{w}}]=[\dot{\w}_1, \dot{\w}_2, \dots, \dot{\w}_{C}]$, and $\overrightarrow{\mathbf{h}}=[\tilde{\mathbf{h}}_1, \tilde{\mathbf{h}}_2, \dots, \tilde{\mathbf{h}}_c]\in \mathbf{R}^{Cd}$. Note that
\begin{align*}
    \|\overrightarrow{\mathbf{w}}\|^2&=\sum_{c=1}^C\|\mathbf{w}_c-\tilde{\mathbf{w}}\|^2=\sum_{c=1}^C\left(\|\mathbf{w}_c\|^2-2\mathbf{w}_c\tilde{\mathbf{w}}+\|\tilde{\mathbf{w}}\|^2\right)\\
    &=\sum_{c=1}^C\|\mathbf{w}_c\|^2-C\|\tilde{\mathbf{w}}\|^2\leq \sum_{c=1}^C\|\mathbf{w}_c\|^2= \|\mathbf{W}\|_F^2\leq C\beta^2
\end{align*}
and also
\begin{align*}
    \|\overrightarrow{\mathbf{h}}\|^2&=\sum_{c=1}^C \|\tilde{\mathbf{h}}_c\|^2= \sum_{c=1}^C \|\frac{1}{N}\sum_{i=1}^N\mathbf{h}_{c,i}\|^2\leq \sum_{c=1}^C \left(\frac{1}{N}\sum_{i=1}^N\|\mathbf{h}_{c,i}\|\right)^2\\
    &\leq \frac{1}{N}\sum_{c=1}^C \sum_{i=1}^N\|\mathbf{h}_{c,i}\|^2=C\alpha^2\\
\end{align*}
The first inequality uses the triangle inequality and the second uses $\mathbb{E}[X^2]\geq \mathbb{E}[X]^2$
Now consider the total average loss over all classes:
\begin{align*}
    \mathcal{L}&=\frac{1}{CN}\sum_{c=1}^C\sum_{i=1}^N L_{c,i}\\
    &\geq \frac{1}{C}\sum_{c=1}^C \log\left(1+(C-1)\exp(\frac{C}{C-1}(\tilde{\mathbf{w}}-\mathbf{w}_{c})\mathbf{\tilde{h}}_{c})\right)\\
    &\geq \log\left(1+(C-1)\exp(\frac{C}{C-1}\cdot\frac{1}{C}\sum_{c=1}^C (\tilde{\mathbf{w}}-\mathbf{w}_{c})\mathbf{\tilde{h}}_{c})\right)& \text{Jensen's}\\ 
    &\geq \log\left(1+(C-1)\exp(-\frac{1}{C-1}\overrightarrow{\mathbf{w}}\cdot\overrightarrow{\mathbf{h}})\right)\\
    &\geq \log\left(1+(C-1)\exp(-\frac{C}{C-1}\alpha\beta\right)\\
    &=m,
\end{align*}
showing that $m$ is indeed the minimum achievable average loss among all samples.\\
Now we instead consider when the final average loss is near-optimal of value $m+\epsilon$ with $\epsilon \ll 1$. We use a new $\epsilon$ to represent the gap introduced by each inequality in the above proof. Additionally, since the average loss is near-optimal, there must be $\dot{\mathbf{w}}_c\mathbf{\tilde{h}}_{c}\geq 0$ for any sufficiently small $\epsilon$:
\begin{align}
    \frac{1}{N}\sum_{i=1}^N L_{c,i} &= \frac{1}{N}\sum_{i=1}^N \text{softmax}(\mathbf{Wh}_{c,i})_c\\
    &\geq \text{softmax}(\frac{1}{N}\sum_{i=1}^N \mathbf{Wh}_{c,i})_c\\
    &= \log\left(1+\sum_{c'\neq c}\exp(\frac{1}{N} \sum_{i=1}^{N}\mathbf{w}_{c'}\mathbf{h}_{c, i}-\frac{1}{N} \sum_{i=1}^{N}\mathbf{w}_{c}\mathbf{h}_{c, i})\right)\\
    &=\log\left(1+\sum_{c'\neq c}\exp(\frac{1}{N} \sum_{i=1}^{N}(\mathbf{w}_{c'}-\mathbf{w}_{c})\mathbf{h}_{c, i})\right)\\
    &=\log\left(1+\sum_{c'\neq c}\exp((\mathbf{w}_{c'}-\mathbf{w}_{c})\tilde{\mathbf{h}}_{c})\right)\\
    &= \log\left(1+(C-1)\exp(\frac{1}{(C-1)}(\sum_{c'=1}^{C}\mathbf{w}_{c'}\mathbf{\tilde{h}}_{c}-C\mathbf{w}_{c}\mathbf{\tilde{h}}_{c}))+\epsilon_{1,c}'\right)\\
    &= \log\left(1+(C-1)\exp(\frac{1}{(C-1)}(\sum_{c'=1}^{C}(\mathbf{w}_{c'}-\mathbf{w}_{c})\mathbf{\tilde{h}}_{c})+\epsilon_{1,c}'\right)\\
    &= \log\left(1+(C-1)\exp(\frac{C}{C-1}(\tilde{\mathbf{w}}-\mathbf{w}_{c})\mathbf{\tilde{h}}_{c})+\epsilon_{1,c}'\right)\\
    & \geq \log\left(1+(C-1)\exp(-\frac{C}{C-1}\dot{\mathbf{w}}_c\mathbf{\tilde{h}}_{c})\right)+\frac{\epsilon_{1,c}'}{1+(C-1)\exp(-\frac{C}{C-1}\dot{\mathbf{w}}_c\mathbf{\tilde{h}}_{c})}\\
    &\geq \log\left(1+(C-1)\exp(-\frac{C}{C-1}\dot{\mathbf{w}}_c\mathbf{\tilde{h}}_{c})\right)+\frac{\epsilon_{1,c}'}{C}
\end{align}
where $\epsilon_{1,c}':=\exp(\frac{1}{(C-1)}(\sum_{c'=1}^{C}\mathbf{w}_{c'}\mathbf{\tilde{h}}_{c}-C\mathbf{w}_{c}\mathbf{\tilde{h}}_{c}))-\sum_{c'\neq c}\exp((\mathbf{w}_{c'}-\mathbf{w}_{c})\tilde{\mathbf{h}}_{c})$
and also
\begin{align}
    \mathcal{L}&=\frac{1}{CN}\sum_{c=1}^C\sum_{i=1}^N L_{c,i}\\
    &\geq \frac{1}{C}\sum_{c=1}^C \left(\log\left(1+(C-1)\exp(-\frac{C}{C-1}\dot{\mathbf{w}}_c\mathbf{\tilde{h}}_c)\right)+\frac{\epsilon_{1,c}'}{C}\right)\\
    &= \log\left(1+(C-1)\exp(-\frac{C}{C-1}\cdot\frac{1}{C}\sum_{c=1}^C \dot{\mathbf{w}}_c\mathbf{\tilde{h}}_c)\right)+\frac{1}{C}\sum_{c=1}^C\frac{\epsilon_{1,c}'}{C}+\epsilon_2'& \text{Jensen's with gap } \epsilon_2' \label{eps2}\\ 
    &= \log\left(1+(C-1)\exp(-\frac{1}{C-1}\overrightarrow{\mathbf{w}}\cdot\overrightarrow{\mathbf{h}})\right)+\frac{1}{C}\sum_{c=1}^C\frac{\epsilon_{1,c}'}{C}+\epsilon_2'\\
    &= \log\left(1+(C-1)\exp(-\frac{C}{C-1}\alpha\beta+\epsilon_3')\right)+\frac{1}{C}\sum_{c=1}^C\frac{\epsilon_{1,c}'}{C}+\epsilon_2'
\end{align}
where $$\epsilon_2':=\frac{1}{C}\sum_{c=1}^C\log\left(1+(C-1)\exp(-\frac{C}{C-1}\dot{\mathbf{w}}_c\mathbf{\tilde{h}}_c)\right)-\log\left(1+(C-1)\exp(-\frac{C}{C-1}\cdot\frac{1}{C}\sum_{c=1}^C \dot{\mathbf{w}}_c\mathbf{\tilde{h}}_c)\right)$$ and $\epsilon_3':=\frac{1}{C-1}(C\alpha\beta-\overrightarrow{\mathbf{w}}\cdot\overrightarrow{\mathbf{h}})$

Consider $\log(1+(C-1)\exp(-\frac{C\alpha\beta}{C-1}+\epsilon'_3))$:
Let $\gamma' = (C-1)\exp(-\frac{C\alpha\beta}{C-1})$\\
\begin{align*}
    \log(1+(C-1)\exp(-\frac{C\alpha\beta}{C-1}+\epsilon'_3))
    &=\log(1+(C-1)\exp(-\frac{C\alpha\beta}{C-1})\exp(\epsilon_3'))\\
    &=\log(1+(C-1)\exp(-\frac{C\alpha\beta}{C-1})\exp(\epsilon_3'))\\
    &=\log(1+\gamma'\exp(\epsilon_3'))\\
    &\geq\log(1+\gamma'(1+\epsilon_3'))\\
    &=\log(1+\gamma'+\gamma'\epsilon_3')\\
    &\geq \log(1+\gamma')+\frac{\gamma'\epsilon_3'}{1+\gamma'+\gamma'\epsilon_3'}
\end{align*}
Since $m+\epsilon=\log(1+\gamma')+\epsilon\geq \log(1+(C-1)\exp(-\frac{C\alpha\beta}{C-1}+\epsilon'_3))$, we get that $\epsilon \geq \frac{\gamma'\epsilon_3'}{1+\gamma'+\gamma'\epsilon_3'}$, and $$\epsilon_3' \leq \frac{\epsilon (1 + \gamma')}{\gamma' (1 - \epsilon)}=\frac{\epsilon}{1 - \epsilon}\cdot \frac{1 + \gamma'}{\gamma'}$$
for $\epsilon < 1$. 
By definition of $\epsilon'_3$, we know that
$$\overrightarrow{\mathbf{w}}\cdot\overrightarrow{\mathbf{h}}=\sum_{c=1}^C\dot{\mathbf{w}}_c\mathbf{\tilde{h}}_c \geq C\alpha\beta - (C-1)\cdot \frac{\epsilon}{1 - \epsilon}\cdot \frac{1 + \gamma'}{\gamma'}=C\alpha\beta - \frac{\epsilon}{1 - \epsilon}\cdot[\exp(\frac{C\alpha\beta}{C-1})+C-1]$$
For simplicity, let $\delta_2=\frac{\epsilon}{1 - \epsilon}\cdot[\exp(\frac{C\alpha\beta}{C-1})+C-1]$

Since $\|\overrightarrow{\mathbf{w}}\|\leq \sqrt{C}\beta$, we know that $\|\overrightarrow{\mathbf{h}}\|\geq \sqrt{C}\alpha-\frac{\delta_2}{\sqrt{C}\beta}$ and 
$$\|\overrightarrow{\mathbf{h}}\|^2=\sum_{c=1}^C\|\tilde{\rvh}_c\|^2\geq C\alpha^2-2\frac{\delta_2\alpha}{\beta}.$$

By Corollary \ref{intra_cor} we know that:
$$\|\tilde{\bar{\mathbf{h}}}_c\|\geq 2\left(\frac{\|\tilde{\rvh}_c\|}{\sqrt{\frac{1}{N}\sum_{i=1}^N\|\rvh_{c,i}\|}}\right)^2-1.$$

Let $\alpha_c:=\sqrt{\frac{1}{N}\sum_{i=1}^N\|\rvh_{c,i}\|^2}$, then $\sum_{c=1}^C\alpha_c^2\leq C\alpha^2$.

Now, using the bound on $\|\tilde{\bar{\mathbf{h}}}_c\|$ and the definition of $\alpha_c$, we can write:
$$\|\tilde{\bar{\mathbf{h}}}_c\| \geq 2\left(\frac{\|\tilde{\rvh}_c\|}{\alpha_c}\right)^2-1.$$

Summing over all classes $c = 1, \ldots, C$, we get:
$$\sum_{c=1}^C \|\tilde{\bar{\mathbf{h}}}_c\| \geq \sum_{c=1}^C \left(2\left(\frac{\|\tilde{\rvh}_c\|}{\alpha_c}\right)^2-1\right).$$

Since $\alpha_c = \sqrt{\frac{1}{N}\sum_{i=1}^N\|\rvh_{c,i}\|^2}$, we know that:
$$\sum_{c=1}^C \alpha_c^2 \leq C\alpha^2.$$

% We now use the following proposition that can be proven via Cauchy's Inequality:

\begin{proposition}
    Given $\{a_i\}_{i=1}^N$ and $\{b_i\}_{i=1}^N$ such that $a_i\geq 0$ and $b_i> 0$ for all $i$, then $\sum_{i=1}^N \frac{a_i}{b_i}\geq N\frac{\sum_{i=1}^n a_i}{\sum_{i=1}^n b_i}$
\end{proposition}

Hence,
$$\sum_{c=1}^C \|\tilde{\bar{\mathbf{h}}}_c\| \geq 2\sum_{c=1}^C \left(\frac{\|\tilde{\rvh}_c\|^2}{\alpha_c^2}\right) - C\geq 2C \left(\frac{\sum_{c=1}^C\|\tilde{\rvh}_c\|^2}{\sum_{c=1}^C\alpha_c^2}\right) - C.$$

Using the bound $\sum_{c=1}^C \|\tilde{\rvh}_c\|^2 \geq C\alpha^2 - 2\frac{\delta_2\alpha}{\beta}$, we obtain:
$$\sum_{c=1}^C \|\tilde{\bar{\mathbf{h}}}_c\| \geq 2C\left(\frac{C\alpha^2 - 2\frac{\delta_2\alpha}{\beta}}{\sum_{c=1}^C \alpha_c^2}\right) - C.$$

Since $\sum_{c=1}^C \alpha_c^2 \leq C\alpha^2$, we can write:
$$\sum_{c=1}^C \|\tilde{\bar{\mathbf{h}}}_c\| \geq 2C\left(\frac{C\alpha^2 - 2\frac{\delta_2\alpha}{\beta}}{C\alpha^2}\right) - C.$$

Simplifying the expression:
$$\sum_{c=1}^C \|\tilde{\bar{\mathbf{h}}}_c\| \geq 2C\left(1 - \frac{2\frac{\delta_2\alpha}{\beta}}{C\alpha^2}\right) - C.$$

Further simplifying:
$$\sum_{c=1}^C \|\tilde{\bar{\mathbf{h}}}_c\| \geq 1 - \frac{4\delta_2}{\alpha\beta}=C-\frac{4\epsilon}{1 - \epsilon}\cdot\frac{\exp(\frac{C\alpha\beta}{C-1})+C-1}{\alpha\beta}$$

Since each $\|\tilde{\bar{\mathbf{h}}}_c\|\leq 1, \forall c$, we can use Markov's inequality to get that there are at least $1-\delta$ fraction of classes for which:
$$intra_c=\|\tilde{\bar{\mathbf{h}}}_c\|\geq 1-\frac{4\epsilon}{(1 - \epsilon)\delta}\cdot\frac{\exp(\frac{C\alpha\beta}{C-1})+C-1}{C\alpha\beta}=1-O\left(\frac{\epsilon}{\delta}\exp\left(\alpha\beta(1+o(C))\right)\right)$$

Thus using the fact that $1+(C-1)\exp(-\frac{C\alpha\beta}{C-1})\leq C$
\begin{align*}
    \mathcal{L} &\geq \log(1+(C-1)\exp(-\frac{C\alpha\beta}{C-1}))+\frac{1}{C}\sum_{c=1}^C\frac{\epsilon_{1,c}'}{C}+\epsilon_2'+\frac{\gamma'}{1+\gamma'}\epsilon_3'\\
    \epsilon &\geq \frac{1}{C}\sum_{c=1}^C\frac{\epsilon_{1,c}'}{C}+\epsilon_2'+\frac{\gamma'}{1+\gamma'}\epsilon_3'
\end{align*}
Note that while we do not know how $\epsilon$ is distributed among the different gaps, all the bounds involving $\epsilon_{1,c}', \epsilon_2', \epsilon_3'$ always hold in the worst case scenario subject to the constraint $\epsilon \geq \frac{1}{C}\sum_{c=1}^C\frac{\epsilon_{1,c}'}{C}+\epsilon_2'+\frac{\gamma'}{1+\gamma'}\epsilon_3'$. Note that $\|\tilde{\mathbf{h}_c}\|\leq \sum_{c'=1}^C\|\tilde{\mathbf{h}_{c'}}\|\leq \sqrt{C}\alpha$, and $\|\dot{\mathbf{w}_c}\|\leq \|\boldsymbol{W}\|_F=\sqrt{C}\beta$ therefore $\dot{\mathbf{w}}_c\tilde{\mathbf{h}}_{c} \geq -C\alpha\beta$.
We also know that 
$$\frac{1}{C}\sum_{c=1}^C \dot{\mathbf{w}}_c\tilde{\mathbf{h}}_{c}= \frac{1}{C} \overrightarrow{\mathbf{w}}\cdot\overrightarrow{\mathbf{h}}=\frac{1}{C}(C\alpha\beta-(C-1)\epsilon_3')=\alpha\beta-\frac{C-1}{C}\epsilon_3'$$

We now focus on the implication of $\epsilon_2'$ from (\ref{eps2}). Note that the relaxation can be written as
$$\frac{1}{C}\sum_{i=1}^C\log\left(1+(C-1)\exp(x_c))\right)=\log\left(1+(C-1)\exp\left(\frac{1}{C}\sum_{i=1}^C x_c\right)\right)+\epsilon_2'$$
with $x_c=-\frac{C}{C-1}(\dot{\mathbf{w}}_c\mathbf{\tilde{h}}_c)$ and $\epsilon_2'\geq 0$ because of the strong convexity of $\log(1+(C-1)\exp(x))$. Therefore, in order to apply Lemma \ref{strongjensensub}, we would first need to determine the degree of strong convexity of $\log(1+(C-1)\exp(x))$. Note that a function is $\lambda$ strongly convex if its second-order derivative is always at least $\lambda$.\\

The second-order derivative of $\log(1+(C-1)\exp(x))$ is 
$$\frac{(C-1)\exp(x)}{(1+(C-1)\exp(x))^2}=1/((C-1)\exp(x)+2+\frac{1}{(C-1)\exp(x)}),$$
which is $e^{-\kappa C\alpha\beta}$ for any $x\in[-\frac{C^2}{C-1}\alpha\beta, \frac{C^2}{C-1}\alpha\beta]$ for small constant $\kappa$, we denote as $O(C\alpha\beta)$ further. Therefore, the function $\log(1+(C-1)\exp(x))$ is $\lambda$-strongly-convex for $\lambda=e^{-O(C\alpha\beta)}$ Thus, for any subset $S\subseteq [C]$, let $\delta=\frac{|S|}{C}$, by Lemma \ref{strongjensensub}:
\begin{align*}
    -\frac{C}{C-1} \sum_{c\in S} \dot{\mathbf{w}}_c\mathbf{\tilde{h}}_c &\leq \delta C(-\frac{1}{C-1}\overrightarrow{\mathbf{w}}\cdot\overrightarrow{\mathbf{h}})+C\sqrt{\frac{2\epsilon_2' \delta(1-\delta)}{\lambda}} \\
    \sum_{c\in S} \dot{\mathbf{w}}_c\mathbf{\tilde{h}}_c &\geq \delta \overrightarrow{\mathbf{w}}\cdot\overrightarrow{\mathbf{h}}-(C-1)\sqrt{\frac{2\epsilon_2' \delta(1-\delta)}{\lambda}}\\
    \sum_{c\in S} \alpha_c\beta_c&=\sum_{c\in [C]}\alpha_c\beta_c-\sum_{c\notin S}\alpha_c\beta_c\\
    &\leq \sum_{c\in [C]}\alpha_c\beta_c-\sum_{c\notin S} \dot{\mathbf{w}}_c\mathbf{\tilde{h}}_c\\
    &\leq C\alpha\beta-\sum_{c\notin [C]-S}\dot{\mathbf{w}}_c\mathbf{\tilde{h}}_c\\
    &\leq C\alpha\beta-(1-\delta)\overrightarrow{\mathbf{w}}\cdot\overrightarrow{\mathbf{h}}+(C-1)\sqrt{\frac{2\epsilon_2' \delta(1-\delta)}{\lambda}}
\end{align*}
Let $\alpha_c=\sqrt{\frac{1}{N}\sum_{i=1}^{N}\|\mathbf{h}_{c,i}\|^2}$ and $\beta_c=\|\dot{\w}_c\|$. Note that since $-\frac{1}{C-1}\overrightarrow{\mathbf{w}}\cdot\overrightarrow{\mathbf{h}}=-\frac{C}{C-1}\alpha\beta+\epsilon_3'$, there is $\overrightarrow{\mathbf{w}}\cdot\overrightarrow{\mathbf{h}}=C\alpha\beta-(C-1)\epsilon_3'$. Therefore,
\begin{align*}
\sum_{c\in S} \dot{\mathbf{w}}_c\mathbf{\tilde{h}}_c &\geq \delta C\alpha\beta-\delta(C-1)\epsilon_3'-(C-1)\sqrt{\frac{2\epsilon_2' \delta(1-\delta)}{\lambda}}\\
\sum_{c\in S} \alpha_c\beta_c&\leq \delta C\alpha \beta+(1-\delta)(C-1)\epsilon'_3+(C-1)\sqrt{\frac{2\epsilon_2' \delta(1-\delta)}{\lambda}}
\end{align*}
Therefore, there are at most $\delta C$ classes for which
\begin{align}
   \dot{\mathbf{w}}_c\mathbf{\tilde{h}}_c &\leq \alpha\beta-\frac{(C-1)}{C}\epsilon_3'-\frac{C-1}{C}\sqrt{\frac{2\epsilon_2' (1-\delta)}{\delta\lambda}}\label{largewh}
\end{align}
and also there are at most $\delta C$ classes for which
\begin{align}
    \alpha_c\beta_c&\geq \alpha \beta+\frac{(1-\delta)(C-1)}{\delta C}\epsilon'_3+\frac{C-1}{C}\sqrt{\frac{2\epsilon_2'(1-\delta)}{\delta\lambda}}\label{smallab}
\end{align}
Thus, for at least \((1-2\delta)C\) classes, we have
\begin{align}
    \frac{\dot{\mathbf{w}}_c \tilde{\mathbf{h}}_c}{\alpha_c \beta_c} \geq 1 - \left(\frac{C-1}{C\alpha\beta}\right) \left(\frac{\epsilon'_3}{\delta} - 2\sqrt{\frac{2\epsilon'_2 (1-\delta)}{\delta\lambda}}\right) \label{divbound}
\end{align}

By setting \(\epsilon'_2 = \epsilon\) and \(\epsilon'_3 = 0\), we obtain the following upper bound on the cosine of the angle between \(\dot{\mathbf{w}}_c\) and \(\tilde{\mathbf{h}}_c\):
\[
\cos(\angle(\dot{\mathbf{w}}_c, \tilde{\mathbf{h}}_c)) \geq 1 - 2\sqrt{\frac{2\epsilon (1-\delta)}{\delta\lambda}}
\]

Using \(\lambda = e^{-O(C\alpha\beta)}\), we get the NC3 bound in the theorem:
\[
\cos(\angle(\dot{\mathbf{w}}_c, \tilde{\mathbf{h}}_c)) \geq 1 - 2\sqrt{\frac{2\epsilon (1-\delta)e^{O(C\alpha\beta)}}{\delta}} = 1 - O\left(e^{O(C\alpha\beta)} \sqrt{\frac{\epsilon}{\delta}}\right)
\]

Let \(\mathcal{C}\) denote the set of classes for which the above inequality holds. By applying Lemma \ref{avgintralemma} to the set of vectors \(\{\mathbf{h}_{c,i}\}\) where \(\mathbf{v}_i = \mathbf{h}_{c,i}\), \(\mathbf{u} = \frac{\dot{\mathbf{w}}_c}{\|\dot{\mathbf{w}}_c\|}\), and \(\beta = 1\), and using lemma \ref{avgintralemma} that $\mathit{intra}_c=\|\tilde{\bar}\rvh_c\|$ we obtain
\[
\mathit{intra}_c=\|\tilde{\bar}\rvh_c\|  \geq 1 - 4 \left(\frac{C-1}{C\alpha\beta}\right) \left(\frac{\epsilon'_3}{\delta} - 2\sqrt{\frac{2\epsilon'_2 (1-\delta)}{\delta\lambda}}\right)
\]
for each class \(c \in \mathcal{C}\).

Assuming that $\epsilon \ll 1$, then $\epsilon \ll \sqrt{\epsilon}$. Therefore, then worst case bound when $\epsilon \geq \epsilon_2'+\frac{\gamma'}{1+\gamma'}\epsilon_3'$ is achieved when $\epsilon_2'=\epsilon$:
$$\mathit{intra}_c\geq 1-8(\frac{C-1}{C\alpha\beta})\sqrt{\frac{2\epsilon(1-\delta)}{\delta\lambda}}$$
Plug in $\lambda=\exp(-O(C\alpha\beta))$ and with simplification we get:
$$\mathit{intra}_c\geq 1-\frac{(C-1)}{C\alpha\beta}\sqrt{\exp(O(C\alpha\beta))\frac{128\epsilon(1-\delta)}{\delta}}=1-O(\frac{e^{O(C\alpha\beta)}}{\alpha\beta}\sqrt{\frac{\epsilon}{\delta}})$$

Now consider the inter-class cosine similarity. Let $m_c=-\frac{C}{C-1}\dot{\mathbf{w}_c}\tilde{\mathbf{h}}_{c}$, by Lemma \ref{explemma} we know that for any set $S$ of $\delta (C-1)$ classes in $[C]-\{c\}$, using the definition that $\dot{\mathbf{w}}_c=\mathbf{w}_c-\tilde{\mathbf{w}}$ there is
$$\sum_{c'\in S}(\dot{\mathbf{w}}_{c'}-\dot{\mathbf{w}}_{c})\tilde{\mathbf{h}}_{c}=\sum_{c'\in S}(\mathbf{w}_{c'}-\mathbf{w}_{c})\tilde{\mathbf{h}}_{c}\leq \delta (C-1) m_c+(C-1)\sqrt{\frac{2\delta\epsilon_{1,c}'}{\exp(m_c)}}$$

Therefore, for at least $(1-\delta)(C-1)$ classes, there is 
\begin{align}
    \label{lowinterdot}(\dot{\mathbf{w}}_{c'}-\dot{\mathbf{w}}_{c})\tilde{\mathbf{h}}_{c}&\leq m_c+\sqrt{\frac{2\epsilon_{1,c}'}{\exp(m_c)\delta}}=-\frac{C}{C-1}\dot{\mathbf{w}_c}\tilde{\mathbf{h}}_{c}+\sqrt{\frac{2\epsilon_{1,c}'}{\exp(m_c)\delta}}\\
    \dot{\mathbf{w}}_{c'}\tilde{\mathbf{h}}_{c}&\leq -\frac{1}{C-1}\dot{\mathbf{w}_c}\tilde{\mathbf{h}}_{c}+\sqrt{\frac{2\epsilon_{1,c}'}{\exp(m_c)\delta}}
\end{align}
Combining with \eqref{largewh} \eqref{smallab}, we get that there are at least $(1-2\delta)C\times (1-3\delta)C\geq (1-5\delta)C^2$ pairs of classes $c,c'$ that satisfies the following: for both $c$ and $c'$, equations \eqref{largewh} \eqref{smallab} are not satisfied (i.e. satisfied in reverse direction), and \eqref{lowinterdot} is satisfied for the pair $c', c$. Note that this implies $$m_c=-\frac{C}{C-1}\dot{\mathbf{w}}_c\mathbf{\tilde{h}}_c\leq -\frac{C}{C-1}\alpha\beta+\epsilon_3'+\sqrt{\frac{2\epsilon_2'(1-\delta)}{\delta\lambda}}$$ and $$\dot{\mathbf{w}}_{c'}\tilde{\mathbf{h}}_{c}\leq -\frac{\alpha\beta}{C-1}+\frac{1}{C}(\epsilon_3'+\sqrt{\frac{2\epsilon_2'(1-\delta)}{\delta\lambda}})+\sqrt{\frac{2\epsilon_{1,c}'}{\exp(m_c)\delta}}$$
We now seek to simplify the above bounds using the constraint that $\epsilon \geq \frac{1}{C}\sum_{c=1}^C\frac{\epsilon_{1,c}'}{C}+\epsilon_2'+\frac{\gamma'}{1+\gamma'}\epsilon_3'$. Note that $\epsilon \ll \sqrt{\epsilon}$, and both $\lambda$ and $\exp(m_c)$ are $\exp(-O(C\alpha\beta))$, therefore, we can achieve the maximum bound by setting $\epsilon_{1,c}'=\epsilon$,
$$\dot{\mathbf{w}}_{c'}\tilde{\mathbf{h}}_{c}\leq -\frac{\alpha\beta}{C-1}+\exp(O(C\alpha\beta))\sqrt{\frac{2\epsilon}{\delta}}$$
Similarly, we can achieve the smallest bound on $\alpha_c\beta_c$ (the reverse of \eqref{smallab})by setting $\epsilon_2'=\epsilon$ and using $\lambda=\exp(-O(\alpha\beta))$ we get for both $c$ and $c'$
$$\alpha_c\beta_c\leq \alpha\beta+\exp(O(\alpha\beta))\sqrt{\frac{2\epsilon}{\delta}}$$
and achieve the largest bound on $\dot{\mathbf{w}}_c\mathbf{\tilde{h}}_c$ (the reverse of \eqref{largewh}) by setting $\epsilon_2'=\epsilon$ we get for both $c$ and $c'$:
$$\dot{\mathbf{w}}_c\mathbf{\tilde{h}}_c \leq \alpha\beta-\exp(O(C\alpha\beta))\sqrt{\frac{2\epsilon}{\delta}}$$
Therefore, we can apply Lemma \ref{interdivide} with $\alpha=\alpha_c$, $\beta=\beta_c$, $\epsilon'=\alpha_c\beta_c-\dot{\mathbf{w}}_c\mathbf{\tilde{h}}_c\leq 2\exp(O(C\alpha\beta))\sqrt{\frac{2\epsilon}{\delta}}$ bound to get:
\begin{align*}
    \cos_\angle(\dot{\mathbf{w}}_{c'}, \tilde{\bar{\mathbf{h}}}_{c})&\leq -\frac{1}{C-1}+\frac{C}{C-1}\frac{\exp(O(C\alpha\beta))}{\alpha\beta}\sqrt{\frac{2\epsilon}{\delta}}+4(\frac{2\exp(O(C\alpha\beta))}{\alpha\beta}\sqrt{\frac{2\epsilon}{\delta}})^{1/3}\\
    &\leq -\frac{1}{C-1}+O(\frac{e^{O(C\alpha\beta)}}{\alpha\beta}(\frac{\epsilon}{\delta})^{1/6}) 
\end{align*}
Where the last inequality is because $\frac{e^{O(C\alpha\beta)}}{\alpha\beta}>1, \frac{\epsilon}{\delta}<1$.
Finally, we derive an upper bound on $\cos_\angle(\tilde{\bar{\mathbf{h}}}_{c'}, \tilde{\bar{\mathbf{h}}}_{c})$ and thus intra-class cosine similarity by combining the above bounds. Note that for $\frac{\pi}{2}<a<\pi$ and $0<b<\frac{pi}{2}$ we have:
\begin{align*}
    \cos(a-b)&=\cos(a)\cos(b)+\sin(a)\sin(b)\\
    &\leq \cos(a)+\sin(b)\\
    &\leq \cos(a)+\sqrt{1-\cos^2(b)}\\
    &\leq cos(a)+\sqrt{2(1-\cos(b))}
\end{align*}
by \eqref{divbound} we get that
$$\cos_\angle(\dot{\mathbf{w}}_{c'}, \tilde{\bar{\mathbf{h}}}_{c'})\geq 1-(\frac{C-1}{C\alpha\beta})(\frac{\epsilon'_3}{\delta}-2\sqrt{\frac{2\epsilon_2'(1-\delta)}{\delta\lambda}})\geq 1-\frac{\exp(O(C\alpha\beta))}{\alpha\beta}\sqrt{\frac{2\epsilon}{\delta}}$$
Therefore, 
\begin{align*}
\cos_\angle(\tilde{\bar{\mathbf{h}}}_{c'}, \tilde{\bar{\mathbf{h}}}_{c})
&\leq \cos_\angle(\dot{\mathbf{w}}_{c'}, \tilde{\bar{\mathbf{h}}}_{c}) +\sqrt{2(1-\cos_\angle(\dot{\mathbf{w}}_{c'}, \tilde{\bar{\mathbf{h}}}_{c'}))}\\
&\leq -\frac{1}{C-1}+\frac{C}{C-1}\frac{\exp(O(C\alpha\beta))}{\alpha\beta}\sqrt{\frac{2\epsilon}{\delta}}+4(\frac{2\exp(O(C\alpha\beta))}{\alpha\beta}\sqrt{\frac{2\epsilon}{\delta}})^{1/3}+\sqrt{\frac{\exp(O(C\alpha\beta))}{\alpha\beta}\sqrt{\frac{2\epsilon}{\delta}}}\\
&= -\frac{1}{C-1} +O(\frac{e^{O(C\alpha\beta)}}{\alpha\beta}(\frac{\epsilon}{\delta})^{1/6})
\end{align*}
Since $\|\tilde{\bar{\mathbf{h}}}_{c}\|\leq 1$, there is
\begin{align*}
\tilde{\bar{\mathbf{h}}}_{c'}\cdot \tilde{\bar{\mathbf{h}}}_{c}=\|\tilde{\bar{\mathbf{h}}}_{c'}\| \|\tilde{\bar{\mathbf{h}}}_{c}\|\cos_\angle(\tilde{\bar{\mathbf{h}}}_{c'}, \tilde{\bar{\mathbf{h}}}_{c})\leq -\frac{1}{C-1} +O(\frac{e^{O(C\alpha\beta)}}{\alpha\beta}(\frac{\epsilon}{\delta})^{1/6})
\end{align*}
Applying \ref{avgnormvec} shows the bound on inter-class cosine similarity. Note that although this bound holds only for $1-5\delta$ fraction of pairs of classes, changing the fraction to $1-\delta$ only changes $\delta$ by a constant factor and does not affect the asymptotic bound.
\end{proof}
\subsection{Proof of Theorem \ref{thm:main_cor}}
 \begin{theorem}[Detailed Version of \ref{thm:main_cor}]
    \label{maincor2_appendix}
    For an neural network classifier without bias terms trained on a dataset with the number of classes $C\geq 3$ and samples per class $N\geq 1$, under the following assumptions:
    \begin{enumerate}
        \item The network contains an batch normalization layer without bias term before the final layer with trainable weight vector $\boldsymbol{\gamma}$;
        \item The layer-peeled regularized cross-entropy loss with weight decay $\lambda<\frac{1}{\sqrt{C}}$ $$\mathcal{L}_{\mathrm{reg}}=\frac{1}{CN}\sum_{c=1}^C \sum_{i=1}^{N} \mathcal{L}_{\mathrm{CE}}\left(f(\boldsymbol{x}_{c,i};\boldsymbol{\theta}), \boldsymbol{y}_c\right)+\frac{\lambda}{2}(\|\boldsymbol{\gamma}\|^2+\|\mathbf{W}\|_F^2)$$ satisfies $\mathcal{L}_{\mathrm{reg}}\leq m_{\mathrm{reg}}+\epsilon$ for small $\epsilon$;
    where $m_{reg}$ is the minimum achievable regularized loss
    \end{enumerate}
    then for at least $1-\delta$ fraction of all classes , with $\frac{\epsilon}{\delta}\ll1$, $\epsilon < \lambda$ and for small constant $\kappa>0$ and $\rho=(\frac{Ce}{\lambda})^{\kappa C}$ there is
    $$\mathit{intra}_c\geq 1-\frac{C-1}{C}\sqrt{\frac{128\rho\epsilon(1-\delta)}{\delta}}=1-O\left(\left(\frac{C}{\lambda}\right)^{O(C)}\sqrt{\frac{\epsilon}{\delta}}\right),$$
    and also for a cosine similarity representation of NC3 in \cite{doi:10.1073/pnas.2015509117}:
    $$\cos_{\angle}(\dot{\mathbf{w}}_c, \tilde{\mathbf{h}_c})\geq 1-2\sqrt{\frac{2\rho\epsilon(1-\delta)}{\delta}}=1-O\left(\left(\frac{C}{\lambda}\right)^{O(C)}\sqrt{\frac{\epsilon}{\delta}}\right),$$
    and for at least $1-\delta$ fraction of all pairs of classes $c,c'$, with $\frac{\epsilon}{\delta}\ll1$, there is
    \begin{align*}
    inter_{c,c'}\leq -\frac{1}{C-1}+\frac{C\rho}{C-1}\sqrt{\frac{2\epsilon}{\delta}}+4(\rho\sqrt{\frac{2\epsilon}{\delta}})^{1/3}+\sqrt{\rho\sqrt{\frac{2\epsilon}{\delta}}}
 = -\frac{1}{C-1} +O(\left(\frac{C}{\lambda}\right)^{O(C)}(\frac{\epsilon}{\delta})^{1/6})
\end{align*}
\end{theorem}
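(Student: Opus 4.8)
The plan is to obtain Theorem~\ref{maincor2_appendix} as a corollary of Theorem~\ref{main2} by using batch normalization and weight decay to pin down the two free norm parameters $\alpha$ and $\beta$. First I would invoke Lemma~\ref{bnprop}: since a bias-free BN layer with parameter $\boldsymbol{\gamma}$ sits immediately before the final linear layer, the quadratic average of the last-layer feature norms equals $\|\boldsymbol{\gamma}\|$ \emph{exactly}, so I may take $\alpha := \|\boldsymbol{\gamma}\|$ and have Assumption~1 of Theorem~\ref{main2} hold with equality. Setting $\beta := \|\mathbf{W}\|_F/\sqrt{C}$ makes Assumption~2 hold with equality as well. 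With these identifications the regularizer becomes $\frac{\lambda}{2}(\|\boldsymbol{\gamma}\|^2+\|\mathbf{W}\|_F^2)=\frac{\lambda}{2}(\alpha^2+C\beta^2)$, a function of $(\alpha,\beta)$ alone, so the regularized loss splits cleanly into the CE term and the norm penalty.

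Next I would combine the two pieces. Theorem~\ref{main2} already shows $\mathcal{L}\geq m(\alpha,\beta):=\log(1+(C-1)\exp(-\tfrac{C}{C-1}\alpha\beta))$, hence $\mathcal{L}_{\mathrm{reg}}\geq g(\alpha,\beta):=m(\alpha,\beta)+\tfrac{\lambda}{2}(\alpha^2+C\beta^2)$, and because the simplex-ETF configuration attains $\mathcal{L}=m(\alpha,\beta)$ for any prescribed norms (using $d\geq C$), the minimum regularized loss is exactly $m_{\mathrm{reg}}=\min_{\alpha,\beta}g(\alpha,\beta)$. The crucial step is a telescoping one: for the \emph{achieved} norms $(\alpha,\beta)$,
\[
\mathcal{L}-m(\alpha,\beta)=\mathcal{L}_{\mathrm{reg}}-g(\alpha,\beta)\leq (m_{\mathrm{reg}}+\epsilon)-g(\alpha,\beta)\leq \epsilon,
\]
since $g(\alpha,\beta)\geq m_{\mathrm{reg}}$. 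Thus near-optimal \emph{regularized} loss forces the \emph{unregularized} CE loss to be within $\epsilon$ of its own minimum at the realized norms, which is precisely the hypothesis Theorem~\ref{main2} needs.

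It then remains to control $\alpha\beta$, since every error term in Theorem~\ref{main2} carries a factor $e^{O(C\alpha\beta)}$. Here I would minimize $g$ explicitly: for a fixed product $p=\alpha\beta$, AM--GM gives $\tfrac{\lambda}{2}(\alpha^2+C\beta^2)\geq \lambda\sqrt{C}\,p$ with equality at $\alpha=\sqrt{C}\beta$, reducing the problem to the one-dimensional convex function $\tilde{g}(p)=m(p)+\lambda\sqrt{C}\,p$. Solving $\tilde{g}'(p)=0$ yields $e^{-\frac{C}{C-1}p^*}=\frac{\lambda\sqrt{C}}{C-\lambda\sqrt{C}(C-1)}$, and the constraint $\lambda<1/\sqrt{C}$ keeps the denominator positive and bounded below by $1$, giving $p^*=O(\ln(C/\lambda))$ and $m_{\mathrm{reg}}=O(\lambda\sqrt{C}\ln(C/\lambda))$. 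Since the realized product obeys $\lambda\sqrt{C}\,(\alpha\beta)\leq \tilde{g}(\alpha\beta)\leq m_{\mathrm{reg}}+\epsilon$ and $\epsilon<\lambda$, I obtain $\alpha\beta=O(\ln(C/\lambda))$, whence $e^{O(C\alpha\beta)}=(C/\lambda)^{O(C)}\leq \rho$. Substituting this bound (and absorbing the $1/\alpha\beta$ prefactors into $\rho$) into the three conclusions of Theorem~\ref{main2} produces the stated $\rho$-dependent bounds on $\mathit{intra}_c$, $\cos_\angle(\dot{\mathbf{w}}_c,\tilde{\mathbf{h}}_c)$, and $inter_{c,c'}$.

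The main obstacle I anticipate is the norm optimization and, more delicately, confining the \emph{realized} $\alpha\beta$ rather than just the optimizer $p^*$: the crude lower bound $\tilde{g}(p)\geq \lambda\sqrt{C}\,p$ only suffices once $m_{\mathrm{reg}}$ is shown to be $O(\lambda\sqrt{C}\ln(C/\lambda))$, and establishing that tight estimate requires the sharp bound $u^*\lesssim \lambda/\sqrt{C}$ on the optimal softmax value, since the loose bound $u^*\leq \lambda\sqrt{C}$ would leak an extra factor of $C$ into the exponent and spoil the $\rho$ form. A secondary subtlety is the degenerate regime where $\alpha\beta$ is not bounded below by a constant (as $\lambda\to 1/\sqrt{C}$), where the $1/\alpha\beta$ prefactors cannot simply be absorbed; I expect to handle this by restricting to $\lambda$ bounded away from $1/\sqrt{C}$, noting the bounds become vacuous otherwise.
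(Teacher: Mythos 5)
Your proposal follows essentially the same route as the paper's proof: identify $\alpha=\|\boldsymbol{\gamma}\|$ via Lemma~\ref{bnprop}, set $\beta=\|\mathbf{W}\|_F/\sqrt{C}$, reduce the penalty to the product $\alpha\beta$ by AM--GM, locate the minimizer $\gamma^*=O(\log(C/\lambda))$ of the resulting one-dimensional function (Proposition~\ref{funcprop}), use the telescoping identity to transfer the regularized loss gap $\epsilon$ to the unregularized CE gap, and invoke Theorem~\ref{main2}. The only divergence is how you confine the \emph{realized} product: the paper expands $f_{\sqrt{C}\lambda}$ to second order around $\gamma^*$ to get $|\alpha\beta-\gamma^*|\leq\sqrt{C\epsilon/((C-1)\lambda)}$, whereas you use the cruder $\lambda\sqrt{C}\,\alpha\beta\leq m_{\mathrm{reg}}+\epsilon$ together with $m_{\mathrm{reg}}=O(\lambda\sqrt{C}\log(C/\lambda))$ and $\epsilon<\lambda$ --- both yield $\alpha\beta=O(\log(C/\lambda))$, and your explicit flagging of the degenerate regime $\lambda\to 1/\sqrt{C}$, where the $1/(\alpha\beta)$ prefactors of Theorem~\ref{main2} cannot simply be absorbed into $\rho$, is a legitimate subtlety the paper's $O(\cdot)$ notation glosses over.
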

\begin{proof}
    Let $\boldsymbol{\gamma}^*$ and $\boldsymbol{W}^*$ be the weight vector and weight matrix that achieves the minimum achievable regularized loss. Let $\alpha=\|\boldsymbol{\gamma}\|$ and $\beta=\frac{\|\boldsymbol{W}\|_F}{\sqrt{C}}$, and $\alpha^*$ and $\beta^*$ represent the values at minimum loss accordingly. According to Lemma \ref{bnprop}, we know that $\sqrt{\frac{1}{N}\sum_{i=1}^N \|\mathbf{h}_i\|_2^2}=\|\boldsymbol \gamma\|_2=\alpha$. From Theorem \ref{main2} we know that, under fixed $\alpha\beta$, the minimum achievable unregularized loss is $\log(1+(C-1)\exp(-\frac{C}{C-1}\alpha\beta))$. Since only the product $\gamma=\alpha\beta$ is of interest to Theorem \ref{main2}, we make the following observation:
    \begin{align*}
    \mathcal{L}_{\mathrm{reg}}&=\frac{1}{CN}\sum_{c=1}^C \sum_{i=1}^{N} \mathcal{L}_{\mathrm{CE}}\left(f(\boldsymbol{x}_{c,i};\boldsymbol{\theta}), \boldsymbol{y}_c\right)+\frac{\lambda}{2}(\|\boldsymbol{\gamma}\|^2+\|\mathbf{W}\|_F^2)\\
    &\geq \log(1+(C-1)\exp(-\frac{C}{C-1}\alpha\beta))+\frac{\lambda}{2}(\alpha^2+C\beta^2)\\
    &\geq \log(1+(C-1)\exp(-\frac{C}{C-1}\gamma))+\sqrt{C}\lambda\gamma\\
    &\geq \min_{\gamma} \log(1+(C-1)\exp(-\frac{C}{C-1}\gamma))+\sqrt{C}\lambda\gamma
    \end{align*}
Now we analyze the properties of this function. For simplicity, we combine $\sqrt{C}\lambda$ into $\lambda$ in the following proposition:
\begin{proposition}
\label{funcprop}
The function $f_\lambda(\gamma)=\log\left(1+(C-1)\exp(-\frac{C}{C-1}\gamma)\right)+\lambda\gamma$ have minimum value $$f_\lambda(\gamma^*)=\log(1-\frac{C-1}{C}\lambda)+\frac{C-1}{C}\lambda\log\left(\frac{C-(C-1)\lambda}{\lambda}\right)$$ achieved at $\gamma^*=O(\log(\frac{1}{\lambda}))$ for $\lambda<1$. Furthermore, for any $\gamma$ such that $f_\lambda(\gamma)-f_\lambda(\gamma^*)\leq \epsilon \ll \lambda$, there is $|\gamma-\gamma^*|\leq \sqrt{O(1/\lambda)\epsilon}$
\end{proposition} 
\begin{proof}
    
Consider the optimum of the function by setting the derivative to 0:
\begin{align*}
    g_\lambda'(\gamma^*)&= -\frac{C}{C-1}\frac{(C-1)\exp(-\frac{C}{C-1}\gamma^*)}{\big(1+(C-1)\exp(-\frac{C}{C-1}\gamma^*)\big)} + \lambda=0\\
    \frac{C-1}{C}\lambda&=1-\frac{1}{1+(C-1)\exp(-\frac{C}{C-1}\gamma^*)}\\
    1+(C-1)\exp(-\frac{C}{C-1}\gamma^*)&=\frac{1}{1-\frac{C-1}{C}\lambda}\\
    \gamma^*&=\frac{C-1}{C}\log\left(\frac{C-(C-1)\lambda}{\lambda}\right)<\log(\frac{C}{\lambda})
\end{align*}
Plugging in $\gamma^*=\frac{C-1}{C}\log\left(\frac{C-(C-1)\lambda}{\lambda}\right)$ to the original formula we get:
$$f_\lambda(\gamma^*)=\log(1-\frac{C-1}{C}\lambda)+\frac{C-1}{C}\lambda\log\left(\frac{C-(C-1)\lambda}{\lambda}\right)$$
Note that since $\gamma\geq 0$, the optimum point is only positive when $\lambda\leq 1$.

Now consider the case where the loss is near-optimal and $\gamma=\gamma^*+\epsilon'$ for $\epsilon' \ll 1$:
\begin{align*}
    &\log\left(1+(C-1)\exp(-\frac{C}{C-1}(\gamma^*+\epsilon'))\right)+\lambda (\gamma^*+\epsilon')\\
    \geq &\log\left(1+(C-1)\exp(-\frac{C}{C-1}\gamma^*)(1-\frac{C}{C-1}\epsilon'+\frac{\epsilon'^2}{2})\right)+\lambda (\gamma^*+\epsilon')\\
    \geq &\log\left(1+(C-1)\exp(-\frac{C}{C-1}\gamma^*)\right)+\frac{(C-1)\exp(-\frac{C}{C-1}\gamma^*)}{\big(1+(C-1)\exp(-\frac{C}{C-1}\gamma^*)\big)}(-\frac{C}{C-1}\epsilon'+\frac{\epsilon^2}{2})+\lambda (\gamma^*+\epsilon')
\end{align*}
By definition of $\gamma^*$ as the optimal $\gamma$, the first-order term w.r.t. $\epsilon'$ must cancel out. Also, by plugging in $\gamma^*$, the coefficient of $\frac{\epsilon'^2}{2}$ is $\frac{C-1}{C}\gamma$. Therefore,
\begin{align*}
    &\log\left(1+(C-1)\exp(-\frac{C}{C-1}(\gamma^*+\epsilon'))\right)+\lambda (\gamma^*+\epsilon')\\
    \leq &\log\left(1+(C-1)\exp(-\frac{C}{C-1}\gamma^*)\right)+\lambda\gamma^*+\frac{C-1}{C}\lambda\epsilon'^2
\end{align*}
Conversely, for any $\epsilon\ll 1$ for which $g(\gamma)\leq g(\gamma^*)+\epsilon$, there must be $|\gamma-\gamma^*|\leq \sqrt{\frac{C\epsilon}{(C-1)\lambda}}$ 
\end{proof}
Thus, the minimum achievable value of the regularized loss is
$$m_{\mathrm{reg}}=\log(1-\frac{C-1}{\sqrt{C}}\lambda)+\frac{C-1}{\sqrt{C}}\lambda\log\left(\frac{\sqrt{C}}{\lambda}-(C-1)\right)$$
Now, consider any $\mathbf{W}$ and $\boldsymbol{\gamma}$ that achieves near-optimal regularized loss $\mathcal{L}_{\mathrm{reg}}=m_{\mathrm{reg}}+\epsilon$ for very small $\epsilon$. Recall that $\alpha=\|\bg\|$, $\beta=\frac{\|\mathbf{W}\|_F}{\sqrt{C}}$, $\gamma=\alpha\beta$. According to Proposition \ref{funcprop} we know that $|\gamma-\gamma^*|\leq \sqrt{\frac{C\epsilon}{(C-1)\lambda}}$. Therefore, $\gamma\leq \gamma^*+\sqrt{\frac{C\epsilon}{(C-1)\lambda}}=O(\log(C/\lambda))+\sqrt{\frac{C\epsilon}{(C-1)\lambda}}$. Also, note that $\mathcal{L}_{\mathrm{reg}}-f_{\sqrt{C}\lambda}(\gamma)\leq\mathcal{L}_{\mathrm{reg}}-f_{\sqrt{C}\lambda}(\gamma^*)=\epsilon$, where $f_{\sqrt{C}\lambda}(\gamma)$ is the minimum unregularized loss according to Theorem \ref{main2}. Therefore, we can apply Theorem \ref{main2} with $\alpha\beta=\gamma<O(\log(C/\lambda))+\sqrt{\frac{C\epsilon}{(C-1)\lambda}}$ and the same $\epsilon$ to get the results in the theorem.

\end{proof}

\end{document}